\definecolor{light-gray}{gray}{0.85}
\newtheorem{theorem}{Theorem}[section]
\newtheorem{lemma}[theorem]{Lemma}
\newtheorem{remark}[theorem]{Remark}
\theoremstyle{definition}
\newtheorem{definition}[theorem]{Definition}
\newcommand{\cA}{\mathcal{A}}
\newcommand{\cD}{\mathcal{D}}
\newcommand{\cE}{\mathcal{E}}
\newcommand{\cL}{\mathcal{L}}
\newcommand{\cM}{\mathcal{M}}
\newcommand{\cN}{\mathcal{N}}
\newcommand{\cP}{\mathcal{P}}
\newcommand{\cR}{\mathcal{R}}
\newcommand{\cS}{\mathcal{S}}
\newcommand{\cT}{\mathcal{T}}
\newcommand{\cU}{\mathcal{U}}
\newcommand{\cW}{\mathcal{W}}
\newcommand{\R}{\mathbb{R}}
\newcommand{\E}{\mathbb{E}}
\newcommand{\argmax}{\mathop{\mathrm{argmax}}}
\newcommand{\beq}{\begin{equation}}
	\newcommand{\eeq}{\end{equation}}
\newcommand{\beqn}{\begin{equation*}}
	\newcommand{\eeqn}{\end{equation*}}
\newcommand{\beqa}{\begin{eqnarray}}
	\newcommand{\eeqa}{\end{eqnarray}}
\newcommand{\beqan}{\begin{eqnarray*}}
	\newcommand{\eeqan}{\end{eqnarray*}}
\renewcommand{\epsilon}{\varepsilon}
\renewcommand{\leq}{\leqslant}
\renewcommand{\geq}{\geqslant}
\renewcommand{\hat}{\widehat}
\newcommand{\deemph}[1]{{\color{black!40}#1}}
\newcommand{\High}[1]{{\color{black}{#1}}}
\begin{document}

\title{Distributed Linear Bandits with Differential Privacy}

\author{%
Fengjiao Li\footnotemark[1]\footnote{{Fengjiao Li (\texttt{fengjiaoli@sxu.edu}), Bo Ji (\texttt{boji@vt.edu}), Department of Computer Science, Virginia Tech.}} \quad Xingyu Zhou\footnote{Xingyu Zhou (\texttt{xingyu.zhou@wayne.edu}), Department of Electrical and Computer Engineering, Wayne State University. This work is supported in part by the NSF  grants under CNS-2312833, CNS-2112694, CNS-2153220, and CNS-2312835, Fundamental Research Program of Shanxi Province under 202303021222030, the Commonwealth Cyber Initiative (CCI), and Nokia Corporation. A preliminary version of this work was presented at IEEE/IFIP WiOpt 2022 \cite{li2022differentially}, and a journal version was accepted to IEEE Transactions on Network Science and Engineering (TNSE)\cite{li2024distributed}.} \quad Bo Ji\footnotemark[1]  
}

\date{}

\maketitle

\begin{abstract}
In this paper, we study the problem of global reward maximization with only partial distributed feedback. This problem is motivated by several real-world applications (e.g., cellular network configuration, dynamic pricing, and policy selection) where an action taken by a central entity influences a large population that contributes to the global reward. However, collecting such reward feedback from the entire population not only incurs a prohibitively high cost, but often leads to privacy concerns. To tackle this problem, we consider differentially private distributed linear bandits, where only a subset of users from the population are selected (called clients) to participate in the learning process and the central server learns the global model from such partial feedback by iteratively aggregating these clients' local feedback in a differentially private fashion. We then propose a unified algorithmic learning framework, called differentially private distributed phased elimination (DP-DPE), which can be naturally integrated with popular differential privacy (DP) models (including central DP, local DP, and shuffle DP). 
Furthermore, we prove that DP-DPE achieves both sublinear regret and sublinear communication cost.
Interestingly, DP-DPE also achieves privacy protection ``for free'' in the sense that the additional cost due to privacy guarantees is a lower-order additive term. In addition, as a by-product of our techniques, the same results of ``free" privacy can also be achieved for the standard differentially private linear bandits. Finally, we conduct simulations to corroborate our theoretical results and demonstrate the effectiveness of DP-DPE.
\end{abstract}

\section{Introduction}
The bandit learning models have been widely adopted for many sequential decision-making problems, such as clinical trials, recommender systems, and configuration selection. Each action (called arm), if selected in a round, generates a (noisy) reward. 	By observing such reward feedback, the learning agent gradually learns the unknown parameters of the model (e.g., mean rewards) and decides the action in the next round. The objective here is to maximize the cumulative reward over a finite time horizon, balancing the tradeoff between \emph{exploitation} and \emph{exploration}. While the stochastic multi-armed bandits (MAB) model is useful for these applications~\cite{lai1985asymptotically}, one key limitation is that actions are assumed to be independent, which, however, is usually not the case in practice.	Therefore, the linear bandit model that captures the correlation among actions has been extensively studied~\cite{lattimore2020bandit,abbasi2011improved,li2010contextual}. 
	
\begin{figure}[!th]
\centering
\includegraphics[width=0.5\columnwidth]{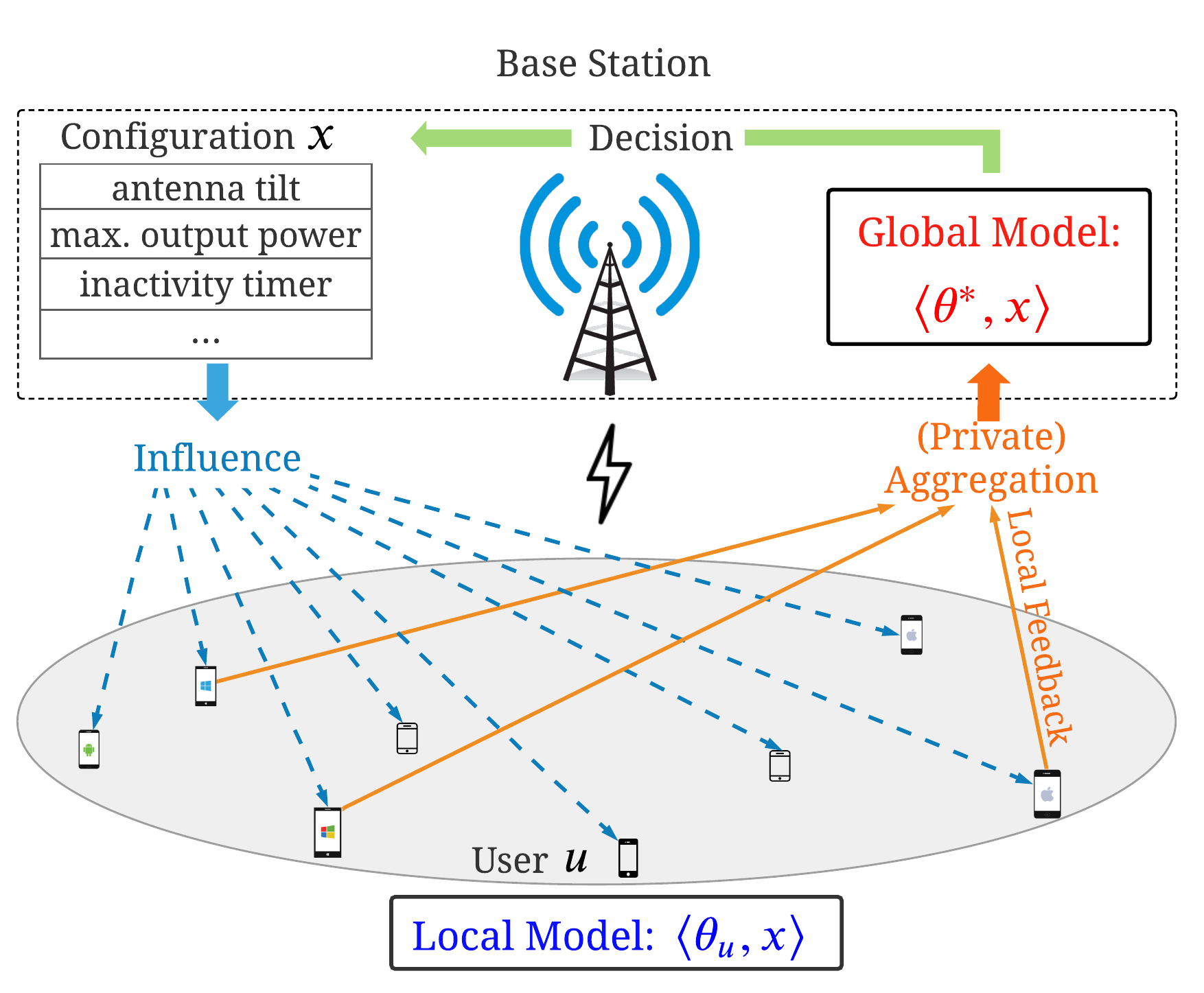}
\caption{Cellular network configuration: a motivating application of global reward maximization with partial feedback in a linear bandit setting.		} 
\label{fig:network_config}
\end{figure}
	
In this paper, we introduce a new linear bandit setting where the reward of an action could be from a large population. Take the cellular network configuration as an example (see Fig.~\ref{fig:network_config}). The configuration (antenna tilt, maximum output power, inactivity timer, etc.) of a base station (BS),  
with feature representation\footnote{Similar to many linearly parameterized bandits (e.g., \cite{li2010contextual}), 
we may represent each configuration by a $d$-dimensional feature vector through some feature mapping. 
} $x\in \R^d$,
influences all the users under the coverage of this BS~\cite{mahimkar2021auric}.
After a configuration is applied, the BS receives a reward in terms of the network-level performance, which accounts for the performance of all users within the coverage (e.g., average user throughput). 
Specifically, let the mean global reward of configuration $x$ be $f(x)=\langle \theta^*, x\rangle$, where $\theta^* \in \R^d$ represents the unknown global parameter.
While some configuration may work best for a specific user, only one configuration can be applied at the BS at a time, which, however, simultaneously influences all the users within the coverage. Therefore, the goal here is to find the best configuration that maximizes the global reward (i.e., the network-level performance).

\begin{table*}[t]
\centering
\caption{Summary of main results}
\label{tab:performance_summary}
		\scalebox{0.88}{
	\begin{tabular}{lccc}
	\toprule
	Algorithm\footnotemark & Regret\footnotemark & Communication cost\footnotemark & Privacy\\
	\midrule
	DPE& $O\left(  T^{1-\alpha/2}\sqrt{\log (kT)}\right)$  &  $O(dT^{\alpha})$  & None  \\
	CDP-DPE & $ O\left( T^{1-\alpha/2}\sqrt{\log (kT)} + d^{3/2}T^{1-\alpha}\sqrt{\ln(1/\delta)\log(kT)}/\epsilon\right)$  & $O(dT^{\alpha}) $ & $(\epsilon, \delta)$-DP\\
	LDP-DPE& $O\left(T^{1-\alpha/2}\sqrt{\log(kT)} + d^{3/2}T^{1-\alpha/2}\sqrt{\ln(1/\delta)\log(kT)}/\epsilon\right)$& $O(dT^{\alpha})$ & $(\epsilon, \delta)$-LDP\\
	SDP-DPE & $ O\left( T^{1-\alpha/2}\sqrt{\log (kT)}+d^{3/2}T^{1-\alpha}\ln(d/\delta)\sqrt{\log(kT)}/\epsilon\right)$ & $O(dT^{3\alpha/2})$ (bits)&$(\epsilon, \delta)$-SDP\\
	\bottomrule
	\multicolumn{4}{l}{\footnotesize $^1$DPE is the non-private DP-DPE algorithm; CDP-DPE, LDP-DPE, and SDP-DPE represent the DP-DPE algorithm in the central, local, and} \\ 
	\multicolumn{4}{l}{\footnotesize shuffle models, respectively, which guarantee $(\epsilon, \delta)$-DP, $(\epsilon, \delta)$-LDP, and $(\epsilon, \delta)$-SDP, respectively.}\\
	\multicolumn{4}{l}{\footnotesize $^2$In the regret upper bounds, we ignore lower-order terms for simplicity. $T$ is the time horizon, $k$ is the number of actions, $d$ is the dimension}\\
	\multicolumn{4}{l}{\footnotesize  of the action space, and $\alpha$ is a design parameter that can be used to tune the tradeoff between the regret and the communication cost. }\\
	\multicolumn{4}{l}{\footnotesize $^3$While the communication cost of CDP-DPE and LDP-DPE is measured in the number of real numbers transmitted between the clients and }\\
	\multicolumn{4}{l}{\footnotesize the server, SDP-DPE directly uses bits for reporting feedback. A detailed discussion is provided in Section~\ref{sec:DP-instantiation}.}
	\end{tabular}
		}
\end{table*}
At first glance, it seems that one can address the above problem by applying existing linear bandit algorithms (e.g., LinUCB \cite{li2010contextual}) to learn the global parameter $\theta^*$. However, this would require collecting reward feedback from the entire population, which could incur a prohibitively high cost or could even be impossible to implement in practice when the population is large. To learn the global parameter, one natural way is to sample a subset of users from the population and aggregate this distributed partial feedback. This leads to a new problem we consider in this paper: \emph{global reward maximization with partial feedback in a distributed linear bandit setting,} which can be also applied to several other practical applications, including dynamic pricing and public policy selection~\cite{bouneffouf2019survey,liprivate}.  As in many distributed supervised learning problems~\cite{bassily2019private,geyer2017differentially,girgis2021shuffled}, privacy protection is also of significant importance in our setting as clients' local feedback may contain their sensitive information. In summary, we are interested in the following fundamental question:
\emph{How to privately achieve global reward maximization with only partial distributed feedback?}

To that end, we introduce a new model called \emph{differentially private distributed linear bandit (DP-DLB)}. 
In DP-DLB, there is a global linear bandit model $f(x) = \langle \theta^*, x\rangle$ with an unknown parameter $\theta^* \in \R^d$ at the central server (e.g., the BS); each user $u$ of a large population has a local linear bandit model $f_u(x) = \langle \theta_u, x\rangle$, which represents the mean local reward for user $u$. Here, we assume that each user $u$ has a local parameter $\theta_u \in \R^d$, motivated by the fact that the mean local reward (e.g., the expected throughput of a user under a certain network configuration) varies across the users. In addition, each local parameter $\theta_u$ is unknown and is assumed to be a realization of a random vector with the mean being the global model parameter $\theta^*$.
The server makes decisions based on the estimated global model, which can be learned through sampling a subset of users (referred to as clients) and iteratively aggregating these distributed partial feedback. 
While sampling more clients could improve the learning accuracy and thus lead to a better performance, it also incurs a higher communication cost. Therefore, it is important to address this tradeoff in the design of communication protocols.
Furthermore, to protect users' privacy, we resort to \emph{differential privacy (DP)} to guarantee that clients' sensitive information will not be inferred by an adversary. 
Therefore, the goal is to maximize the cumulative global reward (or equivalently minimize the regret due to not choosing the optimal action in hindsight) in a communication-efficient manner while providing privacy guarantees for the participating clients. 
Our main contributions are summarized as follows. 

\begin{list}{\labelitemi}{\leftmargin=1em \itemindent=0em \itemsep=.2em}

\item We present a new distributed linear bandit setting where only partial feedback is available, leading to a novel problem of global reward maximization with distributed partial feedback. In addition to the traditional exploitation and exploration tradeoff, 
learning with distributed feedback introduces two practical challenges: communication efficiency and privacy concerns. This adds an extra layer of difficulty in the design of learning algorithms. 
		
\item To address these challenges, we introduce a DP-DLB model and develop a carefully-crafted algorithmic learning framework, called differentially private distributed phased elimination (DP-DPE), which allows the server and the clients to work in concert and can be naturally integrated with several state-of-the-art DP trust models (including central model, local model, and shuffle model). This unified framework enables us to systemically study the key regret-communication-privacy tradeoff. 
		
\item We then establish the regret-communication-privacy tradeoff of DP-DPE in various settings including the non-private case as well as the central, local, and shuffle DP models. Our main results are summarized in Table~\ref{tab:performance_summary}.
From Table~\ref{tab:performance_summary}, we observe that the additional regret incurred by privacy is only a lower-order additive term, which is dominated by the regret from learning (i.e.,  $\tilde{O}({T^{1-\alpha}}/{\epsilon})$ vs.\footnote{Here the $\tilde{O}(\cdot)$ notation hides the dependence on $polylog(T)$, the dimension $d$, and privacy parameter  $\delta$.}  $\tilde{O}(T^{1-\alpha/2})$). In this sense, we say that DP-DPE might achieve privacy ``for free'' following \cite{agarwal2017price}. 
Moreover, this is the first work that considers the shuffle model in distributed linear bandits to attain a better regret-privacy tradeoff, i.e., guaranteeing a similar privacy protection as the strong local model while achieving the same regret as the central model. We further perform simulations on synthetic data to corroborate our theoretical results.
\item {Finally, we provide an interesting discussion about achieving privacy ``for free".  We first highlight an interesting connection between our introduced DP-DLB formulation and the differentially private stochastic convex optimization (DP-SCO) problem in terms of achieving privacy ``for-free''. This bridge between our online bandit learning and the standard supervised learning might be of independent interest. Furthermore,  differential privacy may also be ensured ``for free'' for standard linear bandits as well with minor modifications of our developed techniques. }
\end{list}

\section{System Model and Problem Formulation}\label{sec:problem_formulation}
We begin with some notations: $[N]\triangleq \{1,\dots, N\}$ for any positive integer $N$; $|S|$ denotes the cardinality of set $S$; $\Vert x\Vert_2$ denotes the $\ell_2$-norm of vector $x$; the inner product is denoted by $\langle \cdot, \cdot \rangle$. For a positive definite matrix $A\in\R^{d\times d}$, the weighted $\ell_2$-norm of vector $x\in\R^d$ is defined as $\Vert x\Vert_{A} \triangleq \sqrt{x^{\top}Ax}$. 
\subsection{Global Reward Maximization with Partial Feedback}
We consider the global reward maximization problem over a large population containing an infinite number of users, which is a sequential decision making problem. In each round $t$, the learning agent (e.g., the BS or the policy maker) selects an action $x_t$ from a finite decision set $\cD \subseteq \{x\in \R^d: \Vert x\Vert_2^2\leq 1\}$ with $|\cD|=k$. This action leads to a global reward with mean $\langle \theta^*, x_t \rangle$, where $\theta^*\in \R^d$ with $\Vert\theta^*\Vert_2\leq 1$ is unknown to the agent. This global reward captures the overall effectiveness of action $x_t$ over a large population $\cU$. 
The local reward of action $x_t$ at user $u$ has a mean $\langle \theta_u, x_t \rangle$, where $\theta_u\in \R^d$ is the local parameter, which is assumed to be a realization of a random vector with mean $\theta^*$ and is also unknown. 
Let $x^* \triangleq \argmax_{x\in\cD} \langle \theta^*, x\rangle$	be the unique global optimal action. Then, the objective of the agent is to maximize the cumulative global reward, or equivalently, to minimize the regret defined as follows:
\begin{equation}
	\label{eq:regret}
	R(T) \triangleq T \langle  \theta^*, x^* \rangle - \sum_{t=1}^T \langle  \theta^*,x_{t} \rangle.
\end{equation}

At first glance, standard linear bandit algorithms (e.g., LinUCB in~\cite{li2010contextual}) can be applied to addressing the above problem. However, the exact reward here is a global quantity, which is the average over the entire population. The learning agent may not be able to observe this exact reward, since collecting such global information from the entire population incurs a	prohibitively high cost, is often impossible to implement in practice, and could lead to privacy concerns.

\subsection{Differentially Private Distributed Linear Bandits} \label{sec:DLB}
To address the above problem, we consider a \emph{differentially private distributed linear bandit (DP-DLB)} formulation, where there are two important entities: a central server (which wants to learn the global model) and participating clients (i.e., a subset of users from the population who are willing to share their feedback). In the following, we discuss important  aspects of the DP-DLB formulation.
	
\noindent \textbf{Server.} The server aims to learn the global linear bandit model, i.e., unknown parameter $\theta^*$. In each round~$t$, it selects an action $x_t$ with the objective of maximizing the cumulative global reward $\sum_{t=1}^T\langle \theta^*, x_t \rangle$.  Without observing the exact reward of action $x_t$, the server collects only partial feedback from a subset of users sampled from the population, called \emph{clients}, and then aggregates this partial feedback to update the estimate of the global parameter $\theta^*$. Based on the updated model, the server chooses an action in the next round. 
	
\noindent \textbf{Clients.} We assume that each participating client is randomly sampled from the population and is independent from each other and also from other randomness. Specifically, we assume that local parameter $\theta_u$ at client $u$ satisfies $\theta_u = \theta^* + \xi_u $, where $\xi_u\in \R^d$ is a zero-mean $\sigma$-sub-Gaussian random vector\footnote{A random vector $\xi \in \R^d$ is said to be $\sigma$-sub-Gaussian if $\E[\xi]=0$ and $v^{\top} \xi$ is $\sigma$-sub-Gaussian for any unit vector $v\in \R^d$ and $\Vert v\Vert_2=1$~\cite{buhlmann2011statistics}.} and is independently and identically distributed (\emph{i.i.d.}) across all clients.
Let $U_t$ be the set of clients in round $t$. After action $x_t$ is chosen by the server in round $t$, each client $u \in U_t$ observes a noisy local reward: $y_{u,t}= \langle \theta_u, x_{t}\rangle + \eta_{u,t}$, where $\eta_{u,t}$ is a conditionally $1$-sub-Gaussian\footnote{Consider noise sequence $\{\eta_t\}_{t=1}^{\infty}$. As in the general linear bandit model \cite{lattimore2020bandit},  $\eta_{t}$ is assumed to be conditionally $1$-sub-Gaussian, meaning $\E[e^{\lambda \eta_{t}}|x_{1:t}, \eta_{1:t}] \leq \exp(\lambda^2/2)$ for all $\lambda\in \R$, where $a_{i:j}$ denotes the subsequence $a_i, \dots, a_j$.} noise and \emph{i.i.d.} across the clients and over time. We also assume that the local rewards are bounded, i.e., $| y_{u,t} |\leq B$, for all $u\in \cU$ and $t\in [T]$.

\noindent \textbf{Communication.} 
The communication happens when the clients report their feedback to the server. At the beginning of each communication step, each participating client reports feedback to the server based on the local reward observations during a certain number of rounds. In particular, the time duration between reporting feedback is called a phase. By aggregating such feedback from the clients, the server estimates the global parameter $\theta^*$ and adjusts its decisions in the following rounds accordingly. We assume that the clients do not quit before a phase ends. By slightly abusing the notation, we use $U_l$ to denote the set of clients in the $l$-th phase.
	
The communication cost is a critical factor in DP-DLB.	As in \cite{wang2019distributed}, we define the communication cost as the total number of real numbers (or bits, depending on the adopted DP model) communicated between the server and the clients. Let $L$ be the number of phases in $T$ rounds, and let $N_l$ be the number of real numbers (or bits) communicated in the $l$-th phase. 	Then, the total communication cost, denoted by $C(T)$, is  
\begin{equation}
C(T) \triangleq \sum_{l=1}^L |U_l|N_l. \label{eq:comm_cost}
\end{equation}

\noindent \textbf{Data privacy.} In practice, even if users are willing to share their feedback, they typically require privacy protection as a premise.
\High{\emph{Differential privacy (DP)}~\cite{dwork2006calibrating} is a mathematical framework for ensuring the privacy of individuals in datasets. 
Specifically, by observing the calculation/statistics/model update from a set of individual data, an adversary cannot infer too much information about any specific individual. In this sense, DP can protect any existing or future attacks in that any adversary tries to infer any individual's information would fail no matter how much computation power they have or how much side information they have (i.e., even though the adversary has access to all the others' information except the targeted one).}
To that end, we resort to 
DP to formally address the privacy concerns in the learning process. 
More importantly, instead of only considering the standard central model where the central server is responsible for protecting privacy, we will also incorporate other popular DP models, including the stronger local model (where each client directly protects her data)~\cite{kasiviswanathan2011can} and the recently proposed shuffle model (where a trusted shuffler between clients and server is adopted to amplify privacy)~\cite{cheu2019distributed}, in a unified algorithmic learning framework.

\section{Algorithm Design}\label{sec:alg_design}
In this section, we first present the key challenges associated with the introduced DP-DLB model and then explain how the developed DP-DPE framework addresses these challenges.
	
\subsection{Key Challenges} \label{sec:challenges}
To solve the problem of global reward maximization with partial distributed feedback using the DP-DLB formulation, we face four key challenges, discussed in detail below.
	
As in the standard stochastic bandits problem, there is an uncertainty due to noisy rewards of each chosen action, which is called the \emph{action-related uncertainty}. In addition to this, we face another type of uncertainty related to the sampled clients in DP-DLB, called the \emph{client-related uncertainty}. The client-related uncertainty lies in estimating the global model at the server based on randomly sampled clients with \emph{biased} local models. Note that the global model may not be accurately estimated even if exact rewards of the sampled clients are known when the number of clients is insufficient.	Therefore, the first challenge lies in \emph{simultaneously addressing both types of uncertainty in a sample-efficient way} (Challenge \textcircled{a}).

To handle the newly introduced client-related uncertainty, we must sample a sufficiently large number of clients so that the global parameter can be accurately estimated using the partial distributed feedback. However, too many clients result in a large communication cost (see Eq.~(\ref{eq:comm_cost})). 	Therefore, the second challenge is to \emph{decide the number of sampled clients to balance the regret (due to the client-related uncertainty) and the communication cost} (Challenge \textcircled{b}).
	
Finally, to ensure privacy guarantees for the clients, one needs to add additional perturbations (or noises) to the local feedback. \emph{Such randomness introduces another type of uncertainty to the learning process} (Challenge \textcircled{c}), and \emph{it is unclear how to integrate different trust DP models into a unified algorithmic learning framework} (Challenge \textcircled{d}). These add an extra layer of difficulty to the design of learning algorithms.

\smallskip

\noindent \textbf{Main ideas.} 
In the following, we present our main ideas for addressing the above challenges. We propose a phased elimination algorithm \High{as in \cite{lattimore2020learning}} that gradually eliminates suboptimal actions by periodically aggregating the local feedback from the sampled clients in a privacy-preserving manner. 
To address the multiple types of uncertainty when estimating the global reward (\textcircled{a} and \textcircled{c}), we carefully construct a confidence width to incorporate all three types of uncertainty.
To achieve a sublinear regret while saving communication cost (\textcircled{b}), we increase both the phase length and the number of clients exponentially. To ensure privacy guarantees (\textcircled{d}), we introduce a \textsc{Privatizer} that can be easily tailored under different DP models. The \textsc{Privatizer} is a process consisting of tasks to be collaboratively completed by the clients, the server, and/or even a trusted third party. To keep it general, we use $\cP = (\cR, \cS, \cA)$ to denote a \textsc{Privatizer}, where $\cR$ is the procedure at each client (usually a local randomizer), $\cS$ is a trusted third party that helps privatize data (e.g., a shuffler that permutes received messages), and $\cA$ is an analyzer operated at the central server. Next, we will show how to integrate these main ideas into a unified algorithmic learning framework.
	
\subsection{Differentially Private Distributed Phased Elimination (DP-DPE)}  
With the main ideas presented above, we now propose a unified algorithmic learning framework, 	called \emph{differentially private distributed phased elimination (DP-DPE)}, which is presented in Algorithm~\ref{alg:dpe}.
The DP-DPE  runs in phases and operates with the coordination of the central server and the participating clients in a synchronized manner. At a high level, each phase consists of the following three steps: 
\begin{list}{\labelitemi}{\leftmargin=1em \itemindent=0em \itemsep=.2em}
\item \textbf{Action selection (Lines~\ref{alg_action_selection}-\ref{alg_action_slc_end}):} computing a near-$G$-optimal design (i.e., a distribution) over a set of possibly optimal actions and playing these actions; 
\item \textbf{Clients sampling and private feedback aggregation (Lines~\ref{alg_client_sample}-\ref{alg_output_privatizer}):} sampling participating clients and aggregating their local feedback in a privacy-preserving fashion; 
\item \textbf{Parameter estimation and action elimination {(Lines~\ref{alg_lse}-\ref{alg_update})}:} using (privately) aggregated data to estimate $\theta^*$ and eliminating actions that are likely to be suboptimal.
\end{list}

\begin{algorithm}[!t]
\caption{Differentially Private Distributed Phased Elimination (DP-DPE)}
\label{alg:dpe}
\begin{algorithmic}[1]
	\STATE \textbf{Input:}  $\cD\subseteq \R^d$, $\alpha\in (0,1)$, $\beta\in (0,1)$, and $\sigma_n$ 
	\STATE \textbf{Initialization:} $l=1$, $t_1=1$, $\cD_1=\cD$, and  $h_1=2$ 
	\WHILE{$t_l\leq T$} 
	\STATE Find a distribution $\pi_l(\cdot)$ over $ \cD_l$ such that  $ g(\pi_l) \triangleq \max_{x\in \cD_l} \Vert x \Vert_{V(\pi_l)^{-1}}^2 \leq 2d$ and $|\text{supp}(\pi_l)|\leq 4d\log\log d +16$, where $V(\pi_l) \triangleq \sum_{x\in\cD_l} \pi_l(x)xx^{\top}$			\label{alg_action_selection}
	\STATE Let $T_l(x)=\lceil h_l \pi_l(x)\rceil $ for each $x \in \text{supp}(\pi_l)$ and $T_l=\sum_{x\in \text{supp}(\pi_l)} T_l(x)$
	\STATE Play each action $x\in \text{supp}(\pi_l)$ exactly $T_l(x)$ times if not reaching $T$ 
	\label{alg_action_slc_end}
	\STATE Randomly select $\lceil 2^{\alpha l}\rceil$ participating clients $U_l$ \label{alg_client_sample}
	\item[] \deemph{\# Operations at each client }
	\FOR{each client $u  \in U_l$ }
	\FOR{each action $x\in \text{supp}(\pi_l)$} 
	\STATE Compute average local reward over $T_l(x)$ rounds: ${y}_l^u(x)=\frac{1}{T_l(x)}\sum_{t\in \cT_l(x)}(\langle\theta_u, x\rangle+\eta_{u,t})$ \label{alg_collect_data}
	\ENDFOR
	\STATE Let $\vec{y}_{l}^u = (y_l^u(x))_{x\in \text{supp}(\pi_l)}$
	\item[] \deemph{\# Apply the \textsc{Privatizer} $\cP = (\cR, \cS, \cA)$}
	\item[] \deemph{\# The local randomizer $\cR$ at each client:}
	\STATE Run the local randomizer $\mathcal{R}$ and send the output $\mathcal{R}(\Vec{y}_l^u)$ to $\cS$ \label{alg_randomizer} 			\ENDFOR
	\item[] \deemph{\# Computation $\cS$ at a trusted third party:}
	\STATE Run the computation function $\cS$ and send the output $\cS(\{\mathcal{R}(\Vec{y}_l^u)\}_{u\in U_l})$ to the analyzer $\cA$ \label{alg_shuffler}
	\item[] \deemph{\# The analyzer $\cA$ at the server:}
	\STATE Generate the privately aggregated statistics: $\Tilde{y}_l = \cA(\cS(\{\cR(\Vec{y}_l^u)\}_{u\in U_l}))$ 
	\label{alg_output_privatizer}
    \STATE Compute the following quantities: \label{alg_lse}
	$$\begin{cases}
		V_l = \sum_{x\in \text{supp}(\pi_l)} T_l(x)xx^\top \\
		G_l = \sum_{x\in \text{supp}(\pi_l)}   T_l(x) x \tilde{y}_l(x)\\	
		\Tilde{\theta}_l = V_l^{-1}G_l
	\end{cases}$$ 
	\STATE Find low-rewarding actions with confidence width $W_l$: \label{alg_line_elimination}%
	$$E_l = \left\{x\in \cD_l: \max_{b\in \cD_l} \langle \Tilde{\theta}_l, b-x\rangle > 2W_l\right\}$$ \label{alg_elimination}
	\STATE Update: $\cD_{l+1} = \cD_{l}\backslash E_l$, $h_{l+1} = 2h_l$, 
	$t_{l+1} = t_l+T_l$, and $l = l+1$ \label{alg_update}
	\ENDWHILE
	\end{algorithmic}
\end{algorithm}
	
In the following, we describe the detailed operations of DP-DPE. We begin by giving some necessary notations. Consider the $l$-th phase. Let $t_l$ and $T_l$ be the index of the starting round and the length of the $l$-th phase, respectively. Then, let $\cT_l\triangleq \{t\in [T]: t_l\leq t<t_l+T_l\}$ be the round indices in the $l$-th phase, let $\cT_l(x)\triangleq \{t\in \cT_l: x_t = x\}$ be the time indices in the $l$-th phase when action $x$ is selected, and let $\cD_l\subseteq \cD$ be the set of active actions in the $l$-th phase. 

\smallskip
\noindent \textbf{Action selection (Lines~\ref{alg_action_selection}-\ref{alg_action_slc_end}):} In the $l$-th phase, the action set $\cD_l$ consists of active actions that are possibly optimal. We compute a distribution $\pi_l(\cdot)$ over $\cD_l$ and choose actions according to $\pi_l(\cdot)$.	We briefly explain the intuition below. 	Let $V(\pi) \triangleq \sum_{x\in\cD} \pi(x)xx^{\top}$ and $g(\pi) \triangleq \max_{x\in \cD} \Vert x\Vert_{V(\pi)^{-1}}^2$.	According to the analysis in \cite[Chapter 21]{lattimore2020bandit}, if action $x\in \cD$ is played $\lceil h\pi(x)\rceil$ times (where $h$ is a positive constant), the estimation error associated with the action-related uncertainty for action $x$ is at most $\sqrt{2g(\pi)\log(1/\beta)/h}$ with probability $1-\beta$ for any $\beta\in (0,1)$. 
That is, for a fixed number of rounds, a distribution $\pi(\cdot)$ with a smaller value of $g(\pi)$ helps achieve a better estimation. Note that minimizing $g(\cdot)$ is a well-known \emph{$G$-optimal design} problem~\cite{pukelsheim2006optimal}. By the Kiefer-Wolfowitz Theorem 	\cite{kiefer1960equivalence}, one can find a distribution $\pi^*$ minimizing $g(\cdot)$ with $g(\pi^*)=d$, and the support set\footnote{The support set of a distribution $\pi$ over set $\cD$, denoted by $\text{supp}_{\cD}(\pi)$, is the subset of elements with a nonzero $\pi(\cdot)$, i.e., $\text{supp}_{\cD}(\pi) \triangleq \{x\in \cD: \pi(x) \neq 0\}$. We drop the subscript $\cD$ in $\text{supp}_{\cD}(\pi)$ for notational simplicity.\label{ft:support_set}} of $\pi^*$, denoted by $\text{supp}(\pi^*)$, has a size no greater than $d(d+1)/2$. In our problem, however, it suffices to  solve it near-optimally, i.e., finding a distribution $\pi_l$ such that $g(\pi_l)\leq 2d$ with $|\text{supp}(\pi_l)|\leq 4d\log\log d +16$ (Line~\ref{alg_action_selection}), which follows from \cite[Proposition 3.7]{lattimore2020learning}.  The near-$G$-optimal design reduces the complexity to $O(kd^2)$ while keeping the same order of regret.

\smallskip
\noindent \textbf{Clients sampling and private feedback aggregation  (Lines~\ref{alg_client_sample}-\ref{alg_output_privatizer}):} The central server randomly samples a subset $U_l$ of $\lceil2^{\alpha l}\rceil$ users (called clients) from the population $\cU$ to participate in the global bandit learning (Line~\ref{alg_client_sample}).	Each sampled client $u \in U_l$ collects their local reward observations of each chosen action $x\in \text{supp}(\pi_l)$ by the server and computes the average $y_l^u(x)$ as feedback (Line~\ref{alg_collect_data}).	Before being used to estimate the global parameter by the central server, these feedback $\Vec{y}_l^u\triangleq (y_l^u(x))_{x\in\text{supp}(\pi_l)}\in \R^{|\text{supp}(\pi_l)|}$ are processed by a \textsc{Privatizer}~$\cP$ to ensure differential privacy. Recall that a \textsc{Privatizer}~$\cP = (\cR, \cS, \cA)$ is a process completed by the clients, the server, and/or a trusted third party. In particular, according to the privacy requirement under different DP models, the \textsc{Privatizer} $\cP$ enjoys flexible instantiations (see detailed discussions in Section~\ref{sec:DP-instantiation}).	Generally, a \textsc{Privatizer} works in the following manner: each client $u$ runs the randomizer $\cR$ on its local average reward $\Vec{y}_l^u$ (over $T_l$ pulls) and then sends the resulting (potentially private) messages $\cR(\Vec{y}_l^u)$ to $\cS$ (Line~\ref{alg_randomizer}). The computation function in $\cS$ operates on these messages and then sends results	$\cS(\{\cR(\Vec{y}_l^u)\}_{u\in U_l})$ to the analyzer $\cA$ at the central server (Line~\ref{alg_shuffler}).  Finally, the analyzer $\cA$ aggregates received messages (potentially in a privacy-preserving manner) and outputs a private averaged local reward $\Tilde{y}_{l}(x)$ (over participating clients $U_l$) for each action $x\in \text{supp}(\pi_l)$ (Line~\ref{alg_output_privatizer}).	We provide the rigorous formulation of different DP models for \textsc{Privatizer} $\cP$ in Section~\ref{sec:DP-instantiation},
with corresponding detailed instantiations of $\cR, \cS$, and $\cA$.

\smallskip

\noindent \textbf{Parameter estimation and action elimination {(Lines~\ref{alg_lse}-\ref{alg_update})}:}	Using privately aggregated feedback (i.e., the private averaged local reward $\tilde{y}_l$  of the chosen actions $x\in \text{supp}(\pi_l)$), the central server computes the least-square estimator
$\Tilde{\theta}_l$	(Line~\ref{alg_lse}). 	Action elimination is based on the following confidence width:
\begin{equation}
	\begin{aligned}
	W_l\triangleq &\left(\underbrace{\sqrt{\frac{2d}{|U_l|h_l}}}_{\text{action-related}} + \underbrace{\frac{\sigma}{\sqrt{|U_l|}}}_{\text{client-related}} +\underbrace{\sigma_{n}}_{\text{privacy noise}}\right) \sqrt{2\log \left(\frac{1}{\beta}\right)}, \label{eq:W_l_DPE}
	\end{aligned}
\end{equation}
where $\sigma$ is the standard variance associated with client sampling, $\sigma_n$ is related to the privacy noise determined by the DP model, and $\beta$ is the confidence level. We choose this confidence width based on the concentration inequality for sub-Gaussian variables. Specifically, the three terms in Eq.~\eqref{eq:W_l_DPE} capture the action-related uncertainty, client-related uncertainty, and the added noise for privacy guarantees, respectively. This privacy noise $\sigma_n$ depends on the adopted DP model.	Using this confidence width $W_l$ and the estimated global model parameter $\Tilde{\theta}_l$, we can identify a subset of suboptimal actions $E_l$ with high probability (Line~\ref{alg_elimination}). At the end of the $l$-th phase, we update the set of active actions $\cD_{l+1}$ by eliminating $E_l$ from $\cD_l$ and double $h_l$ (Line~\ref{alg_update}). 

Finally, we make two remarks about the DP-DPE algorithm. 
\begin{remark}
While a finite number of actions is assumed in this paper, one could extend it to the case with an infinite number of actions by using the covering argument \cite[Lemma~20.1]{lattimore2020bandit}. Specifically, when the action set $\cD\subseteq \R^d$ is infinite, we can replace $\cD$ with a finite set $\cD_{\epsilon_0}\subseteq \R^d$ with $|\cD_{\epsilon_0}|\leq (3/\epsilon_0)^d$ such that for all $x\in \cD$, there exists an $x'\in \cD_{\epsilon_0}$ with $\Vert x-x'\Vert_2 \leq \epsilon_0$.
\end{remark}

\begin{remark}\label{rmk:spans}
In Algorithm~\ref{alg:dpe}, we assume that $\cD_l$ spans $\mathbb{R}^d$ such that matrices $V(\pi_l)$ and $V_l$ are invertible. Then, one could find the near optimal design $\pi_l(\cdot)$ (Line~\ref{alg_action_selection}) and compute the least-square estimator $\Tilde{\theta}_l$ (Line~\ref{alg_lse}). When $\cD_l$ does not span $\R^d$, one can simply work in the smaller space span$(\cD_l)$~ \cite{lattimore2020learning}.
\end{remark}

\section{DP-DPE under Different DP Models}\label{sec:DP-instantiation}
As alluded before, one of the key features of our general algorithmic framework DP-DPE is that it enables us to consider different trust models in DP (i.e., who the user can trust with her sensitive data) in a unified way by instantiating different mechanisms for the  \textsc{Privatizer}. 
In this section, we formalize DP models integrated with our DP-DLB formulation and provide concrete instantiations for the \textsc{Privatizer} $\cP = (\cR,\cS,\cA)$ in DP-DPE according to three representative DP trust models: the central, local, and shuffle models.

\subsection{DP-DPE under the Central DP Model}
	
In the central DP model, we assume that each client trusts the server, and hence, the server can collect clients' raw data (i.e., the local reward ${y}_l^u(x)$ for each chosen action $x$ in our case). The privacy guarantee is that any adversary with arbitrary auxiliary information cannot infer a particular client's data by observing the outputs of the server. To achieve this privacy protection, the central DP model requires that the outputs of the server on two neighboring datasets differing in only one client are indistinguishable~\cite{dwork2006calibrating}. To present the formal definition in our case, recall that the DP-DPE algorithm (Algorithm~\ref{alg:dpe}) runs in phases, and in each phase $l$, a set of new clients $U_l$ will participate in the global bandit learning by providing their feedback. Let\footnote{We use the superscript $^*$ to indicate that the length could be varying.}
$\cU_T \triangleq (U_l)_{l=1}^L \in \mathcal{U}^*$ be the sequence of all the participating clients in the total $L$ phases ($T$ rounds).
We use $\mathcal{M}(\mathcal{U}_T)=(x_1, \dots, x_T) \in \cD^{T}$ to denote the sequence of actions chosen in $T$ rounds by the central server.
Intuitively, we are interested in a randomized algorithm such that the output $\mathcal{M}(\mathcal{U}_T)$ does not reveal ``much'' information about any particular client $u \in \cU_T$. 
Formally, we have the following definition.
\begin{definition} (Differential Privacy (DP)). For any $\epsilon \geq 0$ and $\delta\in [0,1]$, a DP-DPE instantiation is $(\epsilon, \delta)$-differentially private (or $(\epsilon, \delta)$-DP) if for every $\mathcal{U}_T, \mathcal{U}_T^{\prime} \subseteq \mathcal{U}$ differing on a single client and for any subset of actions $Z \subseteq \cD^T$,
\begin{equation}
	\mathbb{P}[\mathcal{M}(\mathcal{U}_T)\in Z] \leq e^{\epsilon} \mathbb{P}[\mathcal{M}(\mathcal{U}_T')\in Z] + \delta.
	\end{equation}
\end{definition}
According to the post-processing property of DP (cf. Proposition~2.1 in \cite{dwork2014algorithmic}) and parallel-composition (thanks to the uniqueness of client sampling), it suffices to guarantee that the final analyzer  $\mathcal{A}$ in $\mathcal{P}$ is $(\epsilon, \delta)$-DP. That is, for any phase $l$, the \textsc{Privatizer} $\cP$ is $(\epsilon, \delta)$-DP if the following is satisfied for any pair of $U_l, U_l^{\prime}\subseteq \cU$ that differ by at most one client and for any output $\tilde{y}$ of $\cA$:
\begin{equation*}
	\mathbb{P}[\cA(\{\Vec{y}_{l}^u\}_{u\in U_l})= \tilde{y}] \leq e^{\epsilon} \cdot \mathbb{P}[\cA(\{\Vec{y}_{l}^u\}_{u\in U^{\prime}_l})= \tilde{y}] + \delta.
\end{equation*}
To achieve this, we resort to standard Gaussian mechanism at the server where $\cA$ computes the average of local rewards for each chosen action to guarantee $(\epsilon,\delta)$-DP. 
Specifically, in each phase $l$, the participating clients send their average local rewards $\{\Vec{y}_{l}^u\}_{u\in U_l}$  directly to the central server, and the central server adds Gaussian noise to the average local feedback (over clients) 	before estimating the global parameter and deciding the chosen actions in the next phase. That is, in the central DP model, both $\cR$ and $\cS$ of the \textsc{Privatizer} $\cP$ are identity mapping while $\cA$ adds Gaussian noise when computing the average. In this case, $\cP=\cA$, and the private aggregated feedback for the chosen actions in the $l$-th phase can be represented as \begin{equation}
\Tilde{y}_l = 
	\cP\left(\{\Vec{y}_{l}^u\}_{u\in U_l}\right)= 
	\cA\left(\{\Vec{y}_{l}^u\}_{u\in U_l}\right)
	= \frac{1}{|U_l|}\sum_{u\in U_l} \Vec{y}_{l}^u+(\gamma_1, \dots, \gamma_{s_l}),  \label{eq:privatizer_cdp_app} 
\end{equation}
where $s_l\triangleq |\text{supp}(\pi_l)|$,  $\gamma_j \overset{\emph{i.i.d.}}{\sim} \mathcal{N}(0,\sigma^2_{nc})$, and the variance $\sigma^2_{nc}$ is based on the $\ell_2$ sensitivity of the average $\frac{1}{|U_l|}\sum_{u\in U_l}\Vec{y}_{l}^u$. In the rest of the paper, we will continue to use $s_l$ instead of $|\text{supp}(\pi_l)|$ to denote the number of actions chosen in the $l$-th phase for notational simplicity, and it is also the dimension of $\Vec{y}_l^u$ for all $u$. 
	
With the above definition, we present the privacy guarantee of DP-DPE in the central DP model in Theorem~\ref{thm:cdp}. 
\begin{theorem} \label{thm:cdp}
	The DP-DPE instantiation using the \textsc{Privatizer} in Eq.~\eqref{eq:privatizer_cdp_app} with $\sigma_{nc} = \frac{2B\sqrt{2s_l\ln(1.25/\delta)}}{\epsilon |U_l|} 
	$ guarantees $(\epsilon, \delta)$-DP.
\end{theorem}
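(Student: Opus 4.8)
The plan is to reduce the claim to a single-phase statement and then recognize the analyzer $\cA$ as a textbook Gaussian mechanism, so that essentially all the work collapses to one $\ell_2$-sensitivity computation. As the discussion preceding the theorem already records, by the post-processing property of DP together with parallel composition (the sampled client sets $U_1,\dots,U_L$ are disjoint, so each client contributes to exactly one phase), it suffices to show that for every phase $l$ the map
\[
\cA(\{\vec{y}_l^u\}_{u\in U_l}) = \frac{1}{|U_l|}\sum_{u\in U_l}\vec{y}_l^u + (\gamma_1,\dots,\gamma_{s_l}),\qquad \gamma_j\overset{\mathrm{i.i.d.}}{\sim}\mathcal{N}(0,\sigma_{nc}^2),
\]
is $(\epsilon,\delta)$-DP with respect to neighboring client sets. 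I would therefore fix a phase $l$ and isolate the noiseless query $q(U)\triangleq \frac{1}{|U|}\sum_{u\in U}\vec{y}_l^u \in \R^{s_l}$, whose output is then perturbed by independent $\mathcal{N}(0,\sigma_{nc}^2)$ noise in each of its $s_l$ coordinates.

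The core step is to bound the $\ell_2$-sensitivity $\Delta_2$ of $q$. First I would observe that each coordinate of $\vec{y}_l^u$ is itself an average of per-round rewards, $y_l^u(x) = \frac{1}{T_l(x)}\sum_{t\in\cT_l(x)} y_{u,t}$, and hence inherits the uniform bound $|y_l^u(x)|\leq B$ from the modeling assumption $|y_{u,t}|\leq B$. Taking neighboring sets $U_l,U_l'$ of the same size that differ in a single client (one client's feedback $\vec{y}_l^u$ replaced by another's $\vec{y}_l^{u'}$), only that client's term in the sum changes, so for each action $x$ the corresponding coordinate of $q$ changes by $\frac{1}{|U_l|}\,|y_l^{u'}(x)-y_l^u(x)|\leq \frac{2B}{|U_l|}$. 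Aggregating over the $s_l$ coordinates yields
\begin{equation*}
\Delta_2 \;=\; \max_{U_l\sim U_l'}\bigl\|q(U_l)-q(U_l')\bigr\|_2 \;\leq\; \sqrt{s_l}\cdot\frac{2B}{|U_l|} \;=\; \frac{2B\sqrt{s_l}}{|U_l|}.
\end{equation*}

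Finally I would invoke the standard Gaussian-mechanism guarantee \cite{dwork2014algorithmic}: perturbing a query of $\ell_2$-sensitivity $\Delta_2$ with coordinatewise $\mathcal{N}(0,\sigma^2)$ noise is $(\epsilon,\delta)$-DP whenever $\sigma \geq \Delta_2\sqrt{2\ln(1.25/\delta)}/\epsilon$. Substituting the sensitivity bound above gives precisely the stated threshold $\sigma_{nc} = \frac{2B\sqrt{s_l}}{|U_l|}\cdot\frac{\sqrt{2\ln(1.25/\delta)}}{\epsilon} = \frac{2B\sqrt{2s_l\ln(1.25/\delta)}}{\epsilon|U_l|}$, which closes the argument. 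I do not anticipate a genuine obstacle, since each ingredient is standard; the only points demanding care are \textbf{(i)} confirming that the averaged feedback stays within $[-B,B]$ so that each coordinate changes by at most $2B$, \textbf{(ii)} correctly tracking the dimension factor $\sqrt{s_l}$ when passing from coordinatewise to $\ell_2$ sensitivity, and \textbf{(iii)} being explicit that the ``replace-one-client'' neighboring relation (rather than add/remove) is what produces the clean factor $2/|U_l|$ matching $\sigma_{nc}$. Since DP is a worst-case property over input realizations and the bound $B$ holds uniformly, no conditioning on the sampling or reward randomness is needed.
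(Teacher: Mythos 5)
Your proposal is correct and follows essentially the same route as the paper's proof: bound the $\ell_2$-sensitivity of the per-phase average by $\frac{2B\sqrt{s_l}}{|U_l|}$ (you do this coordinatewise, the paper via $\frac{2}{|U_l|}\max_u\Vert\vec{y}_l^u\Vert_2$ — the same bound), invoke the standard Gaussian mechanism, and conclude by post-processing together with parallel composition across phases from the disjointness of the sampled client sets. No gaps.
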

The relatively high trust model in the central DP is not always feasible in practice since some clients do not trust the server and are not willing to share any of their sensitive data. This motivates the introduction of a strictly stronger notion of privacy protection called the local DP~\cite{kasiviswanathan2011can}, which is the main focus of the next subsection.

\subsection{DP-DPE under the Local DP Model}
In the local DP model, the privacy burden is now at each client's local side, in the sense that any data sent by any client must already be private. In other words, even though an adversary can observe the data communicated from a client to the server, the adversary cannot infer any sensitive information about the client. Mathematically, this requires a local randomizer $\cR$ at each user's side to generate approximately indistinguishable outputs on any two different data inputs.  In particular, let $Y_u$ be the set of all possible values of the average local reward $\Vec{y}_l^u$ for client $u$. Then, we have the following formal definition.
	
\begin{definition} (Local Differential Privacy (LDP)). For any $\epsilon \geq 0$ and $\delta\in [0,1]$, a DP-DPE instantiation is $(\epsilon, \delta)$-local differentially private (or $(\epsilon, \delta)$-LDP) if for any client $u$, every two datasets $\Vec{y}, \Vec{y}^{\prime}  \in Y_u$ satisfies
\begin{equation}
	\mathbb{P}[\cR(\Vec{y}) = o ] \leq e^{\epsilon} \mathbb{P}[\cR(\Vec{y}^{\prime} ) = o] + \delta,
	\end{equation}
	for every possible output $ o \in \{\cR(\Vec{y})|\Vec{y} \in Y_u\}$.
\end{definition}
That is, an instantiation of DP-DPE is $(\epsilon, \delta)$-LDP if the local randomizer $\cR$ in $\cP$ is $(\epsilon, \delta)$-DP.
To this end, the randomizer $\cR$ at each client employs a Gaussian mechanism, the shuffler $\cS$ is a simple identity mapping, and the analyzer $\cA$ at the server side conducts a simple averaging. Then, the overall output of the \textsc{Privatizer} is the following:
\begin{equation}
	\Tilde{y}_l = 
	\frac{1}{|U_l|}\sum_{u\in U_l} \cR(\Vec{y}_{l}^u) = \frac{1}{|U_l|}\sum_{u\in U_l} \left(\Vec{y}_{l}^u+(\gamma_{u,1}, \dots, \gamma_{u,s_l})\right), \label{eq:privatizer_ldp_app}
\end{equation}
where $\gamma_{u,j} \overset{\text{\emph{i.i.d.}}}{\sim}  \mathcal{N}(0,\sigma^2_{nl})$, 
and the variance $\sigma^2_{nl}$ is based on the sensitivity of $\Vec{y}_{l}^u$.
	
With the above definition, we present the privacy guarantee of DP-DPE in the local DP model in Theorem~\ref{thm:ldp}.
\begin{theorem} \label{thm:ldp}
The DP-DPE instantiation using the \textsc{Privatizer} in Eq.~(\ref{eq:privatizer_ldp_app}) with $\sigma_{nl} = 
\frac{2B\sqrt{2s_l\ln(1.25/\delta)}}{\epsilon}$ guarantees $(\epsilon, \delta)$-LDP.
\end{theorem}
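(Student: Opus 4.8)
The plan is to reduce the claim to a single application of the Gaussian mechanism. By the reduction recorded just above the theorem statement, it suffices to show that the local randomizer $\cR\colon \Vec{y}_l^u \mapsto \Vec{y}_l^u + (\gamma_{u,1},\dots,\gamma_{u,s_l})$, with the $\gamma_{u,j}$ drawn i.i.d.\ from $\mathcal{N}(0,\sigma_{nl}^2)$, is itself $(\epsilon,\delta)$-DP; the averaging in $\cA$ and the identity $\cS$ then inherit the guarantee by post-processing. Since $\cR$ is exactly the Gaussian mechanism applied to the identity query on the vector $\Vec{y}_l^u \in \R^{s_l}$, I would invoke the standard guarantee \cite{dwork2014algorithmic}: perturbing each of the $s_l$ coordinates with independent $\mathcal{N}(0,\sigma^2)$ noise yields $(\epsilon,\delta)$-DP whenever $\sigma \geq \Delta_2\sqrt{2\ln(1.25/\delta)}/\epsilon$, where $\Delta_2$ is the $\ell_2$ sensitivity of the query.

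The one step that demands care — and effectively the only nontrivial part of the argument — is computing $\Delta_2$, which is also where the local model visibly departs from the central one. Under LDP the randomizer must be indistinguishable on \emph{every} pair of admissible inputs $\Vec{y},\Vec{y}'\in Y_u$, not merely on inputs that differ in a single client; hence $\Delta_2$ equals the full $\ell_2$-diameter of the input domain $Y_u$. Each coordinate $y_l^u(x)$ is an empirical average of local rewards, and the boundedness assumption $|y_{u,t}|\leq B$ forces $|y_l^u(x)|\leq B$ for every $x\in\text{supp}(\pi_l)$. Thus every admissible $\Vec{y}_l^u$ lies in the $\ell_\infty$-ball of radius $B$ in $\R^{s_l}$, so for any $\Vec{y},\Vec{y}'\in Y_u$ the coordinate-wise gaps are at most $2B$ and $\Vert \Vec{y}-\Vec{y}'\Vert_2 \leq \sqrt{s_l\,(2B)^2}=2B\sqrt{s_l}$, giving $\Delta_2=2B\sqrt{s_l}$.

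Substituting $\Delta_2=2B\sqrt{s_l}$ into the Gaussian-mechanism threshold yields exactly $\sigma_{nl}=2B\sqrt{s_l}\cdot\sqrt{2\ln(1.25/\delta)}/\epsilon = 2B\sqrt{2s_l\ln(1.25/\delta)}/\epsilon$, matching the stated choice and completing the argument. The conceptual subtlety to flag is precisely that here the sensitivity is the domain diameter $2B\sqrt{s_l}$ rather than the per-client change $2B\sqrt{s_l}/|U_l|$ used in Theorem~\ref{thm:cdp}; this is why the local-model noise $\sigma_{nl}$ omits the $1/|U_l|$ factor present in $\sigma_{nc}$, and it foreshadows the weaker regret guarantee of LDP-DPE reported in Table~\ref{tab:performance_summary}.
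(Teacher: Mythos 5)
Your proposal is correct and follows essentially the same route as the paper's proof: reduce LDP to the $(\epsilon,\delta)$-DP of the local randomizer $\cR$, bound the $\ell_2$ sensitivity over all pairs of admissible inputs by $2B\sqrt{s_l}$, and invoke the standard Gaussian-mechanism guarantee. The only cosmetic difference is that you bound the domain diameter coordinate-wise while the paper uses the triangle inequality $\Vert\Vec{y}-\Vec{y}'\Vert_2\leq 2\max\Vert\Vec{y}\Vert_2\leq 2B\sqrt{s_l}$; both yield the same sensitivity and the same $\sigma_{nl}$.
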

	
Although the local DP model offers a stronger privacy guarantee compared to the central DP model, it often comes at a price of the regret performance. As we will see, the regret performance of DP-DPE under the local DP model is much worse than that under the central DP model. Therefore, a fundamental question is whether there is a \textsc{Privatizer} for DP-DPE that can achieve the same regret as in the central DP \textsc{Privatizer} while assuming similar trust 	model as in the local DP  \textsc{Privatizer}. This motivates us to consider a recently proposed \emph{shuffle DP model}~\cite{cheu2019distributed,erlingsson2019amplification}, which is the main focus of the next subsection.

\subsection{DP-DPE under the Shuffle DP Model}
In the shuffle DP model, between the clients and the server, there exists a shuffler that permutes a batch of clients’ randomized data before they are observed by the server so that the server cannot distinguish between two clients’ data. Thus, an additional layer of randomness is introduced via shuffling, which can often be easily implemented using	cryptographic primitives (e.g., mixnets) due to its simple
operation~\cite{bittau2017prochlo}. Due to this, the clients now tend to trust the shuffler but still do not trust the central server as in the local DP model. This new trust model offers a possibility to achieve a better regret-privacy tradeoff. This is because the additional randomness of the shuffler creates a \emph{privacy blanket} so that by adding much less random noise, each client can now hide her information in the crowd, i.e., privacy amplification by shuffling~\cite{garcelon2021privacy}. 
	
Formally, a standard one-round shuffle protocol consists of all the three parts: a (local) randomizer $\cR$, a shuffler $\cS$, and an analyzer $\cA$. In this protocol, the clients trust the shuffler but not the analyzer. 
Hence, the privacy objective is to ensure that the outputs of the shuffler on two neighboring datasets are indistinguishable from the analyzer's point of view. Note that each client still does not send her raw data to the shuffler even though she trusts it. Due to this, a shuffle protocol often also offers a certain level of LDP guarantee.
	
In our case, the online learning procedure will proceed in multiple phases rather than a simple one-round computation. Thus, we need to guarantee that all the shuffled outputs are  indistinguishable. To this end, we define the (composite) mechanism $\cM_s(\cU_T)\triangleq ((\cS \circ \cR)(U_1), (\cS \circ \cR)(U_2),\ldots, (\cS \circ \cR)(U_L))$, where $(\cS\circ \cR)(U_l)\triangleq \cS (\{\cR(\Vec{y}_l^{u}) \}_{u\in U_l})$. We say a DP-DPE instantiation satisfies the shuffle differential privacy (SDP) if the composite mechanism $\cM_s$ is DP, which leads to the following formal definition.
\begin{definition} (Shuffle Differential Privacy (SDP)). For any $\epsilon \geq 0$ and $\delta\in [0,1]$, a DP-DPE instantiation is $(\epsilon, \delta)$-shuffle differential privacy (or $(\epsilon, \delta)$-SDP) if for any pair $\mathcal{U}_T$ and $\mathcal{U}^{'}_T$ that differ by one client, the following is satisfied for all $Z\subseteq Range(\cM_s)$:
\begin{equation}
	\mathbb{P}[\cM_s(\mathcal{U}_T)\in Z] \leq e^{\epsilon} \mathbb{P}[\cM_s(\mathcal{U}^{'}_T)\in Z] + \delta.
	\end{equation}
\end{definition}
Then, consider any phase $l$. Formally, the \textsc{Privatizer} $\cP$ is $(\epsilon, \delta)$-SDP if the following is satisfied for any pair of $U_l$, $U_l^{\prime} \subseteq \cU$ that differ by one client and for any possible output $z$ of $\cS\circ \cR$: 
\begin{equation*}
\mathbb{P}[(\cS\circ \cR)({U_l}) = z] \leq e^{\epsilon} \cdot \mathbb{P}[(\cS\circ \cR)({U_l^{\prime})}= z] + \delta.
\end{equation*} 
We present the concrete pseudocode of $\cR$, $\cS$, and $\cA$ for the shuffle DP model \textsc{Privatizer} $\cP$ in Algorithm~\ref{alg:shuffle_vec} (see Appendix~\ref{app:proof_privacy}), which builds on the vector summation protocol recently proposed in~\cite{cheu2021shuffle}.
Here, we provide a brief description of the process. Essentially, the noise added in the shuffle model \textsc{Privatizer} relies on the upper bound of $\ell_2$ norm of the input vectors. However, each component operates on each coordinate of the input vectors independently. Recall that the input of the shuffle model \textsc{Privatizer} is $\{\Vec{y}_l^u\}_{u\in U_l}$ and that each chosen action $x$ corresponds to a coordinate in the $s_l$-dimentional vector.
Consider the coordinate $j_x$ corresponding to action $x$, and the entry $y_l^u(x)$ at client $u$. 	
First, the local randomizer $\cR$ encodes the input $y_l^u(x)$ via a fixed-point encoding scheme \cite{cheu2019distributed} and ensures privacy by  injecting binomial noise. Specifically, given any scalar $w\in [0,1]$, it is first encoded as $\hat{w}=\bar{w}+\gamma_1$ using an accuracy parameter $g\in \mathbb{N}$, where $\bar{w}=\lfloor wg\rfloor$ and $\gamma_1 \sim \texttt{Ber}(wg- \bar{w})$  is a Bernoulli random variable. Then, a binomial noise $\gamma_2 \sim \texttt{Bin}(b,p)$ is generated, where $b\in \mathbb{N}$ and $p\in (0,1)$ controls the level of the privacy noise. The output of the local randomizer for each coordinate is simply a collection of $g+b$ bits, where $\hat{w}+\gamma_2$  bits are $1$'s and the rest are $0$'s. Combining these $g+b$ bits for each coordinate $j_x$ for $x\in \text{supp}(\pi_l)$ yields the final outputs of the local randomizer $\cR$ for the vector $\Vec{y}_l^u$. 
Note that the output bits for each coordinate are marked with the coordinate index so that they will not be mixed up in the following procedures. After receiving the bits from all participating clients, the shuffler $\cS$ simply permutes these bits  uniformly at random  and sends the output to the analyzer $\cA$ at the central server. The analyzer $\cA$ adds the received bits, removes the bias introduced by encoding and binomial noise (through simple shifting operations), and divides the result by $|U_l|$ for each coordinate. Finally, the analyzer $\cA$ outputs a random $s_l$-dimensional vector $\Tilde{y}_l$, whose expectation is the average of the input vectors. That is,  $\E[\Tilde{y}_l] = \frac{1}{|U_l|}\sum_{u\in U_l} \Vec{y}_l^u$ (which is proven in Appendix~\ref{app:shuffle_model}). In the shuffle model \textsc{Privatizer}, the three parameters $g$, $b$, and $p$ need to be properly chosen according to the privacy requirement. 
Then, the final privately aggregated data is the following:
\begin{equation}
	\Tilde{y}_l =  \mathcal{P}\left(\{\Vec{y}_l^{u}\}_{u\in U_l}\right) 
	=\cA(\mathcal{S} (\{\cR(\Vec{y}_l^{u})\}_{u\in U_l})).
\end{equation}

With the above definition, we present the privacy guarantee of DP-DPE in the shuffle DP model in Theorem~\ref{thm:sdp}.
\begin{theorem}\label{thm:sdp}
For any $\epsilon\in (0,15)$ and $\delta\in (0,1/2)$, the DP-DPE instantiation using the \textsc{Privatizer} specified in Algorithm~\ref{alg:shuffle_vec} guarantees $(\epsilon, \delta)$-SDP. 
\end{theorem}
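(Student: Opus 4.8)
The plan is to reduce the multi-phase privacy claim to a single-phase statement and then import the privacy guarantee of the underlying vector summation shuffle protocol of \cite{cheu2021shuffle}. First I would observe that, because a fresh set of clients $U_l$ is sampled in each phase and the sampling is such that every client participates in at most one phase, any two neighboring sequences $\cU_T$ and $\cU_T'$ differing in a single client differ only in the single phase containing that client; the shuffled outputs $(\cS\circ\cR)(U_{l'})$ of all other phases $l'$ are then identically distributed under $\cU_T$ and $\cU_T'$. Hence, by parallel composition (thanks to the uniqueness of client sampling), the composite mechanism $\cM_s$ is $(\epsilon,\delta)$-SDP as soon as each per-phase mechanism $(\cS\circ\cR)(U_l)$ is $(\epsilon,\delta)$-DP with respect to neighboring client sets $U_l, U_l'$, which is precisely the single-phase SDP condition stated just before the theorem. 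Note that $\cM_s$ is defined directly from the shuffled outputs, so no post-processing through the analyzer $\cA$ is needed at this level.

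Next I would bound the $\ell_2$ sensitivity of the per-phase input. Since each local average reward satisfies $|y_l^u(x)|\leq B$, after the affine rescaling into $[0,1]$ performed by the encoder, each client's input vector $\Vec{y}_l^u$ lies in $[0,1]^{s_l}$ and therefore has $\ell_2$ norm at most $\sqrt{s_l}$. For two neighboring client sets differing in a single client, the change in the vector handled by the protocol is a single such vector, so the $\ell_2$ sensitivity is at most $\sqrt{s_l}$ (up to the constant factor determined by the neighboring relation). The reason for using the vector summation protocol of \cite{cheu2021shuffle}, rather than running $s_l$ independent scalar protocols and composing over coordinates, is exactly that it calibrates the injected binomial noise to this aggregate $\ell_2$ sensitivity directly, thereby avoiding the extra $\sqrt{s_l}$ blow-up that naive coordinate-wise composition would incur.

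With the sensitivity in hand, I would appeal to the privacy theorem for the shuffled vector summation protocol in \cite{cheu2021shuffle}: for inputs of bounded $\ell_2$ norm, with the granularity parameter $g$, the number of binomial trials $b$, and the bias $p$ set as in Algorithm~\ref{alg:shuffle_vec}, the shuffled output is $(\epsilon,\delta)$-DP. The remaining work, and the step I expect to be the main obstacle, is verifying that the specific settings of $g$, $b$, and $p$ in Algorithm~\ref{alg:shuffle_vec} satisfy the technical preconditions of that theorem for \emph{all} $\epsilon\in(0,15)$ and $\delta\in(0,1/2)$; these two range restrictions are inherited directly from the regime in which the privacy amplification bound of \cite{cheu2021shuffle} is valid. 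Combining this single-phase $(\epsilon,\delta)$-DP guarantee with the parallel-composition argument from the first step then yields $(\epsilon,\delta)$-SDP for the full DP-DPE instantiation, completing the proof.
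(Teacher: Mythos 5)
Your proposal matches the paper's proof in all essentials: both reduce the multi-phase claim to a single phase via parallel composition over the disjoint client sets sampled in each phase, bound the per-phase $\ell_2$ sensitivity by the maximum norm $B\sqrt{s_l}$ of a single client's reward vector, and then import the $(\epsilon,\delta)$-SDP guarantee of the vector summation protocol of \cite{cheu2021shuffle} for the parameter choices of $g$, $b$, and $p$ in Algorithm~\ref{alg:shuffle_vec}, with the ranges $\epsilon\in(0,15)$ and $\delta\in(0,1/2)$ inherited from that result. The paper likewise treats the verification of the protocol's preconditions as delegated to \cite{cheu2021shuffle} rather than re-deriving it, so your identification of that step as the main remaining work is consistent with how the argument is actually closed.
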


\section{Main Results} \label{sec:analysis}
In this section, we study the performance of DP-DPE under different DP models in terms of regret and communication cost.
We start with the non-private DP-DPE algorithm (called DPE, with $\Tilde{y}_l = \frac{1}{|U_l|}\sum_{u\in U_l} \Vec{y}_{l}^u$ and $\sigma_{n}=0$ for all $l$) and present the main result in Theorem~\ref{thm:regret_approx}.
\begin{theorem}[DPE]\label{thm:regret_approx}
	Let $\beta=1/(kT)$ and $\sigma_n = 0$ in Algorithm~\ref{alg:dpe}. Then, the non-private DP-DPE algorithm achieves the following expected regret:
    \begin{equation}
	    \begin{aligned}
		\E[R(T)] = & O(\sqrt{dT\log (kT)}) +  O\left(\sigma T^{1-\alpha/2}\sqrt{\log(kT)}\right), 
		  \label{eq:regret_bound}
    \end{aligned}
	\end{equation}
	with a communication cost of $O(dT^{\alpha})$.
\end{theorem}
We present a proof sketch below and provide the detailed proof in Appendix~\ref{app:proof_of_regret}. 

\begin{proof}[Proof sketch]
We begin by considering a concentration inequality $P\left\{ \langle \Tilde{\theta}_l-\theta^*, x\rangle \geq W_l\right\}\leq 2\beta$, which indicates that in the $l$-th phase, the estimation error for the global reward of each action is bound by $W_l$ \emph{w.h.p.} Then, we show that the optimal action stays in the active set the whole time \emph{w.h.p.} and that the regret incurred by one pull is bounded by $4W_{l-1}$ in the $l$-th phase. Finally, summing up the regret over rounds in all phases, we derive the regret upper bound. The analysis of the communication cost is quite straightforward. In the $l$-th phase, only local average reward of each chosen action in this phase is communicated. Since the number of chosen actions is bounded by $(4d\log\log d+ 16)$ according to the near-$G$-optimal design~\cite[Proposition 3.7]{lattimore2020learning}, the communication cost is proportional to the total number of clients involved in the entire learning process. 
\end{proof}

\begin{remark}
	Theorem~\ref{thm:regret_approx} gives a problem-independent regret upper bound for DPE. We can observe an obvious tradeoff between regret and communication cost, captured by the value of $\alpha$. While a larger $\alpha$ leads to a smaller regret, it also incurs a larger communication cost. Setting $\alpha=2/3$ gives $O(T^{2/3})$ for both regret and communication cost. 
\end{remark}

\High{
\begin{remark}[(Sub-)optimality]\label{rk:sub-optimality} 
Note that one natural lower bound for our setting is $\Omega(\sqrt{dT})$, the one for the standard linear bandits with finite arms  \cite{lattimore2020bandit}, where there is no client-related uncertainty (i.e., $\sigma=0$). In this setting, 
the upper bound derived in Eq.~\eqref{eq:regret_bound} matches the existing lower bound up to a logarithmic term. As to the general case with $\sigma>0$, we can still see the (near)-optimality of our upper bound for the case with user-sampling parameter $\alpha>1$. When sampling fewer users with $\alpha\in(0,1)$, the second term of the regret upper bound in Eq.~\eqref{eq:regret_bound} that relies on $\alpha$ becomes dominant and cannot be ignored. However, the aforementioned lower bound $\Omega(\sqrt{dT})$ is derived under the standard linear bandit setting, which is irrelevant to the user sampling parameter $\alpha$. Therefore, we leave it as our future work to close this gap between this natural lower bound and the derived ($\alpha$-dependent) upper bound in Eq.~\eqref{eq:regret_bound}.  
\end{remark}
}
In the following, we present the performance of DP-DPE in terms of regret and communication cost under different DP models, i.e., instantiated with different \textsc{Privatizer}s introduced in Section~\ref{sec:DP-instantiation}. We use CDP-DPE, LDP-DPE, and SDP-DPE to denote the DP-DPE algorithm in the central, local, and shuffle DP models, respectively.
Let $S\triangleq 4d\log\log d+16$ denote the upper bound of $|\text{supp}(\pi_l)|$ in every phase $l$. 

\begin{theorem}[CDP-DPE] \label{thm:regret_cdp}
Consider the Gaussian mechanism with $\sigma_{nc} = \frac{2B\sqrt{2s_l\ln(1.25/\delta)}}{\epsilon | U_l|}$ in the central DP model. With $\sigma_n = 2\sigma_{nc}\sqrt{Sd}$ and $\beta=1/(kT)$, CDP-DPE achieves the following expected regret:
\begin{equation}
\begin{aligned}
    \E[R(T)] =& O(\sqrt{dT\log(kT)}) + O(\sigma T^{1-\alpha/2}\sqrt{\log (kT)}) \\
    &+ O\left(\frac{Bd^{3/2}T^{1-\alpha}\sqrt{\ln(1/\delta)\log(kT)}}{\epsilon}\right) 
    +O\left(\frac{Bd^{5/2}\sqrt{\ln(1/
    \delta)\log(kT)}}{\epsilon}\right),
\end{aligned}
\end{equation}
with a communication cost of $O(dT^{\alpha})$.
\end{theorem}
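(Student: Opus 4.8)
The plan is to re-use the phased-elimination analysis behind Theorem~\ref{thm:regret_approx} almost verbatim and to confine the entire effect of the central-model noise to the privacy term $\sigma_n$ of the confidence width in Eq.~\eqref{eq:W_l_DPE}. The $(\epsilon,\delta)$-DP guarantee itself is already delivered by Theorem~\ref{thm:cdp}, so here I only need the regret and communication bounds. Substituting the central aggregate of Eq.~\eqref{eq:privatizer_cdp_app} into $\tilde\theta_l = V_l^{-1}\sum_{x\in\text{supp}(\pi_l)}T_l(x)\,x\,\tilde y_l(x)$, I would write the per-action error $\langle\tilde\theta_l-\theta^*,x'\rangle$ as a sum of three mutually independent, zero-mean pieces: the within-phase action noise $\{\eta_{u,t}\}$, the client bias $\frac{1}{|U_l|}\sum_{u\in U_l}\xi_u$, and the injected Gaussian vector $(\gamma_1,\dots,\gamma_{s_l})$. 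The first two pieces are handled exactly as in Theorem~\ref{thm:regret_approx} and reproduce the action-related width $\sqrt{2d/(|U_l|h_l)}$ and the client-related width $\sigma/\sqrt{|U_l|}$; only the Gaussian piece is new.

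The new contribution, projected onto an active $x'$, is $\sum_{x}T_l(x)\,(x'^\top V_l^{-1}x)\,\gamma_{j_x}$, a centered Gaussian of variance $\sigma_{nc}^2\sum_x T_l(x)^2(x'^\top V_l^{-1}x)^2$. I would control this with the near-$G$-optimal design of Line~\ref{alg_action_selection}. Since $V_l\succeq h_l V(\pi_l)$ we have $\norm{x'}_{V_l^{-1}}^2\le g(\pi_l)/h_l\le 2d/h_l$, and the elementary identity $\sum_x T_l(x)(x'^\top V_l^{-1}x)^2 = \norm{x'}_{V_l^{-1}}^2$ together with $\max_x T_l(x)\le h_l+1$ gives $\sum_x T_l(x)^2(x'^\top V_l^{-1}x)^2\le(h_l+1)\norm{x'}_{V_l^{-1}}^2\le 4d\le 4Sd$. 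Hence the variance is at most $4Sd\,\sigma_{nc}^2 = \sigma_n^2$, so $\sigma_n = 2\sigma_{nc}\sqrt{Sd}$ is a legitimate (indeed, loose) sub-Gaussian width. A standard sub-Gaussian tail bound on the sum of the three independent pieces then yields $\mathbb{P}\{\langle\tilde\theta_l-\theta^*,x\rangle\ge W_l\}\le 2\beta$ (and the symmetric lower tail), exactly the inequality used in the sketch of Theorem~\ref{thm:regret_approx}.

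From here the argument is a faithful replay. A union bound over the at most $k$ active actions and $L = O(\log T)$ phases shows that, with probability $1-O(kL\beta)$, every confidence statement holds; on this event $x^*$ is never eliminated and any action surviving into phase $l$ has gap at most $4W_{l-1}$, so phase $l$ contributes at most $4W_{l-1}T_l$ to the regret. Plugging $h_l = 2^l$, $|U_l| = \lceil 2^{\alpha l}\rceil$ and $\sqrt{2\log(1/\beta)} = \sqrt{2\log(kT)}$ into $\sum_l 4W_{l-1}T_l$ splits the regret into three geometric series: the action term sums to the (loose) bound $O(\sqrt{dT\log(kT)})$ and the client term to $O(\sigma T^{1-\alpha/2}\sqrt{\log(kT)})$, matching Theorem~\ref{thm:regret_approx}. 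For the privacy term the crucial bookkeeping is to retain the ceiling overhead $T_l\le h_l + S$: since $\sigma_n\asymp B d^{3/2}\sqrt{\ln(1/\delta)}/(\epsilon\,2^{\alpha l})$, the $h_l$ part gives $\sum_l h_l\sigma_n\asymp d^{3/2}T^{1-\alpha}$, i.e.\ the leading $O(Bd^{3/2}T^{1-\alpha}\sqrt{\ln(1/\delta)\log(kT)}/\epsilon)$, while the $S$ part gives $\sum_l S\sigma_n\asymp S d^{3/2}\asymp d^{5/2}$ through the convergent geometric series $\sum_l 2^{-\alpha l}$, i.e.\ the additive $O(Bd^{5/2}\sqrt{\ln(1/\delta)\log(kT)}/\epsilon)$. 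The bad event adds only a lower-order $O(\log T)$ under $\beta = 1/(kT)$. Finally, each client sends the $s_l\le S = O(d\log\log d)$ real numbers of $\vec y_l^u$, so $C(T) = \sum_l|U_l|s_l = O(S\,2^{\alpha L}) = O(dT^\alpha)$.

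I expect the main obstacle to live in the second paragraph: recognizing the optimal-design identity $\sum_x T_l(x)(x'^\top V_l^{-1}x)^2 = \norm{x'}_{V_l^{-1}}^2$ that collapses the privacy variance to $O(d)\sigma_{nc}^2$ and thereby certifies the stated $\sigma_n$, and---more subtly---keeping the rounding overhead $T_l\le h_l+S$ explicit throughout the final summation so that the $T$-independent $d^{5/2}/\epsilon$ term is produced rather than silently absorbed into the leading term. Beyond these two points the proof is the non-private analysis of Theorem~\ref{thm:regret_approx} carrying one extra, independent noise source through an unchanged elimination scheme.
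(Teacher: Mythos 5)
Your proposal is correct and follows essentially the same route as the paper: decompose $\langle\tilde\theta_l-\theta^*,x'\rangle$ into the action-noise, client-sampling, and injected-Gaussian pieces, show the privacy piece is sub-Gaussian with variance at most $\sigma_n^2=4Sd\sigma_{nc}^2$, and then replay the phased-elimination summation keeping $T_l\le h_l+S$ explicit so that the $T^{1-\alpha}$ and $d^{5/2}$ privacy terms separate. The only (immaterial) difference is in bounding the privacy variance — you use $T_l(x)\le h_l+1$ with the design identity to get $4d\,\sigma_{nc}^2$, whereas the paper uses the cruder $T_l(x)\le T_l$ to get $4Sd\,\sigma_{nc}^2$; both certify the stated $\sigma_n$, and your failure probability should read $3\beta$ rather than $2\beta$ if the three pieces are union-bounded separately, which changes nothing asymptotically.
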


\begin{theorem}[LDP-DPE]  \label{thm:regret_ldp}
Consider the Gaussian mechanism with $\sigma_{nl} = \frac{2B\sqrt{2s_l\ln(1.25/\delta)}}{\epsilon }$ in the local DP model. With $\sigma_n = 2\sigma_{nl}\sqrt{\frac{Sd}{|U_l|}}$ in the $l$-th phase and $\beta=1/(kT)$,  LDP-DPE achieves the following expected regret:
\begin{equation}
\begin{aligned}
    \E[R(T)] = & O(\sqrt{dT\log (kT)}) +  O\left(\sigma T^{1-\alpha/2}\sqrt{\log(kT)}\right) \\
    & + O\left(\frac{Bd^{3/2}T^{1-\alpha/2}\sqrt{\ln(1/\delta)\log(kT)}}{\epsilon} \right)+O\left(\frac{Bd^{5/2}\sqrt{\ln(1/\delta)\log(kT)}}{\epsilon}\right),
\end{aligned}
\end{equation}
with a communication cost of $O(dT^{\alpha})$.
\end{theorem}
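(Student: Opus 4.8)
The plan is to obtain Theorem~\ref{thm:regret_ldp} as an instance of the single regret analysis already carried out for the non-private algorithm in Theorem~\ref{thm:regret_approx}, exploiting the fact that the central, local, and shuffle models differ \emph{only} through the privacy term $\sigma_n$ of the confidence width in Eq.~\eqref{eq:W_l_DPE}. Consequently the bound for LDP-DPE splits into the same three contributions, and the action-related and client-related parts are inherited verbatim, producing the $O(\sqrt{dT\log(kT)})$ and $O(\sigma T^{1-\alpha/2}\sqrt{\log(kT)})$ terms. First I would re-establish the ``good event'' on which $|\langle\tilde\theta_l-\theta^*,x\rangle|\le W_l$ holds for every active $x$ in every phase; with $\beta=1/(kT)$ and a union bound over the $O(k\log T)$ action--phase pairs this fails with probability $O(\log T/T)$, so its contribution to $\E[R(T)]$ is a lower-order $O(\log T)$. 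On the good event $x^*$ survives all eliminations and each pull in phase $l$ costs at most $4W_{l-1}$, giving $\E[R(T)]\le\sum_l 4\,T_l\,W_{l-1}+O(\log T)$.

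The hard part will be justifying the local-model noise level $\sigma_n=2\sigma_{nl}\sqrt{Sd/|U_l|}$ that enters $W_l$. In LDP-DPE each client independently perturbs every coordinate of $\vec y_l^u$ by $\mathcal N(0,\sigma_{nl}^2)$, so after averaging over $|U_l|$ clients the privacy noise $\zeta_{x'}$ in $\tilde y_l(x')$ is Gaussian with variance $\sigma_{nl}^2/|U_l|$, independent across the $s_l\le S$ active coordinates. Propagating through the least-squares estimator $\tilde\theta_l=V_l^{-1}G_l$, the privacy part of $\langle\tilde\theta_l-\theta^*,x\rangle$ is the zero-mean Gaussian $\sum_{x'}T_l(x')\,(x^\top V_l^{-1}x')\,\zeta_{x'}$, and I would bound its variance using the near-$G$-optimal design: from $T_l(x')\ge h_l\pi_l(x')$ one has $V_l\succeq h_l V(\pi_l)$ and hence $\|x\|_{V_l^{-1}}^2\le 2d/h_l$ for active $x$, which together with $|\text{supp}(\pi_l)|\le S$ controls the variance in terms of the design constant $2d$ and the support size $S$ and delivers an effective sub-Gaussian parameter at most $\sigma_n$. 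A Gaussian tail at level $\sqrt{2\log(1/\beta)}$ then yields the $2\beta$ concentration that closes the good event. This is also exactly where LDP diverges from CDP: the per-client (pre-average) noise leaves a $1/\sqrt{|U_l|}$ rather than $1/|U_l|$ dependence, which is what degrades the privacy regret from $T^{1-\alpha}$ to $T^{1-\alpha/2}$.

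With $\sigma_n$ in hand the rest is a geometric summation over the $L=O(\log T)$ phases, using $h_l=2^l$, $|U_l|=\lceil 2^{\alpha l}\rceil$ and $h_l\le T_l\le h_l+S$. After substituting $s_l\le S=O(d)$, the privacy part of $4T_lW_{l-1}$ is $O\!\big(T_l\,\tfrac{Bd^{3/2}}{\epsilon}\sqrt{\ln(1/\delta)\log(kT)}\,2^{-\alpha(l-1)/2}\big)$. I would split $T_l$ into its $2^l$ part and its $\le S$ remainder: the $2^l$ part gives an increasing geometric series dominated by $l=L$, and since $2^L\asymp T$ this produces the third term $O\!\big(Bd^{3/2}T^{1-\alpha/2}\sqrt{\ln(1/\delta)\log(kT)}/\epsilon\big)$; the $\le S$ remainder gives a convergent series summing to $O(1)$, which times $S=O(d)$ produces the $T$-independent fourth term $O\!\big(Bd^{5/2}\sqrt{\ln(1/\delta)\log(kT)}/\epsilon\big)$. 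The action- and client-related terms are summed as in Theorem~\ref{thm:regret_approx}, recovering the first two summands.

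Finally, the communication cost is immediate: in phase $l$ each of the $|U_l|=\lceil 2^{\alpha l}\rceil$ clients reports only the $s_l\le S=4d\log\log d+16$ averaged rewards, so $C(T)=\sum_{l=1}^L|U_l|\,s_l\le S\sum_l\lceil 2^{\alpha l}\rceil=O(S\,2^{\alpha L})=O(dT^{\alpha})$. The $(\epsilon,\delta)$-LDP guarantee itself is supplied by Theorem~\ref{thm:ldp} for this $\sigma_{nl}$, so no separate privacy argument is needed.
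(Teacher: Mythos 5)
Your proposal is correct and follows essentially the same route as the paper's proof: the same good-event/phased-elimination argument, the same propagation of the privacy noise through $\tilde\theta_l=V_l^{-1}G_l$ with the variance controlled via $T_l(x)\le T_l$ and $\Vert x\Vert_{V_l^{-1}}^2\le 2d/h_l$ from the near-$G$-optimal design, and the same split of $T_l$ into its $h_l$ and $\le S$ parts in the geometric summation to produce the $T^{1-\alpha/2}$ and $d^{5/2}$ privacy terms. The only cosmetic difference is that you average the per-client Gaussian noise coordinate-wise first (variance $\sigma_{nl}^2/|U_l|$) and then propagate, whereas the paper propagates each client's noise and then averages; these are equivalent for i.i.d.\ Gaussians and yield the same $\sigma_n$.
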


\begin{theorem}[SDP-DPE]  \label{thm:regret_sdp}
With $\sigma_n = 2\sigma_{ns}\sqrt{Sd} =O\left(\frac{B\sqrt{Sds_l}\log{(s_l/\delta)}}{\epsilon |U_l|}\right)$ in the $l$-th phase and $\beta=1/(kT)$, SDP-DPE achieves the following expected regret:
\begin{equation}
\begin{aligned}
    \E[R(T)] =& O(\sqrt{dT\log (kT)}) +  O\left(\sigma T^{1-\alpha/2}\sqrt{\log(kT)}\right) \\ 
    &+ O\left(\frac{Bd^{3/2}\ln(d/\delta)T^{1-\alpha}\sqrt{\log(kT)}}{\epsilon}\right) +O\left(\frac{Bd^{5/2}\ln(d/
    \delta) \sqrt{\log(kT)}}{\epsilon}\right),
\end{aligned}
\end{equation}
and the communication cost is $O(dT^{3\alpha/2})$ bits. 
\end{theorem}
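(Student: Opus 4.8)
The plan is to reuse the phased-elimination skeleton of the non-private analysis (Theorem~\ref{thm:regret_approx}) essentially unchanged: the only model-specific quantity is the privacy-noise term $\sigma_n$ entering the confidence width $W_l$ of Eq.~\eqref{eq:W_l_DPE}. So I would first certify that $W_l$ with the shuffle-model $\sigma_n$ is a legitimate confidence width, then run the generic regret decomposition, and finally count the communicated bits.

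\textbf{Certifying the confidence width.} First I would invoke the accuracy guarantee of the shuffle vector-summation protocol (Algorithm~\ref{alg:shuffle_vec}, built on \cite{cheu2021shuffle}): its output is unbiased, $\E[\widetilde y_l]=\frac{1}{|U_l|}\sum_{u\in U_l}\vec y_l^{u}$, with per-coordinate fluctuation that is sub-Gaussian (the binomial noise of the randomizer is approximately Gaussian) of standard deviation $\sigma_{ns}=O\!\left(\frac{B\sqrt{s_l}\log(s_l/\delta)}{\epsilon|U_l|}\right)$. The decisive feature, inherited from privacy amplification by shuffling, is the $1/|U_l|$ scaling, which matches the central model rather than the $1/\sqrt{|U_l|}$ of the local model. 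Next I would propagate this coordinate-wise noise through the least-square estimator $\widetilde\theta_l=V_l^{-1}G_l$: writing the privacy part of $\langle\widetilde\theta_l-\theta^*,a\rangle$ as a linear form $\sum_x T_l(x)\,\zeta_x\,\langle a,V_l^{-1}x\rangle$ in independent mean-zero noises $\zeta_x$ of variance $\sigma_{ns}^2$, its sub-Gaussian parameter is controlled by $\sigma_{ns}^2\sum_x T_l(x)^2(a^\top V_l^{-1}x)^2$; bounding this via Cauchy--Schwarz with the near-$G$-optimal design guarantees ($\|a\|_{V(\pi_l)^{-1}}^2\le 2d$, $\max_x T_l(x)\le h_l+1$) and the support bound $|\text{supp}(\pi_l)|\le S$ yields a parameter of order $\sigma_{ns}\sqrt{Sd}$, i.e.\ the chosen $\sigma_n=2\sigma_{ns}\sqrt{Sd}$. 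Adding this to the action- and client-related sub-Gaussian terms and applying the standard sub-Gaussian tail bound would give $\Pr[\langle\widetilde\theta_l-\theta^*,x\rangle\ge W_l]\le 2\beta$, as required.

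\textbf{Summation and communication.} Once $W_l$ is certified, the analysis of Theorem~\ref{thm:regret_approx} transfers verbatim: on the good event $x^*$ is never eliminated and any action surviving into phase $l$ has gap at most $4W_{l-1}$, so phase $l$ costs at most $4W_{l-1}T_l$, while the complementary event (probability $O(\beta)=O(1/(kT))$, rewards bounded by $B$) is lower order. Summing $4W_{l-1}T_l$ with $T_l\le h_l+S$, $h_l=2^l$, $|U_l|=\lceil 2^{\alpha l}\rceil$ and splitting $W_l$ into its three parts, the action and client parts reproduce the first two DPE terms. For the privacy part $\sigma_{n,l}\propto 2^{-\alpha l}$, so the $h_l$-proportional terms form a geometric series of ratio $2^{1-\alpha}>1$ dominated by the last phase ($2^L\asymp T$), giving $O\!\big(Bd^{3/2}\ln(d/\delta)T^{1-\alpha}\sqrt{\log(kT)}/\epsilon\big)$, whereas the $S$-proportional rounding terms form a convergent series (ratio $2^{-\alpha}<1$), giving the $T$-independent term $O\!\big(Bd^{5/2}\ln(d/\delta)\sqrt{\log(kT)}/\epsilon\big)$. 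For the bit count, in phase $l$ each of the $|U_l|$ clients sends $g+b=O(\sqrt{|U_l|}\,\mathrm{polylog})$ bits per coordinate over $s_l\le S=O(d)$ coordinates, so $C(T)=\sum_l|U_l|\,s_l\,(g+b)=O(d)\sum_l 2^{3\alpha l/2}=O(dT^{3\alpha/2})$ bits.

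\textbf{Main obstacle.} The delicate block is the certification step: rigorously showing that the binomial-mechanism vector-summation output concentrates like a Gaussian with per-coordinate standard deviation $\sigma_{ns}$ exhibiting the amplified $1/|U_l|$ scaling, and that this survives the simultaneous coordinate-wise fixed-point encoding and bit-level accounting. Once $\sigma_{ns}$ and the $\sqrt{Sd}$ propagation factor are established, the regret summation and the communication count are routine geometric-series bookkeeping shared with Theorems~\ref{thm:regret_approx}--\ref{thm:regret_ldp}.
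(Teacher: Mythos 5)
Your proposal follows the paper's proof essentially step for step: it certifies the confidence width by combining the shuffle protocol's unbiasedness and sub-Gaussian error of variance $O\!\left(B^2 s_l\ln^2(s_l/\delta)/(\epsilon^2|U_l|^2)\right)$ (the paper's Theorem~\ref{thm:vector_shuffle}) with the $G$-optimal-design propagation bound $\sum_x T_l(x)^2\langle a, V_l^{-1}x\rangle^2\le 4Sd$, and then reuses the phased-elimination regret summation and the $(g+b)\,s_l\,|U_l|$ bit count exactly as in Appendix~\ref{app:proof_of_regret}. The only cosmetic deviations are that the paper's union bound yields $3\beta$ per tail (three terms in $W_l$) rather than $2\beta$, and it controls the propagation sum via $T_l(x)\le T_l$ rather than your Cauchy--Schwarz route, neither of which changes the result.
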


For all Theorems~\ref{thm:regret_cdp}, \ref{thm:regret_ldp}, and \ref{thm:regret_sdp}, the first term is from action-related uncertainty, the second term is due to client-related uncertainty, and the third and forth terms are introduced by privacy guarantee.
We provide the detailed proofs of Theorems~\ref{thm:regret_cdp}, \ref{thm:regret_ldp}, and \ref{thm:regret_sdp} in Appendix~\ref{app:proof_of_regret} and make the following remarks. 

\begin{remark} [Privacy ``for-free''] \label{rmk:free_privacy}
	Comparing the above results with Theorem~\ref{thm:regret_approx} for the non-private case, we observe that the DP-DPE algorithm enables us to achieve privacy guarantees ``for free" in the central and shuffle DP models, in the sense that the additional regret due to privacy protection is only a lower-order additive term. Essentially, this is because the uncertainty introduced by privacy noise is dominated by the client-related uncertainty, which can be captured by our carefully designed confidence width $W_l$ in Eq.~(\ref{eq:W_l_DPE}) and our choice of $\sigma_n$ for different \textsc{Privatizer}s. See more discussions on achieving privacy ``for-free'' in Section~\ref{sec:forfree}.
\end{remark}

\begin{remark}[\High{Regret-privacy tradeoff}]	\label{rmk:tradeoff_shuffle}
\High{Consider the regret due to privacy protection by comparing the regret performance column in Table~1 of all the DP-DPE algorithms. We can see an additional term in regret performance associated with each DP-DPE algorithm. Specifically, while the local DP model ensures a stronger privacy guarantee compared to the central DP model, it introduces an additional regret of $\Tilde{O}(T^{1-\alpha/2})$  compared to $\Tilde{O}(T^{1-\alpha})$ in the central DP model.} 
The shuffle DP model, however, leads to a much better tradeoff between regret and privacy, achieving nearly the same regret guarantee as the central DP model, yet assuming a similar trust model to the local DP model (i.e., without a trustworthy central server).
\end{remark}

\begin{remark}[Communication cost]\label{rmk:communication_cost}
	Both CDP-DPE and LDP-DPE consume the same amount of communication resources as the non-private DP-DPE algorithm, measured by the number of real numbers \cite{wang2019distributed}. In contrast, SDP-DPE relies only on binary feedback from the clients, and thus, the communication cost is measured by the number of bits. It is worth noting that sending messages consisting of real numbers could be difficult in practice on finite computers~\cite{canonne2020discrete, kairouz2021distributed}, and hence in this case, it is desirable to use SDP-DPE, which incurs a communication cost of $O(dT^{3\alpha/2})$ bits.  
\end{remark}
\High{
\begin{remark}[Pure DP extension]\label{rmk:pure-dp} While
we use the Gaussian mechanism to ensure approximate DP (i.e., $(\epsilon, \delta)$-DP), we claim that our proposed scheme in this paper can be effectively integrated with the Laplace mechanism, which ensures a pure DP and achieves nearly the same regret performance. We provide how to modify the algorithm and derive the theoretical results for the Laplace mechanism in Appendix~\ref{app:laplace}.
\end{remark}
}
\section{Numerical Results}\label{sec:simulation}
\begin{figure*}[t!] 
	\subfigure[]{
	\label{fig:impact_of_epsilon}
	\includegraphics[width=0.3\textwidth]{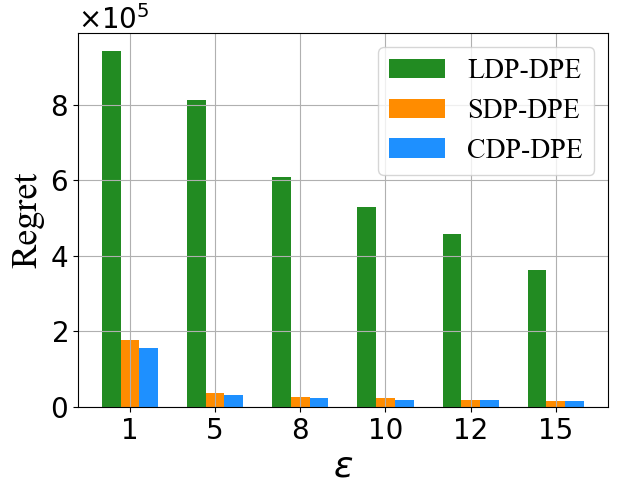}}
	\quad
	\subfigure[]{
	\label{fig:dp_performance}
	\includegraphics[width=0.3\textwidth]{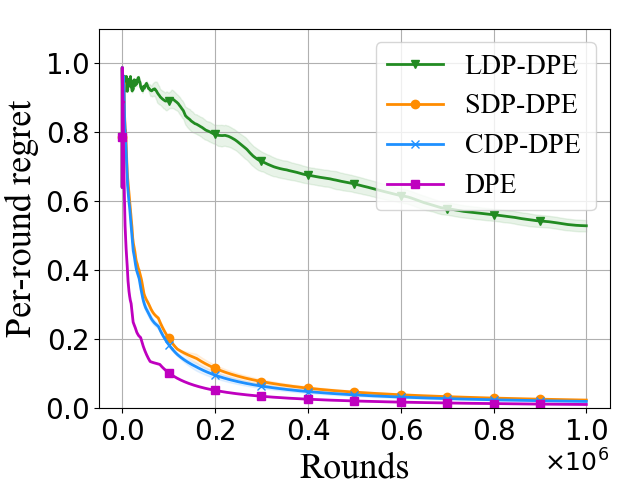}}
	\quad
	\subfigure[]{
	\label{fig:non_private}
	\includegraphics[width=0.3\textwidth]{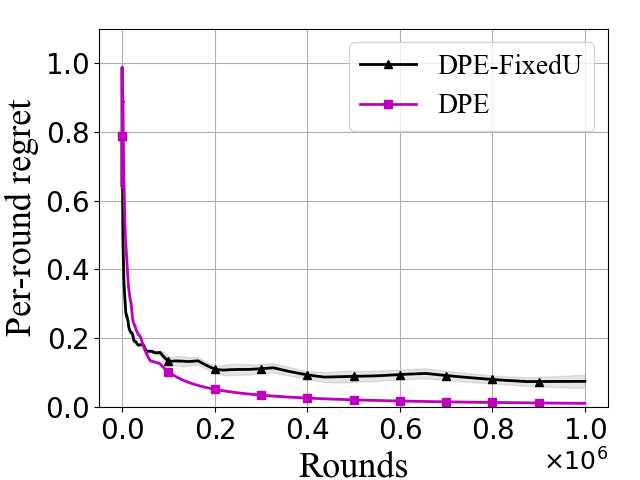}}
	\caption{Performance evaluation of DP-DPE. The shaded area indicates the standard deviation. (a) Final cumulative regret vs. the privacy budget $\epsilon$. (b) Per-round regret vs. time with privacy parameters $\epsilon=10$ and $\delta=0.1$. (c) Comparison between two non-private algorithms. \High{Here, we choose the number of clients in DPE-FixedU to be $U=97$ based on the calculation.}
 }
\end{figure*}

In this section, we conduct simulations to evaluate the performance of DP-DPE. The detailed setting of our simulations is as follows: $d=20, k=10^3, \sigma=0.1, |\cU|=10^5, \alpha=0.8$, and $T=10^6$. We perform $20$ independent runs for each set of simulations.

First, we study the regret performance of DP-DPE under different DP models. Recall that we use CDP-DPE, LDP-DPE, and SDP-DPE to denote DP-DPE in the central, local, and shuffle DP models, respectively. In Fig.~\ref{fig:impact_of_epsilon}, we present the cumulative regret at the end of $T$ rounds for the three algorithms under different values of privacy budget $\epsilon$. We can observe an obvious tradeoff between the privacy budget and the regret performance for all the DP models: the cumulative regret decreases as the privacy requirement becomes less stringent (i.e., a larger $\epsilon$). In addition, it also reflects the regret-privacy tradeoff across different DP models. That is, with the same privacy budget $\epsilon$, 
while LDP-DPE has the largest regret yet without requiring the clients to trust anyone else (neither the server nor a third party), CDP-DPE achieves the smallest regret but relies on the assumption that the clients trust the server. Interestingly, SDP-DPE achieves a regret fairly close to that of CDP-DPE, yet without the need to trust the server. This is well aligned with our theoretical results that SDP-DPE achieves a better regret-privacy tradeoff.

In addition, we are also interested in the regret loss due to privacy protection and how efficiently DP-DPE performs the global bandit learning. Fix the privacy parameters $\epsilon=10$ and $\delta=0.25$. In Fig.~\ref{fig:dp_performance}, we plot how the per-round regret of the three algorithms (i.e., CDP-DPE, LDP-DPE, and SDP-DPE) varies over time compared to the non-private DP-DPE algorithm (i.e., DPE). We observe that LDP-DPE incurs the largest regret while ensuring the strongest privacy guarantee (i.e., $(\epsilon,\delta)$-LDP). On the other hand, the regret performance of CDP-DPE and SDP-DPE is very close to that of DPE (that does not ensure any privacy guarantee), under the assumption of a trusted central server and a trusted third party shuffler, respectively. This observation, along with our theoretical results, shows that DP-DPE can indeed achieve privacy ``for-free'' under the central and shuffle models.

\begin{figure}[t!] 
\centering
	\includegraphics[width=0.4\textwidth]{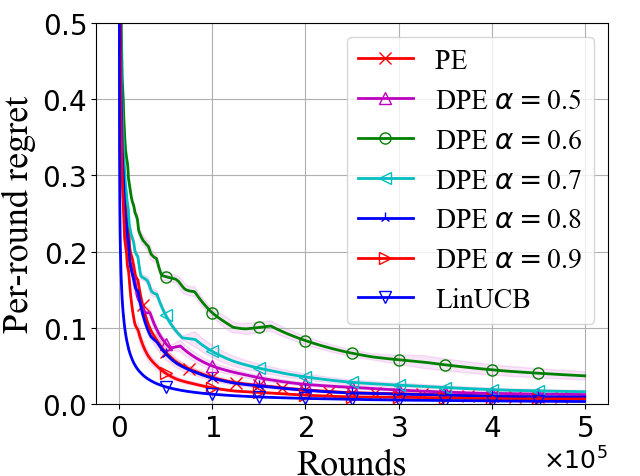}
	\caption{LinUCB vs. PE vs DPE with different values of $\alpha$.
	}
	\label{fig:Alg_compare}  
\end{figure}

\begin{table*}[!t]
\caption{Comparison of communication cost under LinUCB and PE with different values of $\alpha$.}
\label{tab:vs_LinUCB_PE}
\renewcommand\arraystretch{0.8}
\centering
	\scalebox{0.9}{
\begin{tabular}{c|c|c|c|c|c|c|c} 
    \toprule
    \multirow{2}*{Algorithms}  & \texttt{DPE} & \texttt{DPE} & \texttt{DPE} & \texttt{DPE} & \texttt{DPE} & \multirow{2}*{\texttt{LinUCB}} & \multirow{2}*{\texttt{PE}}\\ 
    ~  & $\alpha=0.5$& $\alpha=0.6$& $\alpha=0.7$& $\alpha=0.8$ & $\alpha=0.9$&~&~ \\
    \hline
     \multirow{2}*{Communication cost ($\times 10^4$)}
      & \multirow{2}*{$0.70$} & \multirow{2}*{$0.81$} & \multirow{2}*{$1.05$} & \multirow{2}*{$1.69$}  &
     \multirow{2}*{$3.27$}&
     \multirow{2}*{$5.00$} & \multirow{2}*{$5.00$}\\
   & ~ & ~   & ~& ~ & ~& ~ & ~\\
   \hline
   \multirow{2}*{\# of participating users ($\times 10^4$)}  &
     \multirow{2}*{$0.04$}   & \multirow{2}*{$0.10$} & \multirow{2}*{$0.23$} & \multirow{2}*{$0.55$} & \multirow{2}*{$1.34$}  &
     \multirow{2}*{$5.00$}&
     \multirow{2}*{$5.00$} \\
   & ~ & ~   & ~& ~ & ~& ~ & ~ \\
      \bottomrule
\end{tabular}
}
\end{table*}
Regarding the communication efficiency of our proposed algorithm, we also show that the exponentially-increasing client-sampling plays a key role in balancing the regret and the communication cost. To this end, we compare DPE 
with another non-private algorithm, called DPE-FixedU in Fig.~\ref{fig:non_private}.
DPE-FixedU is similar to DPE but samples only a fixed number $U$ of participating clients in each phase (i.e., the participating clients are different, but the number of clients in each phase is fixed, in contrast to our increasing sampling schedule). For a fair comparison, we choose the value of $U$ such that the communication cost is the same under DPE and DPE-FixedU\High{, i.e., $U=\lceil \frac{\sum_{l=1}^L |U_l| \cdot N_l}{\sum_{l=1}^L N_l}\rceil$}. 
The results show that DPE learns much faster than DPE-FixedU while incurring the same communication cost.

\High{Finally, as discussed in Section~\ref{sec:discuss_vs_ucb}, we also compare DPE with the the-state-of-the-art for standard linear bandit problem, i.e. LinUCB and PE, and present the regret comparison in Figure~\ref{fig:Alg_compare} and communication and sample  efficiency in Table~\ref{tab:vs_LinUCB_PE}. The results show that DPE can achieve a regret close to that of (adapted) LinUCB and PE by adjusting sampling parameter $\alpha$ while always consuming less communication cost and involving fewer users.
}
\section{Discussion}
\subsection{On Achieving Privacy ``for Free''}\label{sec:forfree}
\High{Following the remark on privacy ``for-free'' (Remark~\ref{rmk:free_privacy}), in this section, we first study differentially private linear bandits and then draw an interesting connection between bandit online learning and supervised learning.}
\subsubsection{Differentially Private Linear Bandits}\label{sec:dplb}
Motivated by the cellular configuration problem, we consider the distributed linear bandits with partial feedback in the main content and propose the DP-DPE algorithmic framework to address the newly introduced challenges. However, we highlight that our developed techniques with minor modifications can also achieve similar results in terms of regret and privacy for the standard linear bandits, where there is no client-related uncertainty ($\sigma=0$), 
i.e., $\theta_u = \theta^*$ in our notations. That is, we can design differentially private linear bandits where one can also achieve privacy ``for free" in the central and shuffle DP models (similar to Remarks~\ref{rmk:free_privacy}). We list the corresponding results in Table~\ref{tab:performance_lb}.
This might be of independent interest to the bandit learning community. We provide the detailed description of differentially private linear bandits in Appendix~\ref{app:dplb}. 

\begin{table*}[t]
\centering
\caption{Summary of the DP algorithms for the standard linear bandits}
\label{tab:performance_lb}
		\scalebox{0.95}{
	\begin{tabular}{ccc}
	\toprule
	Algorithm\footnotemark & Regret  & Privacy\\
	\midrule
	PE \cite{lattimore2020bandit}& $O\left( \sqrt{dT\log (kT)}\right)$   & None  \\
	CDP-PE & $ O\left( \sqrt{dT\log (kT)} + \frac{Bd^{3/2}\log(T)\sqrt{\ln(1/\delta)\log(kT)}}{\epsilon} \right)$  &  $(\epsilon, \delta)$-DP\\
	LDP-PE& $O(\sqrt{dT\log (kT)} +O\left( \frac{Bd^{3/2}\sqrt{\ln(1/\delta)T\log(kT)}}{\epsilon} \right)$ & $(\epsilon, \delta)$-LDP\\
	SDP-PE & $ O\left(\sqrt{dT\log (kT)} + \frac{Bd^{3/2}\log(T)\sqrt{\ln(1/\delta)\log(kT)}}{\epsilon} \right)$ &$(\epsilon, \delta)$-SDP\\
	\bottomrule
	\multicolumn{3}{l}{\footnotesize $^{10}$PE is the (non-private) phased elimination algorithm in \cite{lattimore2020bandit}; CDP-PE, LDP-PE, and SDP-PE represent } \\ 
	\multicolumn{3}{l}{\footnotesize the designed DP algorithms in the central, local, and shuffle models, respectively, which guarantee $(\epsilon, \delta)$-DP, }\\
 \multicolumn{3}{l}{\footnotesize $(\epsilon, \delta)$-LDP, and $(\epsilon, \delta)$-SDP, respectively.} \\ 
	\end{tabular}
		}
\end{table*}

\begin{remark}
\label{rem:context-lcb}
    We can achieve the above ``for-free'' results because the sensitive information in linear bandits are only rewards, which is in sharp contrast to linear \emph{contextual} bandits where both contexts and rewards need to be protected. In this case, the best known private regrets in the central, local and shuffle model are $\tilde{O}({\frac{\sqrt{T}}{\sqrt{\epsilon}}})$~\cite{shariff2018differentially}, $\tilde{O}({\frac{T^{3/4}}{\sqrt{\epsilon}}})$~\cite{zheng2020locally}, and $\tilde{O}(\frac{T^{3/5}}{\epsilon^{2/5}})$~\cite{chowdhury2022shuffle}, respectively.
\end{remark}

\subsubsection{Connection with Supervised Learning}
In addition, we draw an interesting connection of our novel bandit online learning problem to private (distributed) supervised learning problems, through which we provide more intuition on why DP-DPE can achieve privacy ``for free''. In particular, we compare our problem with differentially private stochastic convex optimization (DP-SCO) \cite{bassily2019private}, where the goal is to approximately minimize the population loss\footnote{The population loss for a solution $w$ is given by $\cL(w) \triangleq \mathbb{E}_{z \in \cD}[l(w,z)]$, where $w$ is the chosen solution (e.g., weights of a classifier), $z$ is a testing sample from the population distribution $\cD$, and $l$ is a convex loss function of $w$.} over convex and Lipschitz loss functions given $n$ \emph{i.i.d.} $d$-dimensional samples from a population distribution while protecting privacy under different trust models.  More specifically, via noisy stochastic gradient descent (SGD), the excess losses\footnote{The excess loss measures the gap between the chosen solution and the optimal solution in terms of the population loss. That is, the excess loss of $w$ is given by $\cL(w) - \min_{w' \in \cW} \cL(w')$, where $w$ is often the minimizer of the \emph{Empirical Risk Minimization (ERM)} problem: $\hat{\cL}(w) \triangleq \frac{1}{n}\sum_{i=1}^n l(w,z_i)$, where $\{z_i\}_{i=1}^n$ are \emph{i.i.d.} samples from the population distribution. To find the minimizer of ERM, we often resort to SGD.} in DP-SCO under various trust models are roughly as follows:
\begin{align}
	&\text{Central and Shuffle Model~\cite{bassily2019private,cheu2021shuffle}:} \quad \widetilde{O}\left(\frac{1}{\sqrt{n}} + \frac{\sqrt{d}}{n\epsilon}\right), \label{eq:sco_cs}\\
	&\text{Local Model~\cite{duchi2018minimax}:} \quad \widetilde{O}\left(\frac{1}{\sqrt{n}} + \frac{\sqrt{d}}{\sqrt{n}\epsilon}\right) \label{eq:sco_l}.
\end{align}
Recall our main results in Table~\ref{tab:performance_summary} as follows (ignoring all the logarithmic terms for clarity): 
\begin{align}
&\text{Central and Shuffle Model:} \quad \widetilde{O}\left( T^{1-\alpha/2} + \frac{d^{3/2} T^{1-\alpha}}{\epsilon}\right), \label{eq:dpe_cs}\\
	&\text{Local Model:} \quad \widetilde{O}\left( T^{1-\alpha/2} + \frac{d^{3/2} T^{1-\alpha/2}}{\epsilon}\right)\label{eq:dpe_l}.
\end{align}
Now, one can easily see that in both problems, privacy protection is achieved ``for free'' in the central and shuffle models, in the sense that the second term (i.e., the additional privacy-dependent term) is a lower-order term (with respect to $n$ or $T$) compared to the first term in both Eqs.~(\ref{eq:sco_cs}) and~(\ref{eq:dpe_cs}). On the other hand, under the much stronger local model, in both problems, the additional privacy-dependent term is of the same order as the first term in both Eqs.~(\ref{eq:sco_l}) and~(\ref{eq:dpe_l}).

We tend to believe that the above interesting connection is not a coincidence. Rather, it provides us with a sharp insight into our introduced DP-DLB formulation. In particular, we know that the first term $1/\sqrt{n}$ in DP-SCO comes from standard concentration results, i.e., how independent samples approximate the true population parameter. Similarly, in our problem, the first term $\sqrt{d}T^{1-\alpha/2}$ comes from the concentration due to client sampling, which is used to approximate the true unknown population parameter $\theta^*$. On the other hand, the second term in DP-SCO is privacy-dependent and comes from the average of noisy gradients.  Similarly, in our problem, the second term is due to the average of the local reward vectors with added noise for preserving privacy.

In addition to these useful insights, we believe that this interesting connection also opens the door to a series of important future research directions, in which one can leverage recent advances in DP-SCO to improve our main results (dependence on $d$, communication efficiency, etc.).

\subsection{Comparison with the-State-of-the-Art}\label{sec:discuss_vs_ucb}
\High{Some perceptive readers might think reducing the model to a problem where each user $u$ can observe \emph{i.i.d.} rewards with mean $\langle \theta^*, x\rangle$ by treating $\langle \theta_u - \theta^*, x\rangle$ as an additional noise to $\eta_t$. In this case, we may solve our problem with the existing solutions to the traditional linear bandits. However, they exhibit the following significant limitations. 

Note that the uncertainty introduced by the additional noise has to be addressed by sampling enough clients, e.g., one client per round. Considering DP, this problem essentially reduces to the differential private linear bandit (also discussed in our Section~\ref{sec:dplb}) with a larger noise variance, where the same results in terms of regret (order-wise) and privacy can be achieved. 
However, one new user is sampled in each round to collect reward observation, which requires exactly $T$ users in total to obtain the desired regret while ensuring the privacy guarantee. Instead, the DP-DPE framework in this work provides an approach where it collects feedback from multiple clients for the selected action in each round while each client serves for multiple rounds to maintain (or improve) sample efficiency. Specifically, it samples $\lceil 2^{\alpha l}\rceil$ clients for $2^l$ plays (rounds in the $l$-th phase), which is $O(T^{\alpha})$ users in total. In addition, by only collecting feedback after preprocessing reward observations at the end of each phase, this carefully designed DP-DPE algorithmic framework reduces the communication cost from exactly $T$ to $O(dT^{\alpha})$. We have to mention that choosing $\alpha<1$, however, will incur a larger privacy cost (see Table~\ref{tab:performance_summary}
). Therefore, there is a tradeoff between the regret penalty due to privacy and the communication and sampling efficiency, which can be balanced by tuning $\alpha$ properly. Meanwhile, we run simulations of the non-private algorithms: DPE, LinUCB in \cite{li2010contextual}, and PE in \cite{lattimore2020bandit}, and present the results in Figure~\ref{fig:Alg_compare} and Table~\ref{tab:vs_LinUCB_PE}. The results show that DPE can achieve similar regret performance (by adjusting parameter $\alpha$) to LinUCB and PE while improving user-sampling efficiency and communication efficiency significantly for each $\alpha\in (0,1)$. }

\subsection{Extensions to Non-linear Bandits}
In this work, we study the problem of global reward maximization with distribution feedback in the stochastic linear bandit model where \emph{direct reward observations} are not available.
Note that the same challenge (i.e., no direct /partial reward feedback) could also exist in other general bandit models, e.g., generalized linear bandits and kernelized bandits. 
We believe our algorithmic framework incorporating different DP models can be extended through careful accommodation for the parametric generalized linear bandits. Specifically, one may refer to \cite{filippi2010parametric} to update the estimator of $\tilde{\theta}_l$ and the confidence width $W_l$ for the upper/lower confidence bound (UCB/LCB) of each active arm used in the elimination rule in any particular phase $l$.  
However, our algorithmic framework may not be extended directly to the non-parametric kernelized bandits. We study the new challenges and present the solutions in our recent paper \cite{liprivate}.

\section{Related Work}\label{sec:related_work} 
The bandit models (including linear bandits) and their variants have proven to be useful for many real-world applications and have been extensively studied~(see, e.g., \cite{bubeck2012regret,slivkins2019introduction,lattimore2020bandit} and references therein).
Most of the existing studies assume that the exact reward feedback is available to the learning agent for updating the model. However, there is a key difference in the new linear bandit setting we consider: while an action is taken at a central server, it influences a large population of users that contribute to the global reward, which, unfortunately, is not fully observable. Instead, one can learn the global model at the server by randomly sampling a subset of users from the population and iteratively aggregating such partial distributed feedback. While this setting shares some similarities with distributed bandits, federated bandits, and multi-agent cooperative bandits,  our motivation and model are very different from theirs, which leads to different regret definitions (global regret vs. group regret; see Section~\ref{sec:problem_formulation}) and algorithmic solutions.
In the following, we discuss the most relevant work in the literature and highlight the key differences.

\smallskip

\textbf{Linear bandits.}
While the stochastic multi-armed bandits (MAB) model has been extensively studied for a wide range of applications, its modeling power is limited by the assumption that actions are independent. In contrast, the linear bandit model captures the correlation among actions via an unknown parameter~\cite{dani2008stochastic, rusmevichientong2010linearly, abbasi2011improved}. 
The best-known regret upper bound for stochastic linear bandits is $
O(d\sqrt{T\log(T)})$ in \cite{abbasi2011improved}, 
which holds for an almost arbitrary, even infinite, bounded subset of a finite-dimensional vector space.
For a special setting where the set of actions is finite and does not change over time, it is shown in \cite{lattimore2020bandit} that a \emph{phased elimination with G-optimal exploration} algorithm guarantees a regret upper bounded by $O(\sqrt{dT\log(kT)})$. This new bound is better by a factor of $\sqrt{d}$, which deserves the effort when $d\geq \log (k)$. 
However, none of these studies consider the scenario where an action influences a large population and the exact reward feedback is unavailable, which is a key challenge in our problem. Note that the linear bandits model we consider is different from the contextual linear bandits in \cite{li2010contextual, chu2011contextual} where the parameter is not shared by actions (although assuming linear reward function), and thus, the actions are not correlated through the parameter. 
\smallskip

\textbf{Differentially private online learning and bandits.} Since proposed in\cite{dwork2006calibrating}, differential privacy (DP) has become the \emph{de facto} privacy preserving model in many applications, including online learning \cite{jain2012differentially} and bandits problems \cite{mishra2015nearly}. Specifically, 
in \cite{tossou2016algorithms,ren2020multi,tenenbaum2021differentially}, MAB has been studied in the central, local, and shuffle DP models, respectively. \High{We refer interested readers to~\cite{chowdhury2022distributed} for state-of-the-art results on private MABs under all three models.}
In \cite{shariff2018differentially}, the authors explore DP in contextual linear bandits and introduce joint DP as ensuring the standard DP incurs a linear regret. As stronger privacy protection, local DP is also studied for contextual linear bandits \cite{zheng2020locally} and Bayesian optimization~\cite{zhou2020local}. Very recently, shuffle model for linear \emph{contextual} bandits have been studied in~\cite{chowdhury2022shuffle}. As already highlighted in Remark~\ref{rem:context-lcb}, the additional protection of context information leads to a higher cost of privacy compared to linear bandits considered in our paper, where only rewards are private information. \High{One concurrent work \cite{hanna2022differentially} with our conference paper study the standard linear bandits in all the three DP models as ours while ensuring pure DP. However, different from the unified algorithmic framework in this paper,  their algorithms in different DP models are independently designed, and  their shuffle model requires the shuffler to do more than shuffling.}

\smallskip
\textbf{Distributed bandits.}
Another line of related work is on multi-agent collaborative learning in the distributed bandits setting \cite{agarwal2021multi, cesa2016delay, martinez2019decentralized, dubey2020kernel,
dubey2020cooperative, wang2019distributed}. 
The most relevant work to ours is the distributed linear bandit problem studied in \cite{wang2019distributed}. Similarly, they design a distributed phased elimination algorithm where a central server aggregates data provided by the local clients and iteratively eliminates suboptimal actions. However, there are two key differences: 
i) they consider the standard group regret minimization problem with homogeneous clients that have the same unknown parameter; ii) the clients send the rewards to the central server without any data privacy protection.  

\smallskip
\textbf{Federated bandits.}  
Federated learning (FL) has received substantial attention since its introduction in \cite{mcmahan2017communication}. The main idea of FL is to enable collaborative learning among heterogeneous devices while preserving data privacy.
Very recently, bandit problems have also been studied in the federated setting, where the underlying problem is a bandit one, including \emph{federated multi-armed bandits}~\cite{shi2021fmab, shi2021federated, zhu2021federated}, 
\emph{federated linear bandits}  \cite{dubey2020differentially,huang2021federated}, and \emph{federated Bayesian optimization} \cite{dai2020federated}. 
Among all the above work, the two most relevant studies are \cite{huang2021federated} and \cite{dubey2020differentially}. While they both consider the case where all heterogeneous users share the same unknown parameter with heterogeneous decision sets,
in our problem of global reward maximization, the users have heterogeneous unknown local parameters.

In addition to the differences in model and problem formulation, we also highlight our main technical contributions compared to these works in the following. 
While a phased elimination algorithm is also employed in \cite{huang2021federated}, there are two key differences: i) They do not consider the correlation among the actions. That is, the linear bandits setting plays a different role in their work. Specifically, they consider a linear reward for contextual bandits while still studying multi-armed bandits with independent actions, each of which is associated with a distinct parameter vector. Differently, the linear bandits formulation in our work is used to capture the correlation among the actions and can be extended to the case with an infinite number of actions; ii) When aggregating users' data for learning the global parameter, we protect users' data privacy using rigorous differential privacy guarantees, which, however, is not considered in their design. While DP is also employed to protect users' data privacy in \cite{dubey2020differentially}, they require that both the Gram matrix of actions (of size $O(d^2)$) and reward vectors (of size $O(d)$) be periodically communicated using some DP mechanisms (e.g., the Gaussian mechanism). Instead, in our algorithm, only private average local reward for the chosen actions (of size $O(d\log\log d))$ would be communicated in each phase. Moreover, while they only consider a variant of the central DP model, our DP-DPE solution provides a unified algorithmic learning framework, which can be instantiated with different DP models. Specifically, DP-DPE with the shuffle model enables us to achieve a finer regret-privacy-communication tradeoff (see Table~\ref{tab:performance_summary}). That is, not only can it achieve nearly the same regret performance as the central model (yet without trusting the central server), but it requires the users to report feedback in bits only throughout the learning process.

\smallskip 

\High{Recently, we also extended our setup to the non-linear case by considering kernelized bandits~\cite{liprivate}.} 

\High{Despite the above work regarding federated bandits,} one may wonder whether we can follow the idea of federated learning to share clients' locally learned model parameters only. This way, one can avoid sharing raw data, which is another way of protecting clients' data privacy. However, we argue that the additional benefit is marginal. On the one hand, by employing different DP mechanisms, our proposed DP-DPE algorithms already ensure provable privacy guarantees. On the other hand, the communication cost of transmitting the (private) average rewards is nearly the same as that of transmitting the local model parameters. Specifically, in each phase, a client in our DP-DPE algorithm needs to send a $|\text{supp}(\pi_l)|$-dimensional vector in DP-DPE, compared to a $d$-dimensional vector when sending the local model parameters. Therefore, the difference is marginal since we have $|\text{supp}(\pi_l)|\leq 4d\log\log d + 16$. 

\smallskip
\High{
\textbf{Reinforcement learning.} Note that reinforcement learning (RL) \cite{agarwal2019reinforcement} is a generalization of bandits with a distinct new feature -- the agent's actions not only yield immediate rewards but also influence the environment's future state(s). In other words,  bandits is a special and simple case of RL  where the horizon length of each episode is one, and hence, the action will not impact the state for the next step as the episode just restarts. In this sense, our study in bandits (dealing with a stateless environment) could shed light on distributed RL, including efficient communication design and differentially private algorithmic framework design, which might be of independent interest to the RL community.  
}

\section{Conclusion} \label{sec:conclusion}
In this paper, we studied a new problem of global reward maximization with partial distributed feedback. This problem is motivated by several practical applications where the expected reward of an action represents the overall performance over a large population. In such scenarios, it is often difficult, if not impossible, to collect exact reward feedback. To that end, we proposed a differentially private distributed linear bandits formulation, where the learning agent samples clients and interacts with them by iteratively aggregating such partial distributed feedback in a privacy-preserving fashion. 
We then developed a unified algorithmic learning framework, called DP-DPE, which can be naturally integrated with different DP models, and systematically established the regret-communication-privacy tradeoff.

In this work, we assumed that actions are correlated through a common linear function with parameter $\theta^*$. One interesting direction for future work is to extend linear functions 
to general (possibly non-convex) functions via kernelized bandits. Moreover, our current privacy guarantee under the shuffle model is only approximated DP. One promising future direction is to explore pure DP in the shuffle model by building upon the recent advance in MAB~\cite{chowdhury2022distributed}.   
Finally, our work also raises several interesting questions that are worth investigating. For example, can we further improve the communication efficiency by using advanced shuffle protocols?  Can we generalize our formulation to studying reinforcement learning problems?


\printbibliography 
\newpage

\section{Proofs and Supplementary Materials for Section~\ref{sec:DP-instantiation}}\label{app:proof_privacy}
In this section, we provide the proof of the theorems in Section~\ref{sec:DP-instantiation}  to show that DP-DPE is differentially  private  under different models. Before considering different models, we describe the Gaussian Mechanism in the differential privacy literature. 

Let $f: \mathcal{U} \to \R^s$ be a vector-valued function operating on databases. The $\ell_2$-sensitivity of $f$, denoted $\Delta_2f$ is the maximum over all pairs $\mathcal{U}, \mathcal{U}'$ of neighboring datasets of $\Vert f(\mathcal{U})-f(\mathcal{U'})\Vert$. The Gaussian mechanism adds independent noise drawn from a Gaussian with mean zero and standard deviation slightly greater than $\Delta_2f\sqrt{\ln(1/\delta)}/\epsilon$ to each element of its output \cite{dwork2014analyze}.
\begin{theorem}\label{thm:gaussian_mech}
	({The Gaussian Mechanism} \cite{dwork2014algorithmic}).  
	Given any vector-valued function $f: \mathcal{U}^* \to \R^s$, define $\Delta_2\triangleq \max_{\mathcal{U}, \mathcal{U}' \in \mathcal{U}^*} \Vert f(\mathcal{U})-f(\mathcal{U'})\Vert_2$. Let $\sigma = \Delta_2 \sqrt{2\ln (1.25/\delta)}/\epsilon$. The Gaussian mechanism, which adds independently drawn random noise from $\mathcal{N}(0,\sigma^2)$ to each output of $f(\cdot)$, i.e. returning $f(\cU)+(\gamma_1, \dots, \gamma_s)$ with $\gamma_j \overset{\text{i.i.d.}}{\sim} \mathcal{N}(0,\sigma^2)$, ensures $(\epsilon, \delta)$-DP.
\end{theorem}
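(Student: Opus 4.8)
\emph{Proof proposal.} The plan is to carry out the classical privacy-loss analysis of the Gaussian mechanism. By the definition of $(\epsilon,\delta)$-DP, it suffices to show that for every pair of neighboring datasets $\mathcal{U},\mathcal{U}'$ (so that $\Vert f(\mathcal{U})-f(\mathcal{U}')\Vert_2 \leq \Delta_2$) and every measurable output set $Z$, the mechanism $M(\cdot)=f(\cdot)+(\gamma_1,\dots,\gamma_s)$ with $\gamma_j\sim\mathcal{N}(0,\sigma^2)$ satisfies $\mathbb{P}[M(\mathcal{U})\in Z]\leq e^{\epsilon}\mathbb{P}[M(\mathcal{U}')\in Z]+\delta$. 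The central object is the \emph{privacy-loss} random variable $L(x)\triangleq \ln\!\big(p_{\mathcal{U}}(x)/p_{\mathcal{U}'}(x)\big)$, where $p_{\mathcal{U}},p_{\mathcal{U}'}$ are the densities of $M(\mathcal{U}),M(\mathcal{U}')$. First I would record the standard reduction: if $\mathbb{P}_{x\sim M(\mathcal{U})}[L(x)>\epsilon]\leq\delta$, then $M$ is $(\epsilon,\delta)$-DP. This follows by splitting any $Z$ at the threshold $\{L\leq\epsilon\}$: on the good part the likelihood ratio is at most $e^{\epsilon}$, while the bad part $\{L>\epsilon\}$ contributes at most $\delta$.

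Next I would compute the law of $L$. Writing $x=f(\mathcal{U})+\zeta$ with $\zeta\sim\mathcal{N}(0,\sigma^2 I_s)$ and $v\triangleq f(\mathcal{U})-f(\mathcal{U}')$, substituting the Gaussian densities gives
\[
L = \frac{1}{2\sigma^2}\Big(\Vert\zeta+v\Vert_2^2-\Vert\zeta\Vert_2^2\Big) = \frac{1}{2\sigma^2}\Big(2\langle\zeta,v\rangle+\Vert v\Vert_2^2\Big).
\]
Since $\langle\zeta,v\rangle\sim\mathcal{N}(0,\sigma^2\Vert v\Vert_2^2)$, the privacy loss is itself Gaussian with mean $\Vert v\Vert_2^2/(2\sigma^2)$ and variance $\Vert v\Vert_2^2/\sigma^2$. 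Crucially it depends on $v$ only through $\Vert v\Vert_2\leq\Delta_2$, so the $s$-dimensional problem collapses to a one-dimensional tail estimate, and the worst case is $\Vert v\Vert_2=\Delta_2$.

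The technical heart is bounding $\mathbb{P}[L>\epsilon]$. Substituting the prescribed $\sigma=\Delta_2\sqrt{2\ln(1.25/\delta)}/\epsilon$, standardizing $L$, and applying the Gaussian tail bound $\mathbb{P}[\mathcal{N}(0,1)>t]\leq\exp(-t^2/2)$, I would verify algebraically that the resulting expression is at most $\delta$. I expect this to be the main obstacle: the specific constant $1.25$ (and the implicit restriction to small $\epsilon$, i.e.\ $\epsilon\in(0,1)$ in the usual statement) emerge only from a careful, slightly lossy manipulation of the tail inequality in which lower-order terms must be retained rather than discarded. Once $\mathbb{P}[L>\epsilon]\leq\delta$ is established, the reduction from the first paragraph immediately yields the $(\epsilon,\delta)$-DP guarantee.
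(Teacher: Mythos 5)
Your proposal is correct and is precisely the argument behind the cited result: the paper does not prove this theorem but imports it from \cite{dwork2014algorithmic}, whose Appendix~A proof proceeds exactly as you describe --- reduce $(\epsilon,\delta)$-DP to a tail bound on the privacy-loss random variable, observe that for Gaussian noise the loss is itself Gaussian with mean $\Vert v\Vert_2^2/(2\sigma^2)$ and variance $\Vert v\Vert_2^2/\sigma^2$, and extract the constant $1.25$ from a careful tail estimate. You are also right that the standard statement requires $\epsilon\in(0,1)$ for this particular $\sigma$, a hypothesis the paper's restatement silently omits.
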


\subsection{The Central Model}
The \textsc{Privatizer} $\cP$ in the central model adds Gaussian noise to the averaged local performance of each action directly, i.e., at analyzer $\cA$ while doing identity mapping with $\cR$ and $\cS$. That is, 
\begin{equation}
	\begin{aligned}
		\Tilde{y}_l = &
		\cP\left(\{\Vec{y}_{l}^u\}_{u\in  U_l}\right) = \cA\left(\{\Vec{y}_{l}^u\}_{u\in  U_l}\right) \\
		= & \frac{1}{| U_l|}\sum_{u\in  U_l} \Vec{y}_{l}^u+(\gamma_1, \dots, \gamma_{s_l}),    
	\end{aligned}
\end{equation}
where $\gamma_j \overset{\emph{i.i.d.}}{\sim} \mathcal{N}(0,\sigma^2_{nc})$.
\begin{proof}[Proof of Theorem~\ref{thm:cdp}]
	Recall that $\mathcal{U}_T =( U_l)_{l=1}^L\subseteq \mathcal{U}$ represents the sequence of participating clients in the total $L$ phases.  Let $\mathcal{U}_T, \mathcal{U}_T^{\prime} \subseteq \mathcal{U}$ be the sequence of participating clients differing on a single client and $U_l$, $U_l^{\prime}$ be the sets of participating clients in $l$-th phase corresponding to $\mathcal{U}_T$ and $\mathcal{U}_T^{\prime}$ respectively.  Note that the $\ell_2$ sensitivity of the average $\frac{1}{| U_l|}\sum_{u\in  U_l}\Vec{y}_{l}^u$ is
	\begin{equation}
		\begin{aligned}
			\Delta_2
			&=  \max_{\mathcal{U}_T, \mathcal{U}_T^{\prime}} \left\Vert \frac{1}{| U_l|}\sum_{u\in  U_l}\Vec{y}_{l}^u - \frac{1}{| U_l|}\sum_{u\in U^{\prime}_l}\Vec{y}_{l}^u \right\Vert_2\\
			&\leq \frac{1}{| U_l|}\max_{u,u' \in \cU}\Vert \Vec{y}_{l}^u - \Vec{y}_{l}^{u'}\Vert_2 \\
			&\leq \frac{2}{| U_l|}\max_{u\in \cU}\Vert \Vec{y}_l^u\Vert_2
			\leq \frac{2B\sqrt{s_l}}{| U_l|},    
		\end{aligned}
	\end{equation}
	where the last step is because $s_l$ is the dimension of $y_l^u$ and then $\Vert \Vec{y}_l^u \Vert_2^2 \leq {s_l} \Vert y_l^u(x)\Vert_1^2 \leq {s_l} B^2 $.
	
	Let $\sigma_{nc} = \frac{2B\sqrt{2s_l\ln(1.25/\delta)}}{\epsilon | U_l|}$. According to Theorem~\ref{thm:gaussian_mech}, we have that the output $\Tilde{y}_l$ of the \textsc{Privatizer} in the central model is $(\epsilon,\delta)$-DP with respect to $\mathcal{U}_T$.

	Combining the result of Proposition 2.1 in \cite{dwork2014algorithmic}, we derive that DP-DPE with a \textsc{Privatizer} in the central model is $(\epsilon,\delta)$-DP. 
\end{proof}

\subsection{The Local Model}
The \textsc{Privatizer} $\cP$ in the local model applies Gaussian mechanism to the local average performance of each action ($\Vec{y}_{l}^u$) with $\cR$ while doing identity mapping with $\cS$ and pure averaging with $\cA$. That is, 
\begin{equation}
	\Tilde{y}_l = \mathcal{P}\left(\{\Vec{y}_{l}^u\}_{u\in  U_l}\right) = \frac{1}{| U_l|}\sum_{u\in  U_l} \cR(\Vec{y}_{l}^u) = \frac{1}{|U_l|}\sum_{u\in U_l} \left(\Vec{y}_{l}^u+\gamma_u\right), \notag
\end{equation}
where $\gamma_u \overset{\text{\emph{i.i.d.}}}{\sim}  \mathcal{N}(0,\sigma^2_{nl}I_{s_l})$.
\begin{proof}[Proof of Theorem~\ref{thm:ldp}]
	An algorithm is $(\epsilon, \delta)$-LDP if the output of the local randomizer $\cR$ is $(\epsilon, \delta)$-DP. 
	In the local model of \textsc{Privatizer}, we have 
	\begin{equation}
		\cR (\Vec{y}_{l}^u) = \Vec{y}_{l}^u+\gamma_{u}.
	\end{equation}
	For any input of local parameter estimator $\Vec{y}_{l}$, the $\ell_2$ sensitivity is
	\begin{equation}
		\Delta_2
		= \max_{\Vec{y}_{l}, \Vec{y}_{l}^{\prime}} \Vert\Vec{y}_{l} - \Vec{y}_{l}'\Vert_2
		\leq 2\max \Vert \Vec{y}_{l}^u\Vert_2
		\leq 2B\sqrt{s_l}.
	\end{equation}

	Let $\sigma_{nl} = \frac{2B\sqrt{2s_l\ln(1.25/\delta)}}{\epsilon}$. According to Theorem~\ref{thm:gaussian_mech}, we have that the output of the local randomizer $\cR$ of the \textsc{Privatizer} in the local model is $(\epsilon,\delta)$-DP. That is,  the DP-DPE algorithm instantiated with the \textsc{Privatizer} in the local model is $(\epsilon,\delta)$-LDP.
\end{proof}

\subsection{The Shuffle Model} \label{app:shuffle_model}
In the shuffle model, the \textsc{Privatizer} $\cP$ operates under the cooperation of the local randomizer $\cR$ at each client, the shuffler $\cS$, and the analyzer $\cA$ at the central server. First, we present the implementation of each component of the \textsc{Privatizer} $\cP$, including $\cR$, $\cS$, and $\cA$, in the shuffle model in Algorithm~\ref{alg:shuffle_vec}.

\begin{algorithm}[ht]
	\caption{ $\cM:$ $(\epsilon,\delta)$-SDP vector average mechanism for a set $U$ of clients} 
	\label{alg:shuffle_vec}
	\begin{algorithmic}[1]
		\STATE \textbf{Input:} $\{y_u\}_{u\in U}$, where each $y_u\in \R^s$, $\Vert y_u\Vert_2\leq \Delta_2$
		\STATE Let 
		\begin{equation}
			\begin{cases}
				\hat{\epsilon}=\frac{\epsilon}{18\sqrt{\log(2/\delta)}} \\
				g=\max\{\hat{\epsilon}\sqrt{|U|}/(6\sqrt{5\ln{((4s)/\delta)}}), \sqrt{s}, 10\}\\
				b=\lceil \frac{180g^2\ln{(4s/\delta)}}{\hat{\epsilon}^2|U|}\rceil\\
				p = \frac{90g^2\ln{(4s/\delta)}}{b\hat{\epsilon}^2|U|} \label{eq:gbp_set}
			\end{cases}    
		\end{equation}
		\item[] \deemph{// Local Randomizer}
		\item[] \textbf{function} 
		$\cR(y_u)$ \\
		\begin{ALC@g}
			\FOR{coordinate $j\in [s]$} 
			\STATE Shift data to enforce non-negativity: $w_{u,j} = (y_u)_j + \Delta_2, \forall u\in U$
			\item[] //randomizer for each entry
			\STATE Set $\Bar{w}_{u,j} \gets \lfloor w_{u,j}g/(2\Delta_2)\rfloor$  \hfill \deemph{//$\max|(y_u)_j+\Delta_2|\leq 2\Delta_2$}
			\STATE Sample rounding value $\gamma_1 \sim \textbf{Ber}(w_{u,j}g/(2\Delta_2) - \Bar{w}_{u,j})$
			\STATE Sample privacy noise value $\gamma_2 \sim \textbf{Bin}(b,p)$
			\STATE Let $\phi_j^u$ be a multi-set of $(g+b)$ bits associated with the $j$-th coordinate of client $u$, where $\phi_j^u$ consists of $\Bar{w}_{u,j}+\gamma_1+\gamma_2$ copies of 1 and $g+b-(\Bar{w}_{i,j}+\gamma_1+\gamma_2)$ copies of 0
			\ENDFOR
			\STATE Report $\{(j,\phi^u_j)\}_{j\in [s]}$ to the shuffler
		\end{ALC@g}
		\item[] \textbf{end function}
		\item[] \deemph{// Shuffler}
		\item[] \textbf{function} 
		$\mathcal{S}(\{(j, \Vec{\phi}_j)\}_{j\in[s]})$  \quad \deemph{//$\Vec{\phi}_j = (\phi^u_j)_{u\in U}$}\\
		\begin{ALC@g}
			\FOR{each coordinate $j\in [s]$}
			\STATE Shuffle and output all  $(g+b)|U|$ bits in $\Vec{\phi}_j$ 
			\ENDFOR
		\end{ALC@g}
		\item[] \textbf{end function} \\
		\item[] \deemph{// Analyzer}
		\item[] \textbf{function} 
		$\mathcal{A}(\mathcal{S}(\{(j, \Vec{\phi}_j)\}_{j\in[s]})$
		\begin{ALC@g}
			\FOR{coordinate $j\in [s]$}
			\STATE Compute $z_j \gets \frac{2\Delta_2}{g|U|} ((\sum_{i=1}^{(g+b)|U|} (\Vec{\phi}_{j})_i)-b|U|p) $   \quad\quad \deemph{// $(\Vec{\phi})_i$ denotes the $i$-th bit in $\Vec{\phi}_j$}
			\STATE Re-center: $o_j \gets z_j - \Delta_2$
			\ENDFOR
			\STATE  Output the estimator of vector average ${o}=(o_j)_{j\in [s]}$
		\end{ALC@g}
		\textbf{end function}
	\end{algorithmic}
\end{algorithm}

The \textsc{Privatizer} $\cP$ in the shuffle model adds binary bits to  the local average performance of each played action (after being converted to binary representation) at each client, i.e., at local randomizer $\cR$, and then shuffles all bits reported from all participating clients via a shuffler $\cS$ before sending them to the central server, where the analyzer $\cA$ output an unbiased and private estimator of the average of local parameters.   That is, 
\begin{equation}
	\Tilde{y}_l =  \mathcal{P}\left(\{\Vec{y}_l^{u}\}_{u\in U_l}\right)  = \cA(\mathcal{S} (\{\cR(\Vec{y}_l^{u})\}_{u\in U_l})).
\end{equation}

Before proving Theorem~\ref{thm:sdp}, we first show that the shuffle protocol in Algorithm~\ref{alg:shuffle_vec} is $(\epsilon,\delta)$-SDP. 

In this proof, we use $(\cdot)_j$ to denote the $j$-th element of an vector. 
\begin{theorem} \label{thm:vector_shuffle}
	For any $\epsilon \in (0,15), \delta \in (0,1)$, Algorithm~\ref{alg:shuffle_vec} is $(\epsilon,\delta)$-SDP, unbiased, and has error distribution which is sub-Gaussian with variance $\sigma_{ns}^2 = O\left(\frac{\Delta_2\ln(s/\delta)}{\epsilon^2 |U|^2}\right)$ and independent of the inputs.
\end{theorem}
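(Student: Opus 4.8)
The plan is to establish the three assertions of Theorem~\ref{thm:vector_shuffle} separately: unbiasedness and the sub-Gaussian variance bound are direct moment computations on the encoding/noise pipeline, while the $(\epsilon,\delta)$-SDP guarantee is the substantive part and will be reduced to the privacy analysis of the underlying vector-summation shuffle protocol of \cite{cheu2021shuffle}.

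First I would verify unbiasedness by tracing a single coordinate $j$ through $\cR$, $\cS$, and $\cA$. After the shift, each client holds $w_{u,j} = (y_u)_j + \Delta_2 \in [0,2\Delta_2]$ (using $|(y_u)_j|\le \Vert y_u\Vert_2 \le \Delta_2$). The encoding $\bar{w}_{u,j} = \lfloor w_{u,j} g/(2\Delta_2)\rfloor$ together with the rounding bit $\gamma_1\sim\mathrm{Ber}(w_{u,j}g/(2\Delta_2)-\bar{w}_{u,j})$ satisfies $\E[\bar{w}_{u,j}+\gamma_1] = w_{u,j}g/(2\Delta_2)$, so rounding is unbiased, and the binomial noise contributes $\E[\gamma_2]=bp$ per client. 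Summing the reported ones over all $|U|$ clients gives expected count $\sum_u w_{u,j}g/(2\Delta_2) + |U|bp$; the analyzer subtracts the $b|U|p$ bias, rescales by $2\Delta_2/(g|U|)$, and re-centers by $-\Delta_2$, yielding $\E[o_j]=\frac{1}{|U|}\sum_u(w_{u,j}-\Delta_2)=\frac{1}{|U|}\sum_u(y_u)_j$ by linearity of expectation.

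Next I would bound the variance. For fixed $j$ the error $o_j-\frac{1}{|U|}\sum_u(y_u)_j$ is an affine image of the total count of ones, which is a sum of $2|U|$ independent bounded variables (one Bernoulli rounding bit and one $\mathrm{Bin}(b,p)$ draw per client); it is therefore sub-Gaussian, with variance $(2\Delta_2/(g|U|))^2$ times $\sum_u(\mathrm{Var}(\gamma_1)+\mathrm{Var}(\gamma_2))\le |U|(1/4+bp)$. Substituting $bp = 90g^2\ln(4s/\delta)/(\hat\epsilon^2|U|)$ and the lower bound $g^2\ge \hat\epsilon^2|U|/(180\ln(4s/\delta))$ that follows from the parameter settings, both resulting terms collapse to order $\Delta_2^2\ln(s/\delta)/(\hat\epsilon^2|U|^2)$; recalling $\hat\epsilon=\epsilon/(18\sqrt{\log(2/\delta)})$ gives the stated $\sigma_{ns}^2 = O(\Delta_2\ln(s/\delta)/(\epsilon^2|U|^2))$ (up to logarithmic factors in $1/\delta$ and the precise power of $\Delta_2$). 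Independence from the inputs is immediate, since the noise variances do not depend on $\{y_u\}$.

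The hard part will be the privacy guarantee. The key structural fact is that, because the shuffler erases client identities, the analyzer's view of coordinate $j$ depends on the reported bits only through their total count $\sum_u(\bar{w}_{u,j}+\gamma_{1,u}+\gamma_{2,u})$. Replacing one client changes the encoded sum $\sum_u\bar{w}_{u,j}$ by at most $g$ in a single coordinate, while the bound $\Vert y_u\Vert_2\le\Delta_2$ controls the joint change across coordinates; the summed binomial noise $\sum_u\gamma_{2,u}\sim\mathrm{Bin}(b|U|,p)$ then acts as a privacy blanket that, for the chosen $(b,p)$, is close in the relevant divergence to Gaussian noise calibrated to this sensitivity. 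I would verify that the parameter settings — in particular $g\ge\sqrt{s}$ for faithful encoding, $b|U|p = 90g^2\ln(4s/\delta)/\hat\epsilon^2$ for sufficient blanket mass, and $\hat\epsilon=\epsilon/(18\sqrt{\log(2/\delta)})$ — meet the hypotheses of that protocol's privacy theorem, with the $\ln(4s/\delta)$ terms supplying the union bound over the $s$ coordinates; the regime $\epsilon\in(0,15)$ is exactly where the binomial-to-Gaussian approximation bounds of \cite{cheu2021shuffle} are valid. The genuine obstacle is this parameter bookkeeping and the coordinate-wise-to-joint reduction rather than any new probabilistic inequality, so that once the hypotheses are checked, $(\epsilon,\delta)$-SDP of $\cM$ follows directly.
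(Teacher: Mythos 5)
Your proposal follows essentially the same route as the paper's proof: the $(\epsilon,\delta)$-SDP claim is delegated to the privacy guarantee of the vector-summation protocol of \cite{cheu2021shuffle}, and unbiasedness and the sub-Gaussian error bound are obtained by the same per-coordinate moment computation through $\cR$, $\cS$, and $\cA$ (the paper bounds $\mathrm{Var}(\gamma_2)$ via the sub-Gaussian proxy $b/4$ where you use $bp$ directly, but the resulting orders agree). Your parenthetical caveat about the powers of $\Delta_2$ and of the logarithm is warranted: the computation yields $O\left(\Delta_2^2\ln^2(s/\delta)/(\epsilon^2|U|^2)\right)$, and the exponents in the theorem statement (and in the paper's own final display) appear to be typos.
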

\begin{proof}
	The proof for the privacy part follows from the SDP guarantee of vector summation protocol in \cite{cheu2021shuffle}. In the following, we try to show the remaining part of the above theorem.
	
	Consider an arbitrary coordinate $j\in [s]$. Note that the sum of the messages produced by $\cR$ at client $u$ is:  $ \bar{w}_{u,j}+\gamma_1+\gamma_2$. Since $\gamma_1$ is drawn from \texttt{Ber}$(w_{u,j}g/(2\Delta_2)-\Bar{w}_{u,j})$, which has expectation $\E[\gamma_1] = w_{u,j}g/(2\Delta_2)-\Bar{w}_{u,j}$ and is $1/2$-sub-Gaussian according to Hoeffding Lemma. Meanwhile, $\gamma_2\sim \texttt{Bin}(b,p)$ indicates $\E[\gamma_2]=bp$ and $\sqrt{b}/2$-sub-Gaussian.
	Recall that $z_j = \frac{2\Delta_2}{g|U|} ((\sum_{i=1}^{(g+b)U} (\Vec{\phi}_{j})_i)-b|U|p)
	=\frac{2\Delta_2}{g|U|} (\sum_{u\in U} (\bar{w}_{u,j}+\gamma_1+\gamma_2)-b|U|p)$ and  $o_j = z_j - \Delta_2$. We have
	$$
	\begin{aligned}
		&\E[o_j] =\E[z_j-\Delta_2] \\
		=&\E\left[\frac{2\Delta_2}{g|U|} \left(\sum_{u\in U} (\bar{w}_{u,j}+\gamma_1+\gamma_2)-b|U|p\right)-\Delta_2\right]\\
		=&\frac{2\Delta_2}{g|U|} \left(\sum_{u\in U} (\bar{w}_{u,j}+\E[\gamma_1]+\E[\gamma_2])-b|U|p\right)-\Delta_2\\
		=& \frac{2\Delta_2}{g|U|} \left(\sum_{u\in U} \left(\frac{w_{u,j}g}{2\Delta_2}+bp\right) -b|U|p\right)-\Delta_2\\
		= &\frac{1}{|U|} \sum_{u\in U} w_{u,j} -\Delta_2\\
		=& \frac{1}{|U|} \sum_{u\in U}\left( (y_u)_j+\Delta_2\right) -\Delta_2 \\
		=&\frac{1}{|U|} \sum_{u\in U} (y_u)_j,
	\end{aligned}$$
	which indicates that the output is unbiased estimator of the average. In addition, according to the property of sub-Gaussian, we have the output $o_j$ satisfies
	
	$$
	\begin{aligned}
		&\text{Var}[o_j] = \text{Var}[z_j-\Delta_2] \\ = &\text{Var}\left[\frac{2\Delta_2}{g|U|} \left(\sum_{u\in U} (\bar{w}_{u,j}+\gamma_1+\gamma_2)\right)\right]\\
		=& \text{Var}\left[\frac{2\Delta_2}{g|U|} \left(\sum_{u\in U} (\gamma_1 + \gamma_2)\right)\right]\\
		\leq & \frac{4\Delta^2_2}{g^2|U|^2}\left(\frac{|U|}{4}+\frac{b|U|}{4}\right) \\
		\leq & \frac{\Delta^2_2}{g^2|U|^2}\left({|U|}+|U| \cdot (\frac{180g^2\ln{(4s/\delta)}}{\hat{\epsilon}^2|U|}+1)\right) \\
		= & \frac{\Delta^2_2}{g^2|U|^2}\left(2|U|+\frac{180g^2\ln{(4s/\delta)}}{\hat{\epsilon}^2}\right) \\
		\overset{(a)}{\leq } & \frac{180\Delta_2^2 \ln{(4s/\delta)}}{|U|^2\hat{\epsilon}^2} + \frac{180\Delta_2^2 \ln{(4s/\delta)}}{|U|^2\hat{\epsilon}^2} \\
		=  & \frac{360\Delta_2^2 \ln{(4s/\delta)}}{|U|^2\hat{\epsilon}^2} \\
		= &O\left(\frac{\Delta_2^2 \ln^2{(s/\delta)}}{|U|^2\epsilon} \right),
	\end{aligned}
	$$
	where $(a)$ is derived from our choice of $g$ in Eq.~\eqref{eq:gbp_set}. The output $o_j$ is $O\left(\frac{\Delta_2 \ln{(s/\delta)}}{|U|\epsilon}\right)$-sub-Gaussian. Then, the output vector $o=\{o_j\}_{j\in [s]}$ is a $s$-dimensional $O\left(\frac{\Delta_2 \ln{(s/\delta)}}{|U|\epsilon}\right)$-sub-Gaussian vector according to the definition of the sub-Gaussian vector.  
\end{proof}
Now, we are ready to prove Theorem~\ref{thm:sdp}. 
\begin{proof}[Proof of Theorem~\ref{thm:sdp}] From Theorem~\ref{thm:vector_shuffle}, we have the shuffle protocol in Algorithm~\ref{alg:shuffle_vec} guarantees $(\epsilon, \delta)$-SDP with inputs $\{y_u\}_{u\in U}$.  
	In the DP-DPE algorithm, we apply the shuffle protocol in Algorithm~\ref{alg:shuffle_vec} as a \textsc{Privatizer} with inputs $\{\Vec{y}_l^u\}_{u\in U_l}$ and $\Delta_2=\max \Vert \Vec{y}_l^u \Vert_2=B\sqrt{s_l}$ in each phase $l$. By admitting new clients in each phase, the DP-DPE algorithm is $(\epsilon,\delta)$-SDP. 
\end{proof}


\section{Proofs of Theorems in Section~\ref{sec:analysis}}
\label{app:proof_of_regret}
In Section~\ref{sec:analysis}, we present the performance analysis  of the DP-DPE algoritm in different DP models. In this section, we provide complete proofs for each of the theorems in Section~\ref{sec:analysis}.

\subsection{Proof of Theorem~\ref{thm:regret_approx}}\label{app:proof_of_regret_dpe}

\vskip 0.3 in
Before proving Theorem~\ref{thm:regret_approx}, we first provide the key concentration under DPE in the following Theorem~\ref{thm:concentration}.
\begin{theorem}\label{thm:concentration}
	For any phase $l$, under DPE with $\sigma_n=0$, the following concentration inequalities hold, for any $x\in \cD$,
	\begin{equation}
		\begin{aligned}
			&P\left\{ \langle \Tilde{\theta}_l-\theta^*, x\rangle \geq W_l\right\}\leq 2\beta, \quad
			\text{and,} \\
			\quad 
			& P\left\{ \langle \theta^* - \Tilde{\theta}_l, x\rangle \geq W_l\right\}\leq 2\beta. \label{eq:concentration_ineq}
		\end{aligned}
	\end{equation} 
\end{theorem}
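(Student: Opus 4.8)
The strategy is to write the estimation error $\langle \Tilde{\theta}_l - \theta^*, x\rangle$ as a linear functional of the two independent noise sources present in DPE when $\sigma_n=0$ --- the client heterogeneity $\xi_u$ and the observation noise $\eta_{u,t}$ --- and then control each contribution separately by a sub-Gaussian tail bound, matching the two surviving terms of $W_l$ one by one. It suffices to argue for a fixed query direction $x\in\cD_l$, since this is exactly the case invoked by the elimination rule, and $\cD_1=\cD$.

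First I would substitute the reward model into the estimator. Writing $y_l^u(x) = \langle \theta^*, x\rangle + \langle \xi_u, x\rangle + \frac{1}{T_l(x)}\sum_{t\in \cT_l(x)}\eta_{u,t}$ and averaging over $U_l$ gives $\Tilde{y}_l(x) = \langle \theta^*, x\rangle + \epsilon_l(x)$, where $\epsilon_l(x) = \frac{1}{|U_l|}\sum_{u\in U_l}\langle \xi_u, x\rangle + \frac{1}{|U_l|}\sum_{u\in U_l}\frac{1}{T_l(x)}\sum_{t\in \cT_l(x)}\eta_{u,t}$. Plugging this into $\Tilde{\theta}_l = V_l^{-1}G_l$ and using $V_l = \sum_{x'}T_l(x')x'x'^\top$, the deterministic part returns $\theta^*$, so $\Tilde{\theta}_l - \theta^* = V_l^{-1}\sum_{x'\in\text{supp}(\pi_l)}T_l(x')\,x'\,\epsilon_l(x')$. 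Projecting onto $x$ then splits the error as $\langle \Tilde{\theta}_l - \theta^*, x\rangle = S_A + S_B$, where $S_A$ collects the $\xi_u$ terms and $S_B$ the $\eta_{u,t}$ terms.

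The two key simplifications come from matrix identities. For the client term, $S_A = \frac{1}{|U_l|}\sum_{u}x^\top V_l^{-1}(\sum_{x'}T_l(x')x'x'^\top)\xi_u = \frac{1}{|U_l|}\sum_{u}\langle x, \xi_u\rangle$, since $V_l^{-1}V_l = I$; as the $\xi_u$ are i.i.d.\ zero-mean $\sigma$-sub-Gaussian and $\Vert x\Vert_2\le 1$, $S_A$ is $\frac{\sigma}{\sqrt{|U_l|}}$-sub-Gaussian. For the observation term, the $T_l(x')$ factors cancel, leaving a weighted sum of the independent, conditionally $1$-sub-Gaussian $\eta_{u,t}$ with coefficient $\frac{1}{|U_l|}x^\top V_l^{-1}x'$ on each $\eta_{u,t}$, $t\in\cT_l(x')$. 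Its variance proxy is $\frac{1}{|U_l|}\sum_{x'}T_l(x')(x^\top V_l^{-1}x')^2 = \frac{1}{|U_l|}\Vert x\Vert_{V_l^{-1}}^2$, again using $\sum_{x'}T_l(x')x'x'^\top = V_l$. Finally, $T_l(x')\ge h_l\pi_l(x')$ gives $V_l\succeq h_l V(\pi_l)$, hence $\Vert x\Vert_{V_l^{-1}}^2\le \frac{1}{h_l}\Vert x\Vert_{V(\pi_l)^{-1}}^2\le \frac{g(\pi_l)}{h_l}\le \frac{2d}{h_l}$ by the near-$G$-optimal design, so $S_B$ is $\sqrt{\frac{2d}{|U_l|h_l}}$-sub-Gaussian.

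To conclude, I would bound each piece separately: the standard sub-Gaussian tail yields $P\{S_A\ge \frac{\sigma}{\sqrt{|U_l|}}\sqrt{2\log(1/\beta)}\}\le\beta$ and $P\{S_B\ge \sqrt{\frac{2d}{|U_l|h_l}}\sqrt{2\log(1/\beta)}\}\le\beta$. Since with $\sigma_n = 0$ the two thresholds sum to exactly $W_l$, the event $\{S_A+S_B\ge W_l\}$ forces at least one of them, and a union bound gives $P\{\langle\Tilde{\theta}_l-\theta^*,x\rangle\ge W_l\}\le 2\beta$ (this is the source of the factor $2$); the second inequality follows identically because $-\xi_u$ and $-\eta_{u,t}$ obey the same sub-Gaussian bounds. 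The hard part will be the careful bookkeeping of the two noise sources: establishing their independence, computing the two variance proxies via $V_l^{-1}V_l = I$ and $\sum_{x'}T_l(x')(x^\top V_l^{-1}x')^2 = \Vert x\Vert_{V_l^{-1}}^2$, and upgrading the \emph{conditional} $1$-sub-Gaussianity of $\eta_{u,t}$ to an unconditional tail bound for the weighted sum through the usual tower/martingale argument over the noise sequence.
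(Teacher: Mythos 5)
Your proposal is correct and follows essentially the same route as the paper's proof: the same decomposition of the error into a client-heterogeneity term $\frac{1}{|U_l|}\sum_u\langle\xi_u,x\rangle$ and an observation-noise term with variance proxy $\frac{1}{|U_l|}\Vert x\Vert_{V_l^{-1}}^2\le\frac{2d}{|U_l|h_l}$ via the identity $\sum_{x'}T_l(x')(x^\top V_l^{-1}x')^2=\Vert x\Vert_{V_l^{-1}}^2$ and the near-$G$-optimal design, followed by a union bound over the two sub-Gaussian tails. The only cosmetic difference is that you obtain the client term by cancelling $V_l^{-1}V_l$ directly, whereas the paper recenters around $\frac{1}{|U_l|}\sum_u\theta_u$ first; these are the same computation.
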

\begin{proof} We prove the first concentration inequality in \eqref{eq:concentration_ineq} in the following, and the second inequality can be proved symmetrically.  
	Note that for any action $x\in \cD$, the gap between the estimated reward with parameter $\Tilde{\theta}_l$ and the true reward with $\theta^*$ satisfies
	\begin{equation*}
		\begin{aligned}
			&\langle \Tilde{\theta}_l-\theta^*, x\rangle \\
			=&\left\langle \Tilde{\theta}_l- \frac{1}{|U_l|}\sum_{u\in U_l}\theta_u + \frac{1}{|U_l|}\sum_{u\in U_l}\theta_u -\theta^*, x\right\rangle \\
			=& \left\langle  \Tilde{\theta}_l-\frac{1}{|U_l|}\sum_{u\in U_l} \theta_u, x \right\rangle  +  \frac{1}{|U_l|}  \sum_{u\in U_l} \left\langle \theta_u - \theta^*, x\right\rangle.     
		\end{aligned}
	\end{equation*}
	Then, with $\sigma_n=0$, we have
	\begin{equation}
		\begin{aligned}
			& P\left\{ \langle \Tilde{\theta}_l-\theta^*, x\rangle \geq W_l\right\} \\
			= & P\left\{ \left\langle  \Tilde{\theta}_l-\frac{1}{|U_l|}\sum_{u\in U_l} \theta_u, x \right\rangle + \frac{1}{{|U_l|}}\sum_{u\in U_l}  \langle \theta_u - \theta^*, x\rangle \geq W_l\right\} \\
			\leq & P\left\{ \left\langle  \Tilde{\theta}_l-\frac{1}{|U_l|}\sum_{u\in U_l} \theta_u, x \right\rangle \geq \sqrt{\frac{4d}{h_l|U_l|}\log \left(\frac{1}{\beta}\right)}\right\} \\ 
			&+ P\left\{ \frac{1}{{|U_l|}}\sum_{u\in U_l}  \langle \theta_u - \theta^*, x\rangle \geq \sqrt{\frac{2\sigma^2}{|U_l|}\log \left(\frac{1}{\beta}\right)} \right\}. \label{eq:gap_bound}
		\end{aligned}
	\end{equation}
	In the following, we try to bound the above two terms, respectively. 
	Under the non-private DP-DPE algorithm, the output of  $\cP$ is the exact average of local performance, i.e., $\Tilde{y}_l = 
	\cP\left(\{\Vec{y}_{l}^u\}_{u\in U_l}\right)
	= \frac{1}{|U_l|}\sum_{u\in U_l} \Vec{y}_{l}^u$. Then, the estimated model parameter
	\begin{equation}
		\begin{aligned}
			&\Tilde{\theta}_l = V_l^{-1}G_l\\
			& =  V_l^{-1}\sum_{x\in \text{supp}(\pi_l)}  x T_l(x)  \Tilde{y}_l(x) \\
			& =  V_l^{-1}\sum_{x\in \text{supp}(\pi_l)}  x T_l(x)  \frac{1}{|U_l|}\sum_{u\in U_l}y_l^u(x) \\
			& =  \frac{1}{|U_l|}\sum_{u\in U_l} V_l^{-1}\sum_{x\in \text{supp}(\pi_l)}  x T_l(x) y_l^u(x) \\
			& =  \frac{1}{|U_l|}\sum_{u\in U_l} V_l^{-1}\sum_{x\in \text{supp}(\pi_l)}  x \sum_{t\in \cT_l(x)} (x^\top \theta_u + \eta_{u,t}) \\
		\end{aligned}
	\end{equation}
	can be further expressed as
	\begin{equation}
		\begin{aligned}
			&\Tilde{\theta}_l = V_l^{-1}G_l\\
			& =  \frac{1}{|U_l|}\sum_{u\in U_l} V_l^{-1}\left(\sum_{x\in \text{supp}(\pi_l)} T_l(x) x  x^\top \theta_u\right) \\
			&+ \frac{1}{|U_l|}\sum_{u\in U_l} V_l^{-1}\left(\sum_{x\in \text{supp}(\pi_l)}\sum_{t\in \cT_l(x)}\eta_{u,t}x \right)\\
			& =  \frac{1}{|U_l|}\sum_{u\in U_l} V_l^{-1}   \left( V_l\theta_u  + \sum_{t \in\cT_l}  \eta_{u,t}x_t\right)\\
			& =  \frac{1}{|U_l|}\sum_{u\in U_l} \theta_u   +  \frac{1}{|U_l|}\sum_{u\in U_l}V^{-1} \sum_{t \in\cT_l}  \eta_{u,t}x_t. \label{eq:estimation_average_relation}
		\end{aligned}
	\end{equation}
	For any $x$ in $\cD$,
	\begin{equation}
		\left\langle x, V_l^{-1}  \sum_{t \in \cT_l}  \eta_{u,t} x_t\right\rangle =  \sum_{t \in \cT_l}  \langle x, V_l^{-1}  x_t\rangle \eta_{u,t}. \label{eq:one_client_subGaussian}
	\end{equation}
	
	Note that $ \eta_{u,t}$ is \emph{i.i.d.} $1$-sub-Gaussian over $t$ and that the chosen action $x_{t}$ at $t$ is deterministic in the $l$-th phase under the DP-DPE algorithm. Combining the following result,
	$$\sum_{t\in \cT_l}  \langle x, V_l^{-1}  x_t\rangle^2 = x^{\top} V_l^{-1} \left(\sum_{t\in \cT_l}x_tx_t^{\top}\right) V_l^{-1} x=
	\Vert x\Vert_{V_l^{-1}}^2,$$
	where the second equality is due to $V_l = 
	\sum_{t\in \cT_l} x_{t}x_{t}^{\top}$,
	we derive that the LHS of Eq.~\eqref{eq:one_client_subGaussian} is $\Vert x\Vert_{V_l^{-1}}$-sub-Gaussian. Besides, we have $\Vert x\Vert_{V_l^{-1}}^2 
	\leq \frac{\Vert x\Vert_{V_l(\pi_l)^{-1}}^2}{h_l}
	\leq \frac{g(\pi_l)}{h_l}
	\leq  \frac{2d}{h_l}$ by the near-G-optimal design.
	According to the property of a sub-Gaussian random variable, we can obtain 
	\begin{equation}
		\begin{aligned}
			&P\left\{\frac{1}{|U_l|} \sum_{u\in U_l} \left\langle x, V_l^{-1}  \sum_{t \in \cT_l}  \eta_{u,t} x_t\right\rangle \geq \sqrt{\frac{4d}{h_l|U_l|}\log \left(\frac{1}{\beta}\right)}\right\} \\
			&\leq  \exp\left\{-\frac{ |U_l|\frac{4d}{h_l|U_l|}\log (1/\beta)}{2\cdot \frac{2d}{h_l} }\right\}
			= \beta.      \label{eq:W_l1_prob_bnd}
		\end{aligned}
	\end{equation}
	
	Combining the result in Eq.~\eqref{eq:estimation_average_relation}, we have
	\begin{equation*}
		\begin{aligned}
			&P\left\{\left\langle  \Tilde{\theta}_l-\frac{1}{|U_l|}\sum_{u\in U_l} \theta_u, x \right\rangle \geq \sqrt{\frac{4d}{h_l|U_l|}\log \left(\frac{1}{\beta}\right)}\right\} \\
			= &P\left\{\frac{1}{|U_l|} \sum_{u\in U_l} \left\langle x, V_l^{-1}  \sum_{t \in \cT_l}  \eta_{u,t} x_t\right\rangle \geq \sqrt{\frac{4d}{h_l|U_l|}\log \left(\frac{1}{\beta}\right)}\right\} \leq \beta.       
		\end{aligned}
	\end{equation*}

	For the second term of Eq.~\eqref{eq:gap_bound}, we know that
	$\langle \theta_u - \theta^*, x\rangle =\langle \xi_u, x \rangle$ is $\Vert x\Vert_2\sigma$-sub-Gaussian. Similarly, according to the sub-Gaussian property, we have
	\begin{equation}
		\begin{aligned}
			&P\left\{ \frac{1}{{|U_l|}}\sum_{u\in U_l}  \langle \theta_u - \theta^*, x\rangle \geq \sqrt{\frac{2\sigma^2}{|U_l|}\log \left(\frac{1}{\beta}\right)} \right\} \\
			&\leq \exp\left\{-\frac{{|U_l|} \cdot \frac{2\sigma^2}{{|U_l|}}\log (\frac{1}{\beta}) }{2\sigma^2\Vert x\Vert_2^2}\right\} \label{eq:W_l2_prob_bnd}
			\leq  \beta.    
		\end{aligned}
	\end{equation}
	Therefore, we have 
	$$P\left\{ \langle \Tilde{\theta}_l-\theta^*, x\rangle \geq W_l\right\}  \leq 2\beta. $$
	The symmetrical argument completes the proof.
\end{proof}

To prove Theorem~\ref{thm:regret_approx}, we first present two observations with high probability based on Theorem~\ref{thm:concentration}, then analyze the regret in a particular phase $l>2$, and finally combine all phases to get the total regret. 

Under DPE algorithm, define a ``good" event at $l$-th phase as $\mathcal{E}_l$: 
\begin{equation}
	\begin{aligned}\notag
		&\left\{\langle \theta^*- \Tilde{\theta}_l, x^*\rangle \leq W_l \right\},
		\quad \text{and} \\
		&\left\{ \langle \Tilde{\theta}_l-\theta^*, x\rangle \leq W_l  \right\}, \quad \forall x\in \cD\backslash \{x^*\} .
	\end{aligned}
\end{equation}
According to Theorem~\ref{thm:concentration}, it is not difficult to derive $P(\mathcal{E}_l)\geq 1-2k\beta$ via union bound. In addition, under event $\mathcal{E}_l$, we have the following two observations:
\begin{itemize}
	\item[\textbf{1.}] If the optimal action $x^*\in \cD_l$, then $x^* \in \cD_{l+1}$. 
	\item[\textbf{2.}] For any $x\in \cD_{l+1}$, we have $\langle \theta^*, x^* - x\rangle \leq 4W_l$. 
\end{itemize}
\begin{proof}
	
	\smallskip
	\textbf{Observation 1:}
	Let $b\in \argmax_{x\in\cD_l} \langle \Tilde{\theta}_l, x \rangle $. If $x^*=b$, then $x^* \in \cD_{l+1}$ according to the elimination step in Algorithm~\ref{alg:dpe}. If $x^* \neq b$, then under event $\mathcal{E}_l$, we have 
	\begin{equation}
		\begin{aligned}
			\langle \Tilde{\theta}_l, b - x^*\rangle 
			&=   \langle \Tilde{\theta}_l, b\rangle - \langle \Tilde{\theta}_l, x^*\rangle \\
			&\leq   \langle \theta^*, b\rangle + W_l-  \langle \theta^*, x^*\rangle + W_l\\
			&=   \langle \theta^*, b-x^*\rangle + 2W_l\\
			&\leq  2W_l,
		\end{aligned}
	\end{equation}
	which means that $x^*$ is not eliminated at the end of the $l$-th phase, i.e., $x^* \in \cD_{l+1}$. 
	
	\textbf{Observation 2:} For any $x\in \cD_{l+1}$, we have $\langle \Tilde{\theta}_l, b-x\rangle \leq 2W_l $. Then, we have the following steps:
	\begin{equation}
		\begin{aligned}
			2W_l 
			& \geq \langle \Tilde{\theta}_l, b-x\rangle  \\
			& \geq \langle \Tilde{\theta}_l, x^* - x\rangle \\
			&\geq \langle \theta^*, x^*\rangle - W_l -\langle \theta^*, x\rangle - W_l \\
			&=   \langle \theta^*, x^*-x\rangle - 2W_l,
		\end{aligned}
	\end{equation}
	where the second inequality is from Observation 1. 
	Then, we derive  Observation 2. 
\end{proof}
\smallskip

Now, we are ready to prove Theorem~\ref{thm:regret_approx}.
\begin{proof}
	\textbf{1) Regret in a specific phase $l\geq 2$.} 
Let $r_l$ denote the incurred regret in the $l$-th phase, i.e., $r_l \triangleq  \sum_{t\in \cT_l} \langle \theta^*, x^* - x_{t} \rangle $. For any phase $l=1,\dots, L-1$, under event $\cE_{l}$, we have the following result
\begin{equation}
	\begin{aligned}
		r_{l+1} &=  \sum_{t \in \cT_{l+1}}  \langle \theta^*, x^* - x_{t} \rangle \\
		&\leq \sum_{t\in\cT_{l+1}} 4W_{l}\\
		&=  \sum_{t\in\cT_{l+1}} 4\left(\sqrt{\frac{4d}{h_l|U_l|}\log \left(\frac{1}{\beta}\right)}+\sqrt{\frac{2\sigma^2}{|U_l|}\log \left(\frac{1}{\beta}\right)}\right)\\
		& = \underbrace{4T_{l+1}\sqrt{\frac{4d}{h_l|U_l|}\log \left(\frac{1}{\beta}\right)}}_{\text{\textcircled{1}}}+\underbrace{4T_{l+1}\sqrt{\frac{2\sigma^2}{|U_l|}\log \left(\frac{1}{\beta}\right)}}_{\text{\textcircled{2}}}.
	\end{aligned}
\end{equation}
We derive an upper bound for each of the two terms in the above equation. Note that the total number of pulls in the $(l+1)$-th phase is $T_{l+1} = \sum_{x\in \text{supp}(\pi_{l+1})} T_{l+1}(x)$, which satisfies
$$h_1\cdot 2^l =h_{l+1} \leq T_{l+1}\leq h_{l+1} + |\text{supp}(\pi_{l+1})| \leq  h_1\cdot 2^l +S,$$
where $S\triangleq 4d\log\log d + 16 \geq |\text{supp}(\pi_{l+1})|$. In addition, we have $h_l=h_1\cdot 2^{l-1}$ and $2^{\alpha {l}} \leq |U_l|\leq 2^{\alpha l}+1$. Then, 
for \textcircled{1}, we have 
\begin{equation}
	\begin{aligned}
		\text{\textcircled{1}}& \leq 4(h_1\cdot2^l+S)\sqrt{\frac{8 d}{h_1\cdot 2^{(1+\alpha)l}}\log \left(\frac{1}{\beta}\right)}\\
		& = 8\sqrt{2d\log \left(\frac{1}{\beta}\right)} \left(\sqrt{h_1\cdot 2^{(1-\alpha)l}}+ \frac{S}{\sqrt{h_1\cdot 2^{(\alpha+1)l}}}\right).
	\end{aligned}
\end{equation}
As to the second term \textcircled{2}, we have 
\begin{equation}
	\begin{aligned}
		\text{\textcircled{2}}
		& \leq  4(h_1\cdot2^l+S)\sqrt{\frac{2\sigma^2}{2^{\alpha l}}\log \left(\frac{1}{\beta}\right)}\\
		& = 4\sigma\sqrt{2\log \left(\frac{1}{\beta}\right)} \left(h_1\sqrt{2^{(2-\alpha) l}}+ \frac{S}{\sqrt{2^{\alpha l}}}\right).
	\end{aligned}
\end{equation}
Then, for any $l=2,\cdots, L$, the regret in the $l$-th phase $r_l$ is upper bounded by 
\begin{equation}
	\begin{aligned}
		r_{l}
		&\leq 8\sqrt{2d\log \left(\frac{1}{\beta}\right)} \left(\sqrt{h_1\cdot 2^{(1-\alpha){(l-1)}}}+ \frac{S}{\sqrt{h_1\cdot 2^{(\alpha+1){(l-1)}}}}\right) \\
		&+ 4\sigma\sqrt{2\log \left(\frac{1}{\beta}\right)} \left(h_1\sqrt{2^{(2-\alpha) (l-1)}}+ \frac{S}{\sqrt{2^{\alpha (l-1)}}}\right).\label{eq:regret_in_l}    
	\end{aligned}
\end{equation}

\smallskip
\textbf{2) Total regret:}

Define $\mathcal{E}_g$ as the event where the ``good" event occurs in every phase, i.e., $\mathcal{E}_g \triangleq \bigcap_{l=1}^L \mathcal{E}_l$. Based on to Theorem~\ref{thm:concentration}, it is not difficult to obtain  $P\{\mathcal{E}_g\} \geq 1- 2k\beta L$ by applying union bound. At the same time, let $R_g$ be the regret under event $\mathcal{E}_g$, and $R_b$ be the regret if event $\mathcal{E}_g$ does not hold. Then, the expected total regret in $T$ is $\E[R(T)] = P(\mathcal{E}_g)R_g + (1-P(\mathcal{E}_g)) R_b$.

Under event $\mathcal{E}_g$, the regret in the $l$-th phase $r_l$ satisfies Eq.~\eqref{eq:regret_in_l} for any $l\geq 2$. Combining $r_1 \leq 2T_1 \leq 2(h_1+S)$ (since $\langle \theta^*, x^*-x \rangle \leq 2$ for all $x \in \cD$), we have
\begin{equation}
	\begin{aligned}
		&R_g = \sum_{l=1}^L r_l  \\
		& \leq 2(h_1+S)\\
		&+ \sum_{l=2}^L 8\sqrt{2d\log \left(\frac{1}{\beta}\right)} \left(\sqrt{h_1 2^{(1-\alpha){(l-1)}}}+ \frac{S}{\sqrt{h_12^{(\alpha+1){(l-1)}}}}\right) \\
		& + \sum_{l=1}^L 4\sigma\sqrt{2\log \left(\frac{1}{\beta}\right)} \left(h_1\sqrt{2^{(2-\alpha)(l-1)}}+ \frac{S}{\sqrt{2^{\alpha (l-1)}}}\right), \\
	\end{aligned}
\end{equation}
which is
\begin{equation}
	\begin{aligned}
		&R_g 
		\leq 2(h_1+S)\\
		&+ 8\sqrt{2d\log(1/\beta)} \left(C_0\sqrt{h_1 2^{(L-1)(1-\alpha)}}+\frac{S}{\sqrt{h_1}(\sqrt{2^{1+\alpha}}-1)}\right) \\
		&+ 4\sigma\sqrt{2\log(1/\beta)} \left(\frac{h_1\sqrt{2^{2-\alpha}}}{\sqrt{2^{2-\alpha}}-1}\cdot \sqrt{2^{(L-1)(2-\alpha)}}+C_1S\right)\\
		&= 2(h_1+S)\\
		&+ 8\sqrt{2d\log(1/\beta)}\cdot C_0\sqrt{h_12^{(L-1)(1-\alpha)}} + \frac{8S\sqrt{2d\log (1/\beta)}}{\sqrt{h_1}(\sqrt{2}-1)}\\
		&+ 4\sigma\sqrt{2\log(1/\beta)} \left(4h_1 \sqrt{2^{(L-1)(2-\alpha)}}+C_1S\right),
	\end{aligned}
\end{equation}
where $C_0=\frac{\sqrt{2^{1-\alpha}}}{\sqrt{2^{1-\alpha}}-1}$ and $C_1 = \sum_{l=2}^{\infty} \frac{1}{\sqrt{2^{\alpha(l-1)}}}$.
Note that $ h_L \leq T_L \leq T$, which indicates $2^{L-1} \leq T/h_1$, and $L\leq \log (2T/h_1)$.
Then, the above inequality becomes
\begin{equation*}
	\begin{aligned}
		R_g &= \sum_{l=1}^L r_l\\
		&\leq 2(h_1+S)+ 8\sqrt{2d\log(1/\beta)  } \cdot C_0h_1(\sqrt{T/h_1})^{1-\alpha} \\
		&+ 20S\sqrt{2d/h_1\log(1/\beta)}\\
		&+4\sigma\sqrt{2\log(1/\beta)} \left(4h_1\sqrt{(T/h_1)^{2-\alpha}}+C_1S\right)\\
		&\leq 2(h_1+S) + 8C_0\sqrt{2dh_1^{\alpha}T^{1-\alpha}\log(1/\beta)  }\\
		&+20S\sqrt{2d/h_1\log(1/\beta)}\\
		&+16\sigma\sqrt{2h_1^{\alpha}\log(1/\beta)}\cdot T^{1-\alpha/2} \\
		&+4C_1\sigma S\sqrt{2\log(1/\beta)}.
	\end{aligned}
\end{equation*}
On the other hand, $R_b \leq 2T$ 
since $\langle \theta^*, x^*-x \rangle \leq 2$ for all $x \in \cD$. 
Choose $\beta = \frac{1}{kT}$ in Algorithm~\ref{alg:dpe}. Finally, we have the following results:
\begin{equation*}
	\begin{aligned}
		&\E[R(T)] \\
		&= P(\mathcal{E}_g)R_g + (1-P(\mathcal{E}_g)) R_b \\
		& \leq R_g + 2k\beta L \cdot 2T\\
		& \leq 2(h_1+S)+ 8C_0\sqrt{2dh_1^{\alpha}T^{1-\alpha}\log(kT)  }\\
		&+20S\sqrt{2d/h_1\log(kT)}+16\sigma\sqrt{2h_1^{\alpha}\log(kT)}\cdot T^{1-\alpha/2}\\
		&+4C_1\sigma S \sqrt{2\log(kT)} + 4\log (2T/h_1)\\
		&= O(\sqrt{dT^{1-\alpha}\log (kT)}) +O(\sigma T^{1-\alpha/2}\sqrt{\log (kT)})\\
		&+O(d^{3/2}\sqrt{\log(kT)}),
	\end{aligned}
\end{equation*}
where the last equality is from $h_1=2$ and ignoring the logarithmic term regarding $d$ in $S$.

\smallskip

\textbf{3) Communication cost.} 
Notice that the communicating data in each phase is the local average performance $y_l^u(x)$ for each chosen action $x$ in the support set $\text{supp}(\pi_l)$. 
Therefore, the total communication cost is \begin{equation*}
	C(T) =\sum_{l=1}^L s_l |U_l|\leq \sum_{l=1}^L (4d\log \log d+16)\cdot 2^{\alpha l} =O(dT^\alpha).
\end{equation*}
\end{proof}
\subsection{Proof of Theorem~\ref{thm:regret_dp_dpe}} \label{app:proof_regret_dp_dpe}

\begin{table}[t]
\centering
\caption{$\sigma_n$ in Algorithm~\ref{alg:dpe} under different DP models}
\label{tab:parameter_setting_dpe}
\begin{tabular}{lcc}
	\toprule
	Algorithm & $\sigma_n$& Notes\\
	\midrule
	DPE& $0$  &  --  \\
	CDP-DPE & $\sigma_n = 2\sigma_{nc}\sqrt{Sd}$  & $\sigma_{nc} = \frac{2B\sqrt{2s_l\ln(1.25/\delta)}}{\epsilon |U_l|}$\\
	LDP-DPE& $\sigma_n = 2\sigma_{nl}\sqrt{\frac{Sd}{|U_l|}}$ & $\sigma_{nl} = \frac{2B\sqrt{2s_l\ln(1.25/\delta)}}{\epsilon }$\\
	SDP-DPE & $\sigma_n = 2\sigma_{ns}\sqrt{Sd}$ &$\sigma_{ns} = O\left(\frac{B\sqrt{d}\ln(d/\delta)}{\epsilon |U_l|}\right)$\\
	\bottomrule
\end{tabular}
\end{table}
To be clear, we list the parameter setting of Theorem~\ref{thm:regret_dp_dpe} 
in Table~\ref{tab:parameter_setting_dpe}. To prove it, we first show that the following concentration inequalities hold for the DP-DPE algorithm under the three DP models with the setting in Table~\ref{tab:parameter_setting_dpe}.  

\begin{theorem}\label{thm:concentration_dp}
Set DP-DPE in the central, local, and shuffle models (i.e., CDP-DPE, LDP-DPE, and SDP-DPE, respectively) based on Table~\ref{tab:parameter_setting_dpe}. In any phase $l$, all of CDP-DPE, LDP-DPE, and SDP-DPE satisfy the following concentration,
for any $x\in \cD_l$,
\begin{equation}
\begin{aligned}
	&P\left\{ \langle \Tilde{\theta}_l-\theta^*, x\rangle \geq W_l\right\}\leq 3\beta,  \quad \text{and} \\
	&P\left\{ \langle \theta^* - \Tilde{\theta}_l, x\rangle \geq W_l\right\}\leq 3\beta.    \label{eq:concentration_ineq_dp}
\end{aligned}
\end{equation} 
\end{theorem}
\begin{proof}
We prove the first concentration inequality in \eqref{eq:concentration_ineq_dp} for CDP-DPE, LDP-DPE, and SDP-DPE, respectively, in the following, and the second inequality can be proved symmetrically.
\smallskip

According to Line~\ref{alg_lse} in Algorithm~\ref{alg:dpe}, we have
\begin{equation*}
\begin{aligned}
	\Tilde{\theta}_l &= V_l^{-1}G_l\\
	& =  V_l^{-1}\sum_{x\in \text{supp}(\pi_l)}  x T_l(x)  \Tilde{y}_l(x) \\
	& =  V_l^{-1}\sum_{x\in \text{supp}(\pi_l)}  x T_l(x)  \frac{1}{|U_l|}\sum_{u\in U_l}y_l^u(x) \\
	&+  V_l^{-1}\sum_{x\in \text{supp}(\pi_l)}  x T_l(x)  \underbrace{\left( \Tilde{y}_l(x)-\frac{1}{|U_l|}\sum_{u\in U_l}y_l^u(x)\right)}_{\gamma_p(x)}\\
	& \overset{(a)}{=}  V_l^{-1}\sum_{x\in \text{supp}(\pi_l)}  x T_l(x)  \frac{1}{|U_l|}\sum_{u\in U_l}y_l^u(x) \\
	&+  V_l^{-1}\sum_{x\in \text{supp}(\pi_l)} xT_l(x)\gamma_p(x)\\
	& \overset{(b)}{=}  \frac{1}{|U_l|}\sum_{u\in U_l} \theta_u  +  \frac{1}{|U_l|}\sum_{u\in U_l}V^{-1} \sum_{t \in\cT_l}  \eta_{u,t}x_t \\&+  V_l^{-1}\sum_{x\in \text{supp}(\pi_l)} xT_l(x)\gamma_p(x),
\end{aligned}
\end{equation*}
where we denote the noise introduced  for privacy-preserving associated with action $x$ by $\gamma_p(x)\triangleq  \Tilde{y}_l(x)-\frac{1}{|U_l|}\sum_{u\in U_l}y_l^u(x)$, which varies according to the specified DP models, and $(b)$ is derived from Eq.~\eqref{eq:estimation_average_relation}.
Then, for any action $x'\in \cD$, the gap between the estimated reward with parameter $\Tilde{\theta}_l$ and the true reward with $\theta^*$ satisfies
$$
\begin{aligned}
& \langle \Tilde{\theta}_l-\theta^*, x'\rangle \\
&=\left\langle \Tilde{\theta}_l- \frac{1}{|U_l|}\sum_{u\in U_l}\theta_u + \frac{1}{|U_l|}\sum_{u\in U_l}\theta_u -\theta^*, x'\right\rangle \\ 
&= \left\langle  \Tilde{\theta}_l-\frac{1}{|U_l|}\sum_{u\in U_l} \theta_u, x' \right\rangle  +  \frac{1}{|U_l|}  \sum_{u\in U_l} \left\langle \theta_u - \theta^*, x'\right\rangle \\
& =  \frac{1}{|U_l|}\sum_{u\in U_l}  \left\langle   V_l^{-1}  \sum_{t \in \cT_l}  \eta_{u,t} x_t, x'\right\rangle \\
& + \left\langle V_l^{-1}\sum_{x\in \text{supp}(\pi_l)} xT_l(x)\gamma_p(x), x'\right\rangle \\
&+  \frac{1}{|U_l|}  \sum_{u\in U_l} \left\langle \theta_u - \theta^*, x'\right\rangle.
\end{aligned}$$
Then, we have
\begin{equation}
\begin{aligned}
	& P\left\{ \langle \Tilde{\theta}_l-\theta^*, x\rangle \geq W_l\right\} \\
	\leq & P\left\{ \frac{1}{|U_l|}\sum_{u\in U_l}  \left\langle x',   V_l^{-1}  \sum_{t \in \cT_l}  \eta_{u,t} x_t\right\rangle \geq \sqrt{\frac{4d}{h_l|U_l|}\log \left(\frac{1}{\beta}\right)}\right\} \\
	+& P\left\{ \frac{1}{{|U_l|}}\sum_{u\in U_l}  \langle \theta_u - \theta^*, x\rangle \geq \sqrt{\frac{2\sigma^2}{|U_l|}\log \left(\frac{1}{\beta}\right)} \right\} \\
	+&P\left\{ \left\langle x', V_l^{-1}\sum_{x\in \text{supp}(\pi_l)} xT_l(x)\gamma_p(x)\right\rangle\geq \sqrt{2\sigma_n^2\log \left(\frac{1}{\beta}\right)} \right\}.  
\end{aligned}
\end{equation}
We have shown that either the first or the second probability in the above equation is less than $\beta$ in Eq.~\eqref{eq:W_l1_prob_bnd} and Eq.~\eqref{eq:W_l2_prob_bnd}, respectively. In the following, we try to show that the third term is less than $\beta$ under each of the three DP models.

\smallskip

\textbf{i) CDP-DPE.}
Under the DP-DPE algorithm with the central model \textsc{Privatizer}, the output of  the \textsc{Privatizer} $\cP$ is, $\Tilde{y}_l
= \frac{1}{|U_l|}\sum_{u\in U_l} \Vec{y}_{l}^u +(\gamma_1, \dots, \gamma_{s_l})$, where $\gamma_j \overset{\emph{i.i.d.}}{\sim} \mathcal{N}(0,\sigma^2_{nc})$. Then, $\gamma_p(x) \sim \cN(0, \sigma_{nc}^2)$ in the central model and is \emph{i.i.d.} across actions $x\in \text{supp}(\pi_l)$.  
Note that 
\begin{equation}
\begin{aligned}
	&\left\langle x', V_l^{-1}\sum_{x\in \text{supp}(\pi_l)} xT_l(x)\gamma_{p}(x)\right\rangle \\
	&= \sum_{x\in \text{supp}(\pi_l)}  \left\langle x', V_l^{-1} x\right\rangle  T_l(x)\gamma_{p}(x), \label{eq:gaussian_sum}    
\end{aligned}
\end{equation}
and $\gamma_{p}(x) \overset{\emph{i.i.d.}}{\sim} \cN(0, \sigma_{nc}^2)$. The variance (denoted by $\sigma_{\text{sum}}^2$) of the above sum of \emph{i.i.d.} Gaussian variables is 
$$
\begin{aligned}
\sigma_{\text{sum}}^2 &= \sum_{x\in \text{supp}(\pi_l)}  \left\langle x', V_l^{-1} x\right\rangle^2  T_l(x)^2 \sigma_{nc}^2\\
&\overset{(a)}{\leq} T_l  \cdot   x'^{\top} V_l^{-1} \left(\sum_{x\in \text{supp}(\pi_l)} T_l(x)x x^{\top}\right)V_l^{-1}x' \sigma_{nc}^2 \\\
&= T_l\Vert x'\Vert_{V_l^{-1}}^2 \sigma_{nc}^2,
\end{aligned}
$$
where $(a)$ is from $T_l(x) \leq T_l$ for any $x$ in the support set $\text{supp}(\pi_l)$. 
Therefore, the LHS of Eq.~\eqref{eq:gaussian_sum} is a Gaussian variable with variance 
$$
\begin{aligned}
\sigma_{\text{sum}}^2 &\leq T_l\Vert x'\Vert_{V_l^{-1}}^2\sigma_{nc}^2\leq T_l\cdot \frac{2d}{h_l} \cdot \sigma_{nc}^2 \\
&\overset{(a)}{\leq} \left(1+\frac{S}{h_l}\right)2d\sigma_{nc}^2\leq 4Sd\sigma_{nc}^2 = \sigma_n^2,    
\end{aligned}$$ 
where $(a)$ is due to $T_l=\sum_{x\in\text{supp}(\pi_l)}T_l(x) \leq h_l + |\text{supp}(\pi_l)| \leq h_l+S$, and the last step is based on our setting in Table~\ref{tab:parameter_setting_dpe}.
Combining the tail bound for Gaussian variables, we have
$$
\begin{aligned}
&P\left\{ \left\langle x', V_l^{-1}\sum_{x\in \text{supp}(\pi_l)} xT_l(x)\gamma_p(x)\right\rangle\geq \sqrt{2\sigma_n^2\log \left(\frac{1}{\beta}\right)} \right\} \\
&\leq \exp \left\{-\frac{2\sigma_n^2\log \left(\frac{1}{\beta}\right)} {2\sigma_{\text{sum}}^2 }\right\}\leq \beta.
\end{aligned}
$$
Hence, the first concentration inequality in Eq.~\eqref{eq:concentration_ineq_dp} holds for CDP-DEP algorithm.

\smallskip
\textbf{ii) LDP-DPE. }
Under the DP-DPE algorithm with the local model \textsc{Privatizer}, the output of  $\cP$ is, $\Tilde{y}_l
= \frac{1}{|U_l|}\sum_{u\in U_l} ( \Vec{y}_{l}^u+(\gamma_{u,1}, \dots, \gamma_{u,s_l}) ) $, where $\gamma_{u,j} \overset{\emph{i.i.d.}}{\sim} \mathcal{N}(0,\sigma^2_{nl})$. Let $j_x$ denote the index corresponding to the action $x$ in the support set $\text{supp}(\pi_l)$, i.e., $\Tilde{y}_l(x)=\frac{1}{|U_l|}\sum_{u\in U_l} (y_l^u(x)+\gamma_{u,j_x})$ and $\gamma_p(x) = \frac{1}{|U_l|}\sum_{u\in U_l}\gamma_{u,j_x}$ in the local model. Then,  
\begin{equation}
\begin{aligned}
	&\left\langle x', V_l^{-1}\sum_{x\in \text{supp}(\pi_l)} xT_l(x)\gamma_{p}(x)\right\rangle \\
	&= \frac{1}{|U_l|}\sum_{u\in U_l}\sum_{x\in \text{supp}(\pi_l)}  \left\langle x', V_l^{-1} x\right\rangle  T_l(x)\gamma_{u,j_x}.\label{eq:gamma_p_relationship} 
\end{aligned}
\end{equation}
Consider the sum at client $u$ first, i.e., 
\begin{equation}
\sum_{x\in \text{supp}(\pi_l)}  \left\langle x', V_l^{-1} x\right\rangle  T_l(x)\gamma_{u,j_x}. \label{eq:gaussian_sum_u}
\end{equation}
Recall that $\gamma_{u,j_x} \overset{\emph{i.i.d.}}{\sim} \cN(0, \sigma_{nl}^2)$. The variance (denoted by $\sigma_{u,\text{sum}}^2$) of the above sum of \emph{i.i.d.} Gaussian variables at client $u$ is 
$$
\begin{aligned}
\sigma_{u,\text{sum}}^2 &= \sum_{x\in \text{supp}(\pi_l)}  \left\langle x', V_l^{-1} x\right\rangle^2  T_l(x)^2 \sigma_{nl}^2\\
&\overset{(a)}{\leq} T_l  \cdot   x'^{\top} V_l^{-1} \left(\sum_{x\in \text{supp}(\pi_l)} T_l(x)x x^{\top}\right)V_l^{-1}x' \sigma_{nl}^2 \\
&= T_l\Vert x'\Vert_{V_l^{-1}}^2 \sigma_{nl}^2,    
\end{aligned}
$$
where $(a)$ is from $T_l(x) \leq T_l$ for any $x$ in the support set $\text{supp}(\pi_l)$. 
Therefore, the term in Eq.~\eqref{eq:gaussian_sum_u} is a Gaussian variable with variance 
$$
\begin{aligned}
\sigma_{u,\text{sum}}^2 &\leq T_l\Vert x'\Vert_{V_l^{-1}}^2\sigma_{nl}^2\leq T_l\cdot \frac{2d}{h_l} \cdot \sigma_{nl}^2 \\
&\overset{(a)}{\leq} \left(1+\frac{S}{h_l}\right)2d\sigma_{nl}^2\leq 4Sd\sigma_{nl}^2 = |U_l|\sigma_n^2,   
\end{aligned}$$ 
where $(a)$ is due to $T_l=\sum_{x\in\text{supp}(\pi_l)}T_l(x) \leq h_l + |\text{supp}(\pi_l)| \leq h_l+S$, and the last step is based on our setting in Table~\ref{tab:parameter_setting_dpe}.
Combining the tail bound for Gaussian variables, we have
$$
\begin{aligned}
&P\left\{ \frac{1}{|U_l|}\sum_{\mathclap{u\in U_l}}\sum_{{x\in \text{supp}(\pi_l)}} \left\langle x', V_l^{-1} xT_l(x)\gamma_{j_x}\right\rangle\geq \sqrt{2\sigma_n^2\log \left(\frac{1}{\beta}\right)} \right\}\\ 
&\leq \exp \left\{-\frac{|U_l|2\sigma_n^2\log \left(\frac{1}{\beta}\right)} {2\sigma_{u,\text{sum}}^2 }\right\}  \leq \beta.   \label{eq:concentration_privacy_CDP}
\end{aligned}
$$ Finally, based on Eq.~\eqref{eq:gamma_p_relationship}, we have
$$
\begin{aligned}
&P\left\{ \left\langle x', V_l^{-1}\sum_{x\in \text{supp}(\pi_l)} xT_l(x)\gamma_p(x)\right\rangle\geq \sqrt{2\sigma_n^2\log \left(\frac{1}{\beta}\right)} \right\} \\
&\leq \beta.    
\end{aligned}
$$
Hence, the first concentration inequality in Eq.~\eqref{eq:concentration_ineq_dp} holds for LDP-DEP algorithm.

\smallskip

\textbf{iii) SDP-DPE.}
Under the DP-DPE algorithm with the shuffle model \textsc{Privatizer}, the output of  $\cP$ is, $\Tilde{y}_l =   (\cA \circ \cS\circ \cR^{|U_l|} )(\{\Vec{y}_l^{u}\}_{u\in U_l}) = \cA(\mathcal{S} (\{\cR(\Vec{y}_l^{u})\}_{u\in U_l}))$, where $\cA, \cS$ and $\cR$ follow Algorithm~\ref{alg:shuffle_vec}. From Theorem~\ref{thm:vector_shuffle}, we know that the output of $(\cA \circ \cS\circ \cR^{|U_l|})(\{\Vec{y}_l^{u}\}_{u\in U_l})$ is an unbiased estimator of the average of the $|U_l|$ input vectors $\{\Vec{y}_l^{u}\}_{u\in U_l}$ and that the error distribution is sub-Gaussian with variance $\sigma_{ns}^2 = O\left(\frac{B^2s_l\ln^2(s_l/\delta)}{\epsilon^2 |U_l|^2}\right)$. Then, $\gamma_p(x) = \Tilde{y}_l(x) - \frac{1}{|U_l|}\sum_{u\in U_l} y_l^u(x)$ is $\sigma_{ns}$-sub-Gaussian with $\E[\gamma_p(x)] = 0$ in the shuffle model.
Besides, $\gamma_p(x)$ is \emph{i.i.d.} over each coordinate corresponding to each action $x$ in the support set $\text{supp}(\pi_l)$. Note that 
\begin{equation}
\begin{aligned}
	&\left\langle x', V_l^{-1}\sum_{x\in \text{supp}(\pi_l)} xT_l(x)\gamma_{p}(x)\right\rangle \\
	&= \sum_{x\in \text{supp}(\pi_l)}  \left\langle x', V_l^{-1} x\right\rangle  T_l(x)\gamma_{p}(x). \label{eq:sub-gaussian_sum}  
\end{aligned}
\end{equation}
The variance (denoted by $\sigma_{\text{sub-G}}^2$) of the above sum of \emph{i.i.d.} sub-Gaussian variables is 
$$
\begin{aligned}
\sigma_{\text{sub-G}}^2 &= \sum_{x\in \text{supp}(\pi_l)}  \left\langle x', V_l^{-1} x\right\rangle^2  T_l(x)^2 \sigma_{ns}^2\\
&\overset{(a)}{\leq} T_l  \cdot   x'^{\top} V_l^{-1} \left(\sum_{x\in \text{supp}(\pi_l)} T_l(x)x x^{\top}\right)V_l^{-1}x' \sigma_{ns}^2 \\
&= T_l\Vert x'\Vert_{V_l^{-1}}^2 \sigma_{ns}^2,     
\end{aligned}
$$
where $(a)$ is from $T_l(x) \leq T_l$ for any $x$ in the support set $\text{supp}(\pi_l)$. 
Therefore, the LHS of Eq.~\eqref{eq:sub-gaussian_sum} is $\sigma_{\text{sub-G}}$-sub-Gaussian variable with variance proxy
$$
\begin{aligned}
\sigma_{\text{sub-G}}^2 &\leq T_l\Vert x'\Vert_{V_l^{-1}}^2\sigma_{ns}^2\leq T_l\cdot \frac{2d}{h_l} \cdot \sigma_{ns}^2 \\
&\overset{(a)}{\leq} \left(1+\frac{S}{h_l}\right)2d\sigma_{ns}^2\leq 4Sd\sigma_{ns}^2 = \sigma_n^2,    
\end{aligned}$$ 
where $(a)$ is due to $T_l=\sum_{x\in\text{supp}(\pi_l)}T_l(x) \leq h_l + |\text{supp}(\pi_l)| \leq h_l+S$, and the last step is based on our setting in Table~\ref{tab:parameter_setting_dpe}.
Combining the property for sub-Gaussian variables, we have
$$
\begin{aligned}
& P\left\{ \left\langle x', V_l^{-1}\sum_{x\in \text{supp}(\pi_l)} xT_l(x)\gamma_p(x)\right\rangle\geq \sqrt{2\sigma_n^2\log \left(\frac{1}{\beta}\right)} \right\} \\
&\leq \exp \left\{-\frac{2\sigma_n^2\log \left(\frac{1}{\beta}\right)} {2\sigma_{\text{sum}}^2 }\right\}\leq \beta.   
\end{aligned}
$$
Hence, the first concentration inequality in Eq.~\eqref{eq:concentration_ineq_dp} holds for SDP-DEP algorithm.

\smallskip 
With the symmetrical arguments under the three DP models, we derive the results in Eq.~\eqref{eq:concentration_ineq_dp}.
\smallskip
\end{proof}

To prove Theorem~\ref{thm:regret_dp_dpe}, 
we follow a similar line  to the proof of Theorem~\ref{thm:regret_approx} by first analyzing the regret incurred in a particular phase with high probability, then summing up the regret over all phases, and finally analyzing the communication cost associated with each instantiation of the DP-DPE algorithm. 

Before getting into the proof for each DP-DPE instantiation, we present two important observations under the ``good" event at $l$-th phase. Recall the ``good" event definition at the $l$-th phase $\mathcal{E}_l$: 
$$
\begin{aligned}
&\left\{\langle \theta^*- \Tilde{\theta}_l, x^*\rangle \leq W_l\right\} \quad \text{and}, \\
&\left\{\langle \Tilde{\theta}_l-\theta^*, x\rangle \leq W_l  \right\}, \quad \forall x\in \cD\backslash \{x^*\} .
\end{aligned}
$$
According to Theorem~\ref{thm:concentration_dp}, it is not difficult to derive $P(\mathcal{E}_l)\geq 1-3k\beta$ for the DP-DPE algorithm. In addition, under event $\mathcal{E}_l$, we can still have the following two observations for all the DP-DPE instantiations:
\begin{itemize}
\item[\textbf{1.}] If the optimal action $x^*\in \cD_l$, then $x^* \in \cD_{l+1}$. 
\item[\textbf{2.}] For any $x\in \cD_{l+1}$, we have $\langle \theta^*, x^* - x\rangle \leq 4W_l$. 
\end{itemize}
For any $l=1,\dots, L-1$, according to the second observation under event $\cE_{l}$, we have the regret incurred in the $(l+1)$-th phase satisfies
\begin{equation}
\begin{aligned}\notag
&r_{l+1} \triangleq \sum_{t \in \cT_{l+1}}  \langle \theta^*, x^* - x_{t} \rangle \\
&\leq \sum_{t\in\cT_{l+1}} 4W_{l}\\
&=  \!\sum_{\mathclap{t\in\cT_{l+1}}} 4\left(\sqrt{\frac{4d}{h_l|U_l|}}+\sqrt{\frac{2\sigma^2}{|U_l|}}+ \sqrt{2\sigma_n^2}\right)\sqrt{\log \left(\frac{1}{\beta}\right)}\\
& = \underbrace{4T_{l+1}\sqrt{\frac{4d}{h_l|U_l|}\log \left(\frac{1}{\beta}\right)}}_{\text{\textcircled{1}}}+\underbrace{4T_{l+1}\sqrt{\frac{2\sigma^2}{|U_l|}\log \left(\frac{1}{\beta}\right)}}_{\text{\textcircled{2}}}\\
&+ \underbrace{4T_{l+1}\sqrt{2\sigma_n^2\log \left(\frac{1}{\beta}\right)}}_{\text{\textcircled{3}}}.
\end{aligned}
\end{equation}
We have shown that \textcircled{1} is bounded by 
\begin{equation}
\begin{aligned}\notag
\text{\textcircled{1}}& \leq  8\sqrt{2d\log \left(\frac{1}{\beta}\right)} \left(\sqrt{h_1\cdot 2^{(1-\alpha)l}}+ \frac{S}{\sqrt{h_1\cdot 2^{(\alpha+1)l}}}\right),
\end{aligned}
\end{equation}
and that \textcircled{2} is bounded by
\begin{equation}
\begin{aligned}\notag
\text{\textcircled{2}}
& \leq  4\sigma\sqrt{2\log \left(\frac{1}{\beta}\right)} \left(h_1\sqrt{2^{(2-\alpha) l}}+ \frac{S}{\sqrt{2^{\alpha l}}}\right).
\end{aligned}
\end{equation}
Regarding the third term \textcircled{3}, it varies according to different DP models. In the following, we analyze the term \textcircled{3} in the central, local, and shuffle model respectively. 

\smallskip

\textbf{i) CDP-DPE. } In the central model, $\sigma_n = 2\sigma_{nc}\sqrt{Sd}$ where  $\sigma_{nc} = \frac{2B\sqrt{2s_l\ln(1.25/\delta)}}{\epsilon | U_l|}$. Let $\sigma_0 \triangleq \frac{2B\sqrt{2\ln(1/\delta)}}{\epsilon}$. Then, $\sigma_n = \frac{2\sigma_0\sqrt{s_ldS}}{|U_l|}\leq \frac{ 2\sigma_0 S\sqrt{d}}{|U_l|}$ since $s_l\leq S$ for any $l$. Combining $|U_l|\geq 2^{\alpha l}$, we have 
\begin{equation}
\begin{aligned}
\text{\textcircled{3}}
& \leq   4(h_1\cdot2^l+S)\sqrt{\frac{8\sigma_0^2S^2d}{2^{2\alpha l}}\log \left(\frac{1}{\beta}\right)}\\
& \leq  8\sigma_{0}S\sqrt{2d\log\left(\frac{1}{\beta}\right)} \left(h_1\cdot 2^{(1-\alpha)l}+ \frac{S}{2^{\alpha l}}\right).\notag
\end{aligned}
\end{equation}

\smallskip
\textbf{ii) LDP-DPE. } In the local model, $\sigma_n = 2\sigma_{nl}\sqrt{\frac{Sd}{|U_l|}}$ where $\sigma_{nl} =
\frac{2B\sqrt{2s_l\ln(1.25/\delta)}}{\epsilon}$. Substituting $\sigma_0$, we have $\sigma_n = \frac{2\sigma_0\sqrt{s_ldS}}{\sqrt{|U_l|}}\leq \frac{2\sigma_0 S\sqrt{d}}{\sqrt{|U_l|}}$ and then derive
\begin{equation}
\begin{aligned}
\text{\textcircled{3}}
& \leq   4(h_1\cdot2^l+S)\sqrt{\frac{8\sigma_0^2S^2d}{2^{\alpha l}}\log \left(\frac{1}{\beta}\right)}\\
& \leq  8\sigma_{0}S\sqrt{2d\log\left(\frac{1}{\beta}\right)} \left(h_1\cdot\sqrt{2^{(2-\alpha)l}}+ \frac{S}{\sqrt{2^{\alpha l}}}\right).\notag
\end{aligned}
\end{equation}

\smallskip
\textbf{iii) SDP-DPE.} In the shuffle model, $\sigma_n = 2\sigma_{ns}\sqrt{Sd}$, where $\sigma_{ns} = O\left(\frac{B\sqrt{s_l}\ln(s_l/\delta)}{\epsilon |U_l|}\right)$. Let $\sigma_{ns} =\frac{C_s B\sqrt{s_l}\ln(s_l/\delta)}{\epsilon |U_l|} $  and $\sigma_0^{\prime} \triangleq \frac{C_sB\ln(S/\delta)}{\epsilon}$. Then, we have $\sigma_n \leq \frac{2\sigma_0^{\prime}S\sqrt{d}}{|U_l|}$ and
\begin{equation}
\begin{aligned}
\text{\textcircled{3}}
& \leq   4(h_1\cdot2^l+S)\sqrt{\frac{8{\sigma_0^{\prime}}^2S^2d}{2^{2\alpha l}}\log \left(\frac{1}{\beta}\right)}\\
& \leq  8\sigma_0^{\prime}S\sqrt{2d\log\left(\frac{1}{\beta}\right)} \left(h_1\cdot 2^{(1-\alpha)l}+ \frac{S}{2^{\alpha l}}\right).\notag
\end{aligned}
\end{equation}

\begin{proof}[Proof of Theorem~\ref{thm:regret_dp_dpe} [CDP-DPE]
Under the ``good" event, the regret in the $(l+1)$-th phase satisfies
\begin{equation}
\begin{aligned}
r_{l+1} &\leq \text{\textcircled{1}} + \text{\textcircled{2}} + \text{\textcircled{3}}\\
& \leq  8\sqrt{2d\log \left(\frac{1}{\beta}\right)} \left(\sqrt{h_1\cdot 2^{(1-\alpha)l}}+ \frac{S}{\sqrt{h_1\cdot 2^{(\alpha+1)l}}}\right) \\
& + 4\sigma\sqrt{2\log \left(\frac{1}{\beta}\right)} \left(h_1\sqrt{2^{(2-\alpha) l}}+ \frac{S}{\sqrt{2^{\alpha l}}}\right)\\
& + 8\sigma_{0}S\sqrt{2d\log\left(\frac{1}{\beta}\right)} \left(h_1\cdot 2^{(1-\alpha)l}+ \frac{S}{2^{\alpha l}}\right).
\end{aligned}
\end{equation}
Assume the ``good" event hold in every phase, i.e., under event $\cE_g = \bigcap_{l=1}^L\cE_l$. We have $P\{\cE_g\}\geq 1-3k\beta L$ by applying union bound and the total regret $R_g$ under $\cE_g$ satisfies
\begin{equation*}
\begin{aligned}
R_g &=  \sum_{l=1}^L r_l \\
\leq & 2(h_1+S) + \sum_{l=2}^{L} r_{l}\\
\leq & 2(h_1 + S) \\
&+ \!\sum_{l=2}^L 8\sqrt{2d\log \left(\frac{1}{\beta}\right)} \left(\sqrt{h_1 2^{(1-\alpha)(l-1)}}+ \frac{S}{\sqrt{h_1 2^{(\alpha+1)(l-1)}}}\right) \\
& + \sum_{l=2}^L 4\sigma\sqrt{2\log \left(\frac{1}{\beta}\right)} \left(h_1\sqrt{2^{(2-\alpha) (l-1)}}+ \frac{S}{\sqrt{h_12^{\alpha (l-1)}}}\right)\\
& + \sum_{l=2}^L 8\sigma_{0}S\sqrt{2d\log\left(\frac{1}{\beta}\right)} \left(h_1\cdot 2^{(1-\alpha)(l-1)}+ \frac{S}{2^{\alpha (l-1)}}\right)\\
\overset{(a)}{\leq} & 2(h_1+S)+ 8\sqrt{2d\log(1/\beta)}\cdot C_0\sqrt{h_12^{(L-1)(1-\alpha)}}\\
&+ \frac{8S\sqrt{2d\log (1/\beta)}}{\sqrt{h_1}(\sqrt{2}-1)}\\
&+ 4\sigma\sqrt{2\log(1/\beta)} \left(4h_1 \sqrt{2^{(L-1)(2-\alpha)}}+C_1S\right)\\
& + 8\sigma_0 S\sqrt{2d\log(1/\beta)}\left(\frac{h_1\cdot 2^{(1-\alpha)(L-1)}}{2^{1-\alpha}-1} +C_2 S\right)\\
\overset{(b)}{\leq} & 2(h_1+S)+ 8\sqrt{2d\log(1/\beta)}\cdot C_0\sqrt{h_1}(\sqrt{T/h_1})^{1-\alpha} \\
&+ 20S\sqrt{2d/h_1\log (1/\beta)}\\
&+ 4\sigma\sqrt{2\log(1/\beta)} \left(4h_1\sqrt{(T/h_1)^{2-\alpha}}+C_1S\right)\\
& + 8\sigma_0 S\sqrt{2d\log(1/\beta)}\left(\frac{h_1\cdot (T/h_1)^{1-\alpha}}{2^{1-\alpha}-1} +C_2 S\right)\\
\overset{(c)}{\leq} & 2(h_1+S)+ 8C_0\sqrt{2dh_1^{\alpha}T^{1-\alpha}\log(kT)} \\
&+ 20S\sqrt{2d/h_1\log (kT)}\\
&+ 16\sigma\sqrt{2h_1^{\alpha}\log(kT)} T^{1-\alpha/2}+ 4C_1S\sigma\sqrt{2\log(kT)}\\
& + 8\sigma_0 h_1^{\alpha} S\sqrt{2d\log(kT)}\left(\frac{T^{1-\alpha}}{2^{1-\alpha}-1} +C_2 S\right)\\
= & O\left(\sqrt{dT^{1-\alpha}\log(kT)}\right) + O\left(\sigma T^{1-\alpha/2}\sqrt{\log(kT)}\right) \\
&+O\left(\sigma_0 T^{1-\alpha}S\sqrt{d\log(kT)}\right)+O\left(\sigma_0S^2\sqrt{d\log(kT)}\right)\\
\overset{(d)}{=} & 
O\left(\sigma T^{1-\alpha/2}\sqrt{\log(kT)}\right) \\&+O\left(\frac{Bd^{3/2} T^{1-\alpha}\sqrt{\ln(1/\delta)\log(kT)}}{\epsilon}\right)\\
&+O\left(\frac{Bd^{5/2}\sqrt{\ln(1/\delta)\log(kT)}}{\epsilon}\right),
\end{aligned}
\end{equation*}
where in $(a)$ $C_0=\frac{\sqrt{2^{1-\alpha}}}{\sqrt{2^{1-\alpha}}-1}$, $C_1 = \sum_{l=2}^{\infty} \frac{1}{\sqrt{2^{\alpha(l-1)}}}$ and $C_2 = \sum_{l=2}^{\infty}\frac{1}{2^{\alpha(l-1)}}$
,  $(b)$ is from $h_1\cdot 2^{L-1} = h_L \leq T_L \leq T$, $(c)$ is by setting $\beta=\frac{1}{kT}$, and $(d)$ is derived by substituting $\sigma_0 = \frac{2B\sqrt{2\ln(1.25/\delta)}}{\epsilon}$ and $S=4d\log\log d+16$. 

Finally, the expected regret of CDP-DPE algorithm is upper bounded by 
\begin{equation*}
\begin{aligned}
&\E[R(T)] = P(\mathcal{E}_g)R_g + (1-P(\mathcal{E}_g)) \cdot 2T \\
& \leq R_g + 2\beta k L \cdot 2T \\
&=O\left(\sigma T^{1-\alpha/2}\sqrt{\log(kT)}\right) \\
&+O\left(\frac{Bd^{3/2} T^{1-\alpha}\sqrt{\ln(1/\delta)\log(kT)}}{\epsilon}\right)\\
&+O\left(\frac{Bd^{5/2}\sqrt{\ln(1/\delta)\log(kT)}}{\epsilon}\right).
\end{aligned}
\end{equation*}

\medskip

\noindent\textbf{Communication cost.} Notice that the communicating data in each phase is the local average performance $y_l^u(x)$ for each chosen action $x$ in the support set $\text{supp}(\pi_l)$. 
Therefore, the total communication cost is 
\begin{equation*}
C(T) =\sum_{l=1}^L s_l |U_l|\leq \sum_{l=1}^L (4d\log \log d+16)\cdot 2^{\alpha l} =O(dT^\alpha).
\end{equation*}
\end{proof}
\begin{proof}[Proof of Theorem~\ref{thm:regret_dp_dpe}[LDP-DPE]
Under the ``good" event, the regret in the $(l+1)$-th phase satisfies
\begin{equation*}
\begin{aligned}
r_{l+1} &\leq \text{\textcircled{1}} + \text{\textcircled{2}} + \text{\textcircled{3}}\\
& \leq  8\sqrt{2d\log \left(\frac{1}{\beta}\right)} \left(\sqrt{h_1\cdot 2^{(1-\alpha)l}}+ \frac{S}{\sqrt{h_1\cdot 2^{(\alpha+1)l}}}\right) \\
& + 4\sigma\sqrt{2\log \left(\frac{1}{\beta}\right)} \left(h_1\sqrt{2^{(2-\alpha) l}}+ \frac{S}{\sqrt{2^{\alpha l}}}\right)\\
& +    8\sigma_{0}S\sqrt{2d\log\left(\frac{1}{\beta}\right)} \left(h_1\cdot\sqrt{2^{(2-\alpha)l}}+ \frac{S}{\sqrt{2^{\alpha l}}}\right).
\end{aligned}
\end{equation*}
Assume the ``good" event hold in every phase, i.e., under event $\cE_g = \bigcap_{l=1}^L\cE_l$. We have $P\{\cE_g\}\geq 1-3k\beta L$ by applying union bound. Then, the total regret satisfies
\begin{equation*}
\begin{aligned}
R_g& =  \sum_{l=1}^L r_l \\
\leq & 2(h_1 + S) \\
&+ \!\sum_{l=2}^L 8\sqrt{2d\log \left(\frac{1}{\beta}\right)} \left(\sqrt{h_1 2^{(1-\alpha)(l-1)}}+ \frac{S}{\sqrt{h_12^{(\alpha+1)(l-1)}}}\right) \\
& + \!\sum_{l=2}^L 4\sigma\sqrt{2\log \left(\frac{1}{\beta}\right)} \left(h_1\sqrt{2^{(2-\alpha) (l-1)}}+ \frac{S}{\sqrt{2^{\alpha (l-1)}}}\right)\\
& + \!\sum_{l=2}^L 8\sigma_0S\sqrt{d}\sqrt{2\log \left(\frac{1}{\beta}\right)} \left(h_1\sqrt{2^{(2-\alpha) (l-1)}}+ \frac{S}{\sqrt{2^{\alpha (l-1)}}}\right)\\
\overset{(a)}{\leq} & 2(h_1+S)\\
&+ 8\sqrt{2d\log(1/\beta)}\cdot C_0\sqrt{h_12^{(L-1)(1-\alpha)}} + \frac{8S\sqrt{2d\log (1/\beta)}}{\sqrt{h_1}(\sqrt{2}-1)}\\
&+ 4\sigma\sqrt{2\log(1/\beta)} \left(4h_1 \sqrt{2^{(L-1)(2-\alpha)}}+C_1S\right)\\
&+ 8\sigma_0S\sqrt{d}\sqrt{2\log(1/\beta)} \left(4h_1 \sqrt{2^{(L-1)(2-\alpha)}}+C_1S\right),
\end{aligned}
\end{equation*}
where in $(a)$ $C_0=\frac{\sqrt{2^{1-\alpha}}}{\sqrt{2^{1-\alpha}}-1}$ and $C_1 = \sum_{l=2}^{\infty} \frac{1}{\sqrt{2^{\alpha(l-1)}}}$. Then, 
\begin{equation*}
\begin{aligned}
R_g 
&\leq 2(h_1+S)\\
&+ 8\sqrt{2d\log(1/\beta)}\cdot C_0\sqrt{h_1}(\sqrt{T/h_1})^{1-\alpha} \\
&+ 20S\sqrt{2d/h_1\log (1/\beta)}\\
&+ (4\sigma+8\sigma_0 S\sqrt{d})\sqrt{2\log(1/\beta)} \left(4h_1\sqrt{(T/h_1)^{2-\alpha}}+C_1S\right)\\
\overset{(a)}{\leq} & 2(h_1+S)\\
&+ 8C_0\sqrt{2dh_1^{\alpha}T^{1-\alpha}\log(kT)}\\
&+ 20S\sqrt{2d/h_1\log (kT)}\\
&+ (16\sigma+32\sigma_0 S\sqrt{d})\sqrt{2h_1^{\alpha}\log(kT)} T^{1-\alpha/2}\\
&+4C_1S\sigma\sqrt{2\log(kT)}+8C_1\sigma_0S^2\sqrt{2d\log(kT)}\\
= & O\left(\sqrt{dT^{1-\alpha}\log(kT)}\right) \\&+ O\left(\sigma T^{1-\alpha/2}\sqrt{\log(kT)}\right) \\& +O\left(\sigma_0 T^{1-\alpha/2}S\sqrt{d\log(kT)}\right)\\&+O\left(\sigma_0S^2\sqrt{d\log(kT)}\right)\\
= & 
O\left(\sigma T^{1-\alpha/2}\sqrt{\log(kT)}\right) \\&+O\left(\frac{Bd^{3/2} T^{1-\alpha/2}\sqrt{\ln(1/\delta)\log(kT)}}{\epsilon}\right)\\&+O\left(\frac{Bd^{5/2}\sqrt{\ln(1/\delta)\log(kT)}}{\epsilon}\right),
\end{aligned}
\end{equation*}
where the first inequality is from $h_1\cdot 2^{L-1} = h_L \leq T_L \leq T$, $(a)$ is by setting $\beta=\frac{1}{kT}$, and the last equation is derived by substituting $\sigma_0 = \frac{2B\sqrt{2\ln(1.25/\delta)}}{\epsilon}$ and $S=4d\log\log d+16$. 

\smallskip 

Finally, the expected regret of LDP-DPE algorithm is upper bounded by 
\begin{equation}
\begin{aligned}
&\E[R(T)] = P(\mathcal{E}_g)R_g + (1-P(\mathcal{E}_g)) \cdot 2T \\
& \leq R_g + 2\beta k L \cdot 2T \\
&=O\left(\sigma T^{1-\alpha/2}\sqrt{\log(kT)}\right) \\&+O\left(\frac{Bd^{3/2} T^{1-\alpha/2}\sqrt{\ln(1/\delta)\log(kT)}}{\epsilon}\right)\\&+O\left(\frac{Bd^{5/2}\sqrt{\ln(1/\delta)\log(kT)}}{\epsilon}\right).
\end{aligned}
\end{equation}

\medskip

\noindent\textbf{Communication cost.} In the local model, the communicating data in each phase is the private local average reward $(y_l^u(x)+\gamma_{u,j_x})$ for each chosen action $x$ in the support set $\text{supp}(\pi_l)$. It is still an $s_l$-dimensional vector from each client. 
Therefore, the total communication cost is 
\begin{equation*}
C(T) =\sum_{l=1}^L s_l |U_l|\leq \sum_{l=1}^L (4d\log \log d+16)\cdot 2^{\alpha l} =O(dT^\alpha).
\end{equation*}

\end{proof}
\begin{proof}[Proof of Theorem~\ref{thm:regret_dp_dpe}[SDP-DPE]
Under the ``good" event, the regret in the $(l+1)$-th phase satisfies
\begin{equation}
\begin{aligned}
r_{l+1} &\leq \text{\textcircled{1}} + \text{\textcircled{2}} + \text{\textcircled{3}}\\
& \leq  8\sqrt{2d\log \left(\frac{1}{\beta}\right)} \left(\sqrt{h_1 2^{(1-\alpha)l}}+ \frac{S}{\sqrt{h_12^{(\alpha+1)l}}}\right) \\
& + 4\sigma\sqrt{2\log \left(\frac{1}{\beta}\right)} \left(h_1\sqrt{2^{(2-\alpha) l}}+ \frac{S}{\sqrt{2^{\alpha l}}}\right)\\
& +  8\sigma_0^{\prime}S\sqrt{2d\log\left(\frac{1}{\beta}\right)} \left(h_1 2^{(1-\alpha)l}+ \frac{S}{2^{\alpha l}}\right).
\end{aligned}
\end{equation}
Assume the ``good" event hold in every phase, i.e., under event $\cE_g = \bigcap_{l=1}^L\cE_l$. We have $P\{\cE_g\}\geq 1-3k\beta L$ by applying union bound. Notice that the regret $r_{l+1}$ under SDP-DPE share the same form as CDP-DPE except replacing $\sigma_0$ in CDP-DPE with $\sigma_0^{\prime}$. Hence, we have the total regret satisfies
\begin{equation*}
\begin{aligned}
R_g 
= & O\left(\sqrt{dT^{1-\alpha}\log(kT)}\right) \\
&+ O\left(\sigma T^{1-\alpha/2}\sqrt{\log(kT)}\right) \\
& +O\left(\sigma_0^{\prime} T^{1-\alpha}S\sqrt{d\log(kT)}\right)\\
&+O\left(\sigma_0^{\prime}S^2\sqrt{d\log(kT)}\right)\\
= & 
O\left(\sigma T^{1-\alpha/2}\sqrt{\log(kT)}\right) \\
&+O\left(\frac{Bd^{3/2} T^{1-\alpha}\ln(d/\delta)\sqrt{\log(kT)}}{\epsilon}\right)\\
&+O\left(\frac{Bd^{5/2}\ln(d/\delta)\sqrt{\log(kT)}}{\epsilon}\right),
\end{aligned}
\end{equation*}
where the last step is derived by substituting $\sigma_0^{\prime} = \frac{C_sB\ln(S/\delta)}{\epsilon}$ and $S=O(d)$. 

\smallskip 

Finally, the expected regret of SDP-DPE algorithm is upper bounded by
\begin{equation}
\begin{aligned}
&\E[R(T)] = P(\mathcal{E}_g)R_g + (1-P(\mathcal{E}_g)) \cdot 2T \\
& \leq R_g + 2\beta k L \cdot 2T \\
&=O\left(\sigma T^{1-\alpha/2}\sqrt{\log(kT)}\right) \\
&+O\left(\frac{Bd^{3/2} T^{1-\alpha}\ln(d/\delta)\sqrt{\log(kT)}}{\epsilon}\right)\\
&+O\left(\frac{Bd^{5/2}\ln(d/\delta)\sqrt{\log(kT)}}{\epsilon}\right).
\end{aligned}
\end{equation}

\medskip

\noindent\textbf{Communication cost.} The communicate cost in the shuffle model is slightly different from the central model and the local model because it communicates $(g+b)s_l$ bits from each participating client in the $l$-th phase instead of an $s_l$-dimensional real vector. Based on our setting (Eq.~\eqref{eq:gbp_set} in Algorithm~\ref{alg:shuffle_vec}), we have $(g+b) =\max\left\{O\left(\frac{\sqrt{|U_l|}}{\ln(d)}\right), O\left(\sqrt{d}+\frac{d\ln(d)}{|U_l|}\right), O\left(\frac{\ln(d)}{|U_l|}\right)\right\}$. Combining $s_l\leq S \approx O(d)$, the total communication cost is
$$\sum_{l=1}^L(g+b)s_l|U_l| = O\left(\sum_{l=1}^L \frac{2^{\frac{3}{2}\alpha l}S}{\ln (d)}\right) = O\left(dT^{(3/2)\alpha}\right).$$

\end{proof}

\section{Pure DP-DPE Algorithm Design and Analysis
}\label{app:laplace}
Besides the Gaussian mechanism used in this work, another differential privacy mechanism, the Laplace mechanism, also works well with our algorithmic framework. Different from Gaussian, which ensures approximate DP (i.e., $(\epsilon, \delta)$-DP), Laplace provides pure DP ($\epsilon$-DP or $(\epsilon,0)$-DP). We claim that \emph{our proposed scheme in this paper can be effectively integrated with the Laplace mechanism, which ensures a pure DP and achieves nearly the same regret performance}. 

\medskip

In this appendix, we provide how to modify the algorithm and derive the theoretical results for the Laplace mechanism. 
We first present the standard Laplace mechanism below for reference.

\begin{definition}\label{def:laplace_mech}
({The Laplace Mechanism} \cite{dwork2014algorithmic}).  
Given any vector-valued function $f: \mathbb{N}^{|\mathcal{X}| }\to \mathbb{R}^s$, the Laplace mechanism is definied as:
$$\mathcal{M}_L(x, f(\cdot), \epsilon) = f(x)+(\gamma_1, \dots, \gamma_s),$$
where $\gamma_j$ are i.i.d. random variables drawn from Lap$(\Delta_1)/\epsilon$.
\end{definition}
\begin{theorem}\cite{dwork2014algorithmic}\label{thm:dp}
The Laplace mechanism preserves $(\epsilon,0)$-differential privacy. 
\end{theorem}

Take the central DP model as an example. When adding noise in CDP-DPE (ref.~Eq.~(5) of our manuscript), we let $\gamma_j \overset{\emph{i.i.d.}}{\sim} \text{Lap}(b)$, where $b \triangleq 2Bs_l/(\epsilon |U_l|)$ and $\Delta_1 = 2Bs_l/|U_l|$ is the $\ell_1$-sensitivity of $\frac{1}{|U_l|}\sum_{u\in U_l} \Vec{y}_{l}^u$. From Theorem~\ref{thm:dp}, it is not difficult to show $\epsilon$-DP. 

Then, we need to adjust the confidence width correspondingly as follows:
\begin{equation}
\begin{aligned}
\!W_l\triangleq &\left(\underbrace{\sqrt{\frac{2d}{|U_l|h_l}}}_{\text{action-related}} + \underbrace{\frac{\sigma}{\sqrt{|U_l|}}}_{\text{client-related}} \right) \sqrt{2\log \left(\frac{1}{\beta}\right)} \\
&+\underbrace{\left(2s_ldb + 2db\sqrt{2\log \left(\frac{1}{\beta}\right)}\right)}_{W_{l,p}: \text{ privacy-related}}, \!\label{eq:W_l_pure}
\end{aligned}
\end{equation}
where we denote the last privacy-related term as $W_{l,p} \triangleq 2s_ldb + 2db\sqrt{2\log \left(\frac{1}{\beta}\right)}$.

According to Lemma 4 in \cite{hanna2022differentially}, we can still derive the corresponding concentration inequality as Eq.~\eqref{eq:concentration_privacy_CDP} when using Gaussian mechanism in CDP-DPE, i.e., 
$$
\begin{aligned}
&P\left\{ \left\langle x', V_l^{-1}\sum_{x\in \text{supp}(\pi_l)} xT_l(x)\gamma_p(x)\right\rangle \geq W_{l,p}\right\} \\
& \leq \exp \left\{ \frac{s_l}{2}-\frac{2s_ldb +2db\log \left(\frac{1}{\beta}\right)} {2db }\right\}\\
& \leq \exp \left\{-\frac{s_l}{2} - \log \left(\frac{1}{\beta}\right)\right\}\leq \beta.
\end{aligned}
$$

By replacing $W_l$ in the $l$-th phase for any $1\leq l\leq L$ as identified in Eq.~\eqref{eq:W_l_pure}, we have the regret in the $(l+1)$-th phase under the ``good event'' (Eq.~(38) in the manuscript) becomes 
\begin{equation}
\begin{aligned}
r_{l+1}
& \leq  8\sqrt{2d\log \left(\frac{1}{\beta}\right)} \left(\sqrt{h_1\cdot 2^{(1-\alpha)l}}+ \frac{S}{\sqrt{h_1\cdot 2^{(\alpha+1)l}}}\right) \\
& + 4\sigma\sqrt{2\log \left(\frac{1}{\beta}\right)} \left(h_1\sqrt{2^{(2-\alpha) l}}+ \frac{S}{\sqrt{2^{\alpha l}}}\right)\\
& + \left(\frac{8BSd}{\epsilon}\sqrt{2\log\left(\frac{1}{\beta}\right)} + \frac{4BSd}{\epsilon}\right) \left(h_1\cdot 2^{(1-\alpha)l}+ \frac{S}{2^{\alpha l}}\right).
\end{aligned}
\end{equation}
Finally, we get the expected regret of the CDP-DPE algorithm is upper bounded by 
\begin{equation}
\begin{aligned}
\mathbb{E}[R_g] =&  O\left(\sigma T^{1-\alpha/2}\sqrt{\log(kT)}\right) \\
&+ O\left(\frac{Bd^2 T^{1-\alpha}\sqrt{\log(kT)}}{\epsilon}\right)\\
&+O\left(\frac{Bd^3\sqrt{\log(kT)}}{\epsilon}\right),
\end{aligned}
\end{equation}
which is almost the same as the result in our work using the Gaussian mechanism, except for an extra $\sqrt{d}$ ratio in the last two terms. This difference is quite intuitive since the Laplace mechanism depends on the $\ell_1$-sensitivity of the raw data ($\frac{1}{|U_l|}\sum_{u\in U_l} \Vec{y}_{l}^u$)  while the Gaussian mechanism depends on its $\ell_2$-sensitivity.

Recall that the above algorithm design and results are for DP-DPE employing the Laplace mechanism in the central DP model. One can apply the local model and shuffle model correspondingly in DP-DPE and derive the corresponding results when ensuring pure DP.

\section{Differentially Private Linear Bandits}\label{app:dplb}
In this section, we consider the standard stochastic linear bandits \cite{lattimore2020bandit} 
and provide differentially private algorithms in the central, local, and shuffle DP models.  

\subsection{Model and Algorithmic Framework}
\smallskip
\noindent\textbf{Stochastic linear bandits.} 
In the stochastic linear bandits, there is no client-related uncertainty, and any user/client $u$ can provide direct (noisy) reward observations (i.e., $\theta_u = \theta^*$ in our notations). Specifically, at each round $t$, the learning agent selects an action $x_t$ from the decision set $\cD \subseteq \{x\in \R^d: \Vert x\Vert_2^2\leq 1\}$ with $|\cD|=k$ and receives a reward with mean $\langle \theta^*, x_t \rangle$, where  $\theta^*\in \R^d$ with $\Vert\theta^*\Vert_2\leq 1$ is unknown to the agent. The goal of the agent is to maximize the cumulative reward in $T$ rounds by selecting $x_t$ sequentially. Without knowing $\theta^*$, the agent learns it gradually by collecting the noisy reward observation $y_t = \langle \theta^*, x_t \rangle +\eta_t$ at each $t\in [T]$ from a client.  The noise $\eta_t$ is assume to be conditionally $1$-sub-Gaussian and \emph{i.i.d.} over time. Moreover, we assume that the reward observations are bounded, i.e., $|y_t|\leq B$ for all $t\in [T]$.
Let $x^* \in \argmax_{x\in\cD} \langle \theta^*, x\rangle$	be an optimal action.
Then, the objective of maximizing the cumulative reward is  equivalent to minimizing the regret defined 
as follows:
\begin{equation}
R(T) \triangleq T \langle  \theta^*, x^* \rangle - \sum_{t=1}^T \langle  \theta^*,x_{t} \rangle.
\end{equation}

\noindent\textbf{Privacy. } 
To protect clients' privacy involved in their reward observations, we still consider differential privacy (DP) guarantee in the three trust models in Section~\ref{sec:DP-instantiation} when collecting clients' observations $y_t$ for all $t$.

\smallskip
\noindent\textbf{DP algorithmic framework.} 
To ensure DP in the standard stochastic linear bandits, we only need to address the challenges (\textcircled{c} and \textcircled{d}) mentioned Section~\ref{sec:challenges}.
Following a similar way of ensuring DP with a general \textsc{Privatizer} $\cP=(\cR, \cS, \cA)$, we slightly modify our DP-DPE framework in Section~\ref{sec:alg_design} and design a new differentially private phased elimination algorithmic framework (DP-PE) for the standard linear bandits. We present the detailed pseudo-code of DP-PE in Algorithm~\ref{alg:dp-pe}. 

\begin{algorithm}[ht]
\caption{Differentially Private Phased Elimination (DP-PE)}
\label{alg:dp-pe}
\begin{algorithmic}[1]
\STATE \textbf{Input:} $\cD\subseteq \R^d$, $\phi\in (0,1)$, $\alpha\in (0,1)$, $\beta\in (0,1)$, and $\sigma_n$ 
\STATE \textbf{Initialization:} $l=1$, $t_1=1$, $\cD_1=\cD$, and $h_1= 4d\log \log d +16$
\WHILE{$t_l\leq T$} 
\STATE Find a distribution $\pi_l(\cdot)$ over $ \cD_l$ such that  $ g(\pi_l) \triangleq \max_{x\in \cD_l} \Vert x \Vert_{V(\pi_l)^{-1}}^2 \leq 2d$ and $|\text{supp}(\pi_l)|\leq 4d\log\log d +16$, where $V(\pi_l) \triangleq \sum_{x\in\cD_l} \pi_l(x)xx^{\top}$			
\FOR{each action $x\in \text{supp}(\pi_l)$} 
\STATE Select $T_l(x)=\lceil h_l \pi_l(x)\rceil$ clients, denoted as $U_l(x)$ 
\FOR{each client $u$ in $U_l(x)$}
\STATE Play the action $x$ and observes the reward $y_u(x)$
\item[] \deemph{\# The local randomizer $\cR$ at each client:}
\STATE Run the local randomizer $\mathcal{R}$ and send the output $\cR(y_u(x))$ to $\cS$ 
\STATE If the total number of action pulls reaches $T$, exit
\ENDFOR
\item[] \deemph{\# Computation $\cS$ at a trusted third party:}
\STATE Run the computation function $\cS$ and send the output $\cS(\{\cR(y_u(x))\}_{u\in U_l(x)})$ to the analyzer $\cA$
\item[] \deemph{\# The analyzer $\cA$ at the server:}
\STATE Generate the privately aggregated statistics: $\Tilde{y}_l(x) = \cA\left(\cS\left(\{\cR(y_u)\}_{u\in U_l(x)}\right)\right)$ 
\ENDFOR
\STATE Compute the following quantities:
\begin{equation}
\begin{cases}
V_l = \sum_{x\in \text{supp}(\pi_l)} T_l(x)xx^\top \\
G_l = \sum_{x\in \text{supp}(\pi_l)}   x \tilde{y}_l(x)\\
\Tilde{\theta}_l = V_l^{-1}G_l \label{eq:theta_est}
\end{cases}    
\end{equation}
\STATE Find low-rewarding actions with confidence width $W_l$: \label{alg_line_elimination_lb}%
$$E_l = \left\{x\in \cD_l: \max_{b\in \cD_l} \langle \Tilde{\theta}_l, b-x\rangle > 2W_l\right\}$$ \label{alg_elimination_lb}
\STATE Update: $\cD_{l+1} = \cD_{l}\backslash E_l$, $h_{l+1} = 2h_l$, 
$t_{l+1} = t_l+T_l$, and $l = l+1$ 
\ENDWHILE
\end{algorithmic}
\end{algorithm}

The DP-PE algorithm runs in phases and maintains a set of active actions $\cD_l$, which is updated at the end of each phase. At a high level, each phase consists of the following steps. First, compute a near-$G$-optimal design $\pi_l(\cdot)$(i.e., a distribution) over a set of possibly optimal actions $\cD_l$. For each action $x$ in the support set of $\pi_l$, send $x$ to $T_l(x)$ clients, denoted as $U_l(x)$, where the action $x$ is played and a reward $y_u(x)$ is observed at each client $u\in U_l(x)$. Before being used to estimate $\theta^*$, the reward observations $y_u(x)$ at all clients $u \in U_l(x)$ for each chosen action $x$ is processed by a $\textsc{Privatizer}$ $\cP$ to ensure differential privacy as in the DP-DPE algorithm. We still consider a \textsc{Privatizer}~$\cP = (\cR, \cS, \cA)$ as a process completed by the clients, the server, and/or a trusted third party. As instantiations of $\cP$, we also consider the central, local and shuffle models and provide the detailed implementations of $\cR, \cS, \cA$ in Section~\ref{sec:DP-instantiation-brief}. In all the DP models, the final output $\tilde{y}_l(x)$ of $\cP$ for each action $x$ is a private sum  of its reward observations. With the aggregated statistics $\tilde{y}_l(x)$ for each action $x\in\text{supp}(\pi_l)$, the agent computes the least-square estimator $\tilde{\theta}_l$ according to Eq.~\eqref{eq:theta_est}. Finally, low-rewarding actions are eliminated from $\cD_l$ (Line~\ref{alg_elimination_lb}) based on the following confidence width:
\begin{equation}
\begin{aligned}
W_l\triangleq &\left(\underbrace{\sqrt{\frac{2d}{h_l}}}_{\text{action-related}}  +\underbrace{\sigma_{n}}_{\text{privacy noise}}\right) \sqrt{2\log \left(\frac{1}{\beta}\right)},
\end{aligned}
\end{equation}
where $\sigma_n$ is determined by the privacy noise added in the DP model. 

\subsection{DP-PE Instantiations with different DP Models}\label{sec:DP-instantiation-brief}

We now briefly explain how to instantiate the \textsc{Privatizer} $\cP = (\cR,\cS,\cA)$ in DP-PE using the three representative DP trust models: the central, local, and shuffle models.
In addition, we also present the formal definition of the privacy guarantees regarding $\cP$ under each trust model, which further implies the respective privacy guarantee of DP-PE according to the post-processing property of DP \cite[Proposition~2.1]{dwork2014algorithmic}. 

\subsubsection{The Central Model}
In the central model, each client trusts the server, and the outputs of the server on two neighboring datasets (differing by only one client) should be indistinguishable~\cite{dwork2006calibrating}.

Consider a particular phase $l$ and an action $x$ in supp$(\pi_l)$. The \textsc{Privatizer} $\cP$ is $(\epsilon, \delta)$-differentially-private (or $(\epsilon, \delta)$-DP) if the following is satisfied for any pair of $U_l(x), U_l^{\prime}(x)\subseteq \cU$ that differ by at most one client and for any output $\tilde{y}$ of $\cA$:
\begin{equation*}
\mathbb{P}[\cP\left(\{y_u(x)\}_{u\in U_l(x)}\right)= \tilde{y}] \leq e^{\epsilon} \cdot \mathbb{P}[\cP\left(\{y_u(x)\}_{u\in U_l^{\prime}(x)}\right)= \tilde{y}] + \delta.
\end{equation*}

To achieve this, the \textsc{Privatizer} functions as follows: while both $\cR$ and $\cS$ are simply identity mappings, $\cA$ adds well-tuned Gaussian noise to the sum of $T_l(x)$ reward observations from clients $U_l(x)$ for each action $x$ in $\text{supp}(\pi_l)$ for privacy. That is, 
\begin{equation}
\begin{aligned}
\Tilde{y}_l(x) &= \cP\left(\{y_u(x)\}_{u\in U_l(x)}\right) = \cA\left(\{y_u(x)\}_{u\in U_l(x)}\right)\\
&= \sum_{u\in  U_l(x)} y_u(x)+\gamma_x, \quad \forall x\in \text{supp}(\pi_l), \label{eq:privatizer_cdp_lb}   
\end{aligned}
\end{equation}
where $\gamma_x {\sim} \cN(0,\sigma^2_{nc})$ is \emph{i.i.d.} across actions, and the variance $\sigma^2_{nc}$ depends on the sensitivity of $\sum_{u\in U_l(x)}y_u(x)$, which is $2B$. 
Combining the Gaussian mechanism in Theorem~\ref{thm:gaussian_mech} with the post-processing property of DP in \cite{dwork2014algorithmic}, it is not difficult to obtain the following DP guarantee.
\begin{theorem} \label{thm:lb_cdp}
The DP-PE instantiation using the \textsc{Privatizer} in Eq.~\eqref{eq:privatizer_cdp_lb} with $\sigma_{nc} = \frac{2B\sqrt{2\ln(1.25/\delta)}}{\epsilon} 
$ guarantees $(\epsilon, \delta)$-DP.
\end{theorem}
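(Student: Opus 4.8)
The plan is to reduce the claim to a single application of the Gaussian Mechanism (Theorem~\ref{thm:gaussian_mech}), exactly mirroring the argument used for Theorem~\ref{thm:cdp} in the distributed setting. The starting observation is that in the central model both $\cR$ and $\cS$ are identity maps, so the only randomness injected for privacy lives in the analyzer $\cA$. By the post-processing property of DP \cite[Proposition~2.1]{dwork2014algorithmic}, the action sequence $\cM(\cU_T)$ output by DP-PE is a data-independent function of the privatized aggregates $\{\Tilde{y}_l(x)\}$, so it suffices to establish that each analyzer output $\Tilde{y}_l(x) = \sum_{u\in U_l(x)} y_u(x) + \gamma_x$ is $(\epsilon,\delta)$-DP with respect to its own input dataset $\{y_u(x)\}_{u\in U_l(x)}$.

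First I would compute the $\ell_2$-sensitivity of the scalar statistic $f(\{y_u(x)\}) = \sum_{u\in U_l(x)} y_u(x)$. For two neighboring client sets $U_l(x), U_l'(x)$ differing in a single client, the sum changes by at most one reward difference; since $|y_u(x)|\leq B$ for all $u$ and $x$, this gives $\Delta_2 = \max |y_u(x) - y_{u'}(x)| \leq 2B$. Substituting $\Delta_2 = 2B$ into the Gaussian Mechanism with noise standard deviation $\sigma_{nc} = \Delta_2\sqrt{2\ln(1.25/\delta)}/\epsilon = 2B\sqrt{2\ln(1.25/\delta)}/\epsilon$ immediately yields that the per-action aggregate is $(\epsilon,\delta)$-DP. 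This is a one-line calculation, so no heavy algebra is involved here.

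The remaining---and only genuinely structural---step is the composition argument, which is where I expect the main subtlety to lie. Naively, releasing privatized aggregates for every action $x \in \text{supp}(\pi_l)$ across all $L$ phases might appear to consume the budget once per aggregate, degrading the guarantee by the number of released statistics. The key point to exploit is that in DP-PE each client is sampled freshly and plays exactly once: the client sets $U_l(x)$ are disjoint across distinct actions $x$ within a phase and across distinct phases. Consequently, changing a single client in the neighboring relation affects exactly one of these aggregates while leaving all others identical, so the releases combine by \emph{parallel} composition rather than sequential composition. This preserves the overall guarantee at $(\epsilon,\delta)$ instead of scaling it up. I would state this reliance on the uniqueness of client sampling explicitly, just as in the proof of Theorem~\ref{thm:cdp}.

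Putting these pieces together, the per-aggregate Gaussian-mechanism guarantee, combined with parallel composition over the disjoint client sets and post-processing for the action selection, upgrades to a full $(\epsilon,\delta)$-DP guarantee for the DP-PE instantiation, completing the proof. The entire argument is careful bookkeeping of which disjointness structure licenses parallel (rather than sequential) composition, with the sensitivity bound being the only computation.
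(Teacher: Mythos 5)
Your proposal is correct and follows essentially the same route as the paper: compute the $\ell_2$-sensitivity of the per-action sum $\sum_{u\in U_l(x)} y_u(x)$ as $2B$, invoke the Gaussian mechanism of Theorem~\ref{thm:gaussian_mech}, and lift the per-aggregate guarantee to the whole algorithm via parallel composition over the disjoint client sets and post-processing. The paper states this only in a sentence, so your explicit treatment of why parallel (rather than sequential) composition applies is a faithful elaboration of the intended argument, not a deviation from it.
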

At the same time, with the above DP guarantee, we can show that the regret under DP-PE in the central model satisfies the following result. 
\begin{theorem}[CDP-PE]\label{thm:regret_cdp_lb}
With $\sigma_n = 2d\sigma_{nc}\sqrt{s_l}/h_l$ in each phase $l$ and $\beta=1/(kT)$, the  DP-PE algorithm with the central model \textsc{Pivatizer} achieves expected regret
\begin{equation*}
\begin{aligned}
\E[R(T)] &= O(\sqrt{dT\log (kT)} \\
&+O\left( \frac{Bd^{3/2}\log(T)\sqrt{\ln(1/\delta)\log(kT)}}{\epsilon} \right) \\
&+ O\left(\frac{Bd^{3/2}\sqrt{\ln(1/\delta)\log(kT)}}{\epsilon}\right).
\end{aligned}
\end{equation*}
\end{theorem}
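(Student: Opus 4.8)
The plan is to mirror the analysis of Theorem~\ref{thm:regret_approx} and Theorem~\ref{thm:regret_cdp}, specialized to the degenerate case $\sigma=0$, $\theta_u=\theta^*$, and to the modified aggregation $G_l=\sum_{x}x\tilde y_l(x)$ that sums rewards rather than averaging over a growing client pool. The privacy guarantee itself is already in hand: by the Gaussian mechanism (Theorem~\ref{thm:gaussian_mech}) applied to $\sum_{u\in U_l(x)}y_u(x)$, whose $\ell_2$-sensitivity is $2B$, together with post-processing and parallel composition across the uniquely sampled clients, the choice $\sigma_{nc}=2B\sqrt{2\ln(1.25/\delta)}/\epsilon$ yields Theorem~\ref{thm:lb_cdp}. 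So the work lies entirely in the regret bound.

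First I would establish the per-phase concentration $P\{\langle\tilde\theta_l-\theta^*,x\rangle\ge W_l\}\le 2\beta$ for every $x\in\cD$ (and its mirror image). Writing $y_u(x)=\langle\theta^*,x\rangle+\eta_u$ and $\tilde y_l(x)=\sum_{u\in U_l(x)}y_u(x)+\gamma_x$, the estimator telescopes to $\tilde\theta_l=\theta^*+V_l^{-1}\sum_{x'} x'\bigl(\sum_{u}\eta_u+\gamma_{x'}\bigr)$, so $\langle\tilde\theta_l-\theta^*,x\rangle$ splits into a reward-noise part and a privacy-noise part. The reward-noise part $\langle x,V_l^{-1}\sum_{x'}x'\sum_u\eta_u\rangle$ has variance proxy $\|x\|_{V_l^{-1}}^2\le 2d/h_l$ by the near-$G$-optimal design, giving a tail $\le\beta$ at level $\sqrt{2d/h_l}\,\sqrt{2\log(1/\beta)}$. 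The privacy part $\sum_{x'}\langle x,V_l^{-1}x'\rangle\gamma_{x'}$ is a centered Gaussian with variance $\sigma_{nc}^2\sum_{x'}\langle x,V_l^{-1}x'\rangle^2$; bounding each inner product by Cauchy--Schwarz as $\langle x,V_l^{-1}x'\rangle^2\le\|x\|_{V_l^{-1}}^2\|x'\|_{V_l^{-1}}^2\le(2d/h_l)^2$ and summing over the $s_l$ support actions yields variance $\le s_l(2d/h_l)^2\sigma_{nc}^2=\sigma_n^2$, exactly matching the prescribed $\sigma_n=2d\sigma_{nc}\sqrt{s_l}/h_l$; its tail is $\le\beta$ at level $\sigma_n\sqrt{2\log(1/\beta)}$. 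A union bound over the two parts delivers the claim.

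Next I would carry over the two ``good event'' observations used for DPE: on $\cE_l=\{\,|\langle\theta^*-\tilde\theta_l,x^*\rangle|\le W_l$ and $\langle\tilde\theta_l-\theta^*,x\rangle\le W_l$ for all $x\ne x^*\}$, which holds with probability $\ge 1-2k\beta$, the optimal arm is never eliminated and every surviving $x\in\cD_{l+1}$ obeys $\langle\theta^*,x^*-x\rangle\le 4W_l$. Hence, on $\cE_g=\bigcap_l\cE_l$ (probability $\ge 1-2k\beta L$), the phase-$l$ regret is $r_l\le 4T_lW_{l-1}$ with $T_l\le h_l+S$ and $h_l=h_12^{l-1}$. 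Summing, the action-related contribution $\sum_l T_l\sqrt{2d/h_{l-1}}\,\sqrt{2\log(1/\beta)}$ is a geometric series dominated by its last term $\asymp\sqrt{d\,h_L}\le\sqrt{dT}$, producing $O(\sqrt{dT\log(kT)})$ once $\beta=1/(kT)$. The privacy contribution $\sum_l T_l\sigma_{n,l-1}\sqrt{2\log(1/\beta)}$ I would split using $T_l\le h_l+S$: the $h_l$ piece contributes $\asymp d\sigma_{nc}\sqrt{s_{l-1}}$ per phase, a quantity constant in $l$, so summing over the $L=O(\log T)$ phases gives the $O(d^{3/2}\sigma_{nc}\log T)$ term, while the $S$ piece contributes $\asymp dS^{3/2}\sigma_{nc}/h_{l-1}$, whose sum is a convergent geometric series giving the $T$-independent $O(d^{3/2}\sigma_{nc})$ term. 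Substituting $\sigma_{nc}=2B\sqrt{2\ln(1.25/\delta)}/\epsilon$ and $S=O(d)$, and absorbing the failure event through $P(\cE_g^c)\cdot 2T\le 4k\beta LT=O(\log T)$, reproduces the three stated terms.

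The main obstacle I anticipate is the privacy-variance computation in the concentration step: because the aggregation here sums rewards and injects a single $\gamma_x$ per action, the Gaussian noise is reweighted by $V_l^{-1}$ in a way that couples all $s_l$ support directions, and one must use the $V_l^{-1}$-Cauchy--Schwarz bound (not the tighter $\sum_{x'}x'x'^\top\preceq V_l$ bound) to land precisely on $\sigma_n=2d\sigma_{nc}\sqrt{s_l}/h_l$. Getting this factor right is exactly what keeps the per-phase privacy penalty at $O(d^{3/2}\sigma_{nc})$ and therefore aggregates to only an $O(\log T)$ rather than polynomial-in-$T$ additive regret --- the ``privacy for free'' conclusion. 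The remaining bookkeeping (geometric summation, the bound $L\le\log(2T/h_1)$, and the $\beta=1/(kT)$ substitution) is routine and parallels the DPE proof.
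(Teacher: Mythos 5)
Your proposal is correct and follows essentially the same route as the paper's proof: the same decomposition of $\tilde\theta_l-\theta^*$ into a reward-noise term (bounded via $\Vert x\Vert_{V_l^{-1}}^2\le 2d/h_l$ from the near-$G$-optimal design) and a privacy-noise term whose variance is bounded by $s_l(2d/h_l)^2\sigma_{nc}^2=\sigma_n^2$ exactly as you describe, followed by the standard good-event elimination argument, the split $T_l\le h_l+S$, and the geometric summation yielding the $O(\sqrt{dT\log(kT)})$, $O(d^{3/2}\sigma_{nc}\log T)$, and $O(d^{3/2}\sigma_{nc})$ terms. No gaps; your identification of the privacy-variance computation as the step that pins down the prescribed $\sigma_n$ matches the paper's treatment.
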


\subsubsection{The Local Model}
In the local DP model, since clients do not trust the server, each client with a local randomizer $\cR$ is responsible for privacy protection by injecting Gaussian noise; $\cS$ is an identity mapping; $\cA$ simply sums up the (private) rewards from $U_l(x)$ corresponding to action $x$. That is,
\begin{equation}
\Tilde{y}_l(x) =\sum_{u\in U_l(x)} \cR(y_u(x)) = \sum_{u\in U_l(x)} \left(y_u(x)+ \gamma_{u,x}\right), \label{eq:privatizer_ldp_lb}
\end{equation}
where $\gamma_{u,x}{\sim}  \mathcal{N}(0,\sigma^2_{nl})$ is \emph{i.i.d.} across clients, and the variance $\sigma^2_{nl}$ is chosen according to the sensitivity of $y_u(x)$, which is also $2B$.
Consider any phase $l$ and any $x$ in $\text{supp}(\pi_l)$. 
Let $Y_u$ be the set of all possible values of the reward observation $y_u(x)$ at client $u$ for any action $x$. The \textsc{Privatizer} $\cP$ 
is $(\epsilon, \delta)$-local-differentially-private (or $(\epsilon, \delta)$-LDP) if the following is satisfied for any client $u$, for any pair of $y_u(x),y_u^{\prime}(x) \in Y_u$, and for any output $o \in \{\cR(y)|y \in Y_u\}$: 
\begin{equation*}
\mathbb{P}[\cR(y) = o ] \leq e^{\epsilon} \cdot \mathbb{P}[\cR(y^{\prime} ) = o] + \delta.
\end{equation*}
With the above definition, we present the privacy guarantee of DP-PE in the local DP model in Theorem~\ref{thm:ldp_lb}.
\begin{theorem} \label{thm:ldp_lb}
The DP-PE instantiation using the \textsc{Privatizer} in Eq.~(\ref{eq:privatizer_ldp_lb}) with $\sigma_{nl} = 
\frac{2B\sqrt{2\ln(1.25/\delta)}}{\epsilon}$ guarantees $(\epsilon, \delta)$-LDP.
\end{theorem}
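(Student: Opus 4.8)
The plan is to reduce the claim to a single application of the Gaussian mechanism (Theorem~\ref{thm:gaussian_mech}) at the level of an individual client's local randomizer $\cR$, following the same template as the proof of Theorem~\ref{thm:ldp} but simplified to the scalar-reward setting of DP-PE. First I would invoke the formal characterization of local differential privacy stated just above the theorem: DP-PE is $(\epsilon,\delta)$-LDP whenever the local randomizer $\cR$ is itself $(\epsilon,\delta)$-DP, because the computation $\cS$ and analyzer $\cA$ only post-process the private messages $\{\cR(y_u(x))\}$, and the post-processing property of DP (Proposition~2.1 in \cite{dwork2014algorithmic}) preserves the guarantee. Thus it suffices to certify that the map $y \mapsto \cR(y) = y + \gamma$ with $\gamma \sim \cN(0,\sigma_{nl}^2)$ is $(\epsilon,\delta)$-DP on the domain $Y_u$ of admissible scalar reward observations.

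Second, I would compute the $\ell_2$-sensitivity of the (identity) query being privatized. In DP-PE each client $u \in U_l(x)$ plays a single action $x$ once and reports a single scalar reward $y_u(x)$, so the input to $\cR$ is one-dimensional. Using the boundedness assumption $|y_u(x)|\leq B$, for any two admissible values $y, y' \in Y_u$ we have $\Delta_2 = \max_{y,y'}|y-y'| \leq 2B$. This is the crucial place where the standard-linear-bandit setting departs from the DP-DLB local model of Theorem~\ref{thm:ldp}: there each client reported an $s_l$-dimensional average-reward vector, giving sensitivity $2B\sqrt{s_l}$, whereas here the per-client contribution collapses to a scalar and the $\sqrt{s_l}$ factor disappears, explaining the simpler form of $\sigma_{nl}$. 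I would then apply Theorem~\ref{thm:gaussian_mech} with $\Delta_2 = 2B$: taking $\sigma_{nl} = \Delta_2\sqrt{2\ln(1.25/\delta)}/\epsilon = 2B\sqrt{2\ln(1.25/\delta)}/\epsilon$ exactly matches the prescribed value and certifies that $\cR$ is $(\epsilon,\delta)$-DP, hence that DP-PE is $(\epsilon,\delta)$-LDP.

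The main obstacle is conceptual rather than technical, since the calculation is routine once the Gaussian mechanism is invoked; the step deserving the most care is the sensitivity argument together with the accompanying justification that no composition is needed. Under the client-sampling assumption each client belongs to only one set $U_l(x)$ and therefore releases a single randomized scalar, so the privacy of the entire interaction is precisely that of one Gaussian-mechanism release. Mis-tracking how many times a client is queried, or inadvertently importing the vector-valued analysis of the DP-DLB case, is the only way the stated $\sigma_{nl}$ could fail to be correct, so I would make that single-release structure explicit before concluding.
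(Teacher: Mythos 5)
Your proposal is correct and follows essentially the same route as the paper: the paper likewise certifies the local randomizer via the Gaussian mechanism (Theorem~\ref{thm:gaussian_mech}) after noting that the per-client release is a single bounded scalar with $\ell_2$-sensitivity $2B$, which yields exactly the stated $\sigma_{nl}$. Your explicit remarks on the disappearance of the $\sqrt{s_l}$ factor relative to Theorem~\ref{thm:ldp} and on the single-release structure (so no composition is needed) are consistent with, and slightly more careful than, the paper's brief justification.
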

\begin{theorem}[LDP-PE]
With $\sigma_n = 2d\sigma_{nc}\sqrt{2s_l/h_l}$ in each phase $l$ and $\beta=1/(kT)$, the  DP-DPE algorithm with the local model \textsc{Pivatizer} achieves expected regret
\begin{equation*}
\begin{aligned}
\E[R(T)] &=O(\sqrt{dT\log (kT)} \\
&+O\left( \frac{Bd^{3/2}\sqrt{\ln(1/\delta)T\log(kT)}}{\epsilon} \right) \\
&+O\left(\frac{Bd^{2}\sqrt{\ln(1/\beta)\log(kT)}}{\epsilon}\right).
\end{aligned}
\end{equation*}
\end{theorem}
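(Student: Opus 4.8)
The plan is to treat this as the no-client-uncertainty ($\sigma=0$) specialization of the distributed bound in Theorem~\ref{thm:regret_ldp}, transported to the DP-PE scheme of Algorithm~\ref{alg:dp-pe}, where each active action $x$ is now played once by each of $T_l(x)=\lceil h_l\pi_l(x)\rceil$ freshly recruited clients rather than repeatedly by a shared client pool. I would follow the same three-stage template used for Theorem~\ref{thm:regret_approx}: first establish a per-phase concentration inequality $\Pr\{\langle\tilde\theta_l-\theta^*,x\rangle\ge W_l\}\le 2\beta$ (and its mirror image), then deduce the two ``good-event'' facts (the optimal arm is never eliminated, and every surviving arm has suboptimality gap at most $4W_l$), and finally sum the per-phase regret and add the failure contribution with $\beta=1/(kT)$.

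For the concentration step, I would expand $\tilde\theta_l=V_l^{-1}G_l$ exactly as in Eq.~\eqref{eq:estimation_average_relation}. Since $\theta_u=\theta^*$, the estimation error $\langle x',\tilde\theta_l-\theta^*\rangle$ splits into only two pieces: a reward-noise term $\langle x',V_l^{-1}\sum_t x_t\eta_t\rangle$ and a privacy-noise term $\langle x',V_l^{-1}\sum_x x\sum_{u\in U_l(x)}\gamma_{u,x}\rangle$. The reward-noise term is $\|x'\|_{V_l^{-1}}$-sub-Gaussian and, by the near-$G$-optimal design ($\|x'\|_{V_l^{-1}}^2\le 2d/h_l$), its tail beyond the first summand of $W_l$ is at most $\beta$, reproducing Eq.~\eqref{eq:W_l1_prob_bnd}. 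The decisive local-model feature appears in the second piece: each of the $T_l(x)$ clients assigned to $x$ injects an independent $\mathcal N(0,\sigma_{nl}^2)$, so $\sum_{u\in U_l(x)}\gamma_{u,x}\sim\mathcal N(0,T_l(x)\sigma_{nl}^2)$ and the aggregate is a zero-mean Gaussian with variance $\sigma_{nl}^2\sum_x\langle x',V_l^{-1}x\rangle^2 T_l(x)$. Bounding each $\langle x',V_l^{-1}x\rangle\le\|x'\|_{V_l^{-1}}\|x\|_{V_l^{-1}}\le 2d/h_l$ by Cauchy--Schwarz, using $T_l(x)\le T_l\le 2h_l$, and summing over the $\le s_l$ support points shows this variance is at most $\sigma_n^2$ for the stated $\sigma_n=2d\sigma_{nl}\sqrt{2s_l/h_l}$; the Gaussian tail then gives $\beta$, and a union bound completes the $2\beta$ claim.

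With the good event in hand, the per-phase regret obeys $r_{l+1}\le 4T_{l+1}W_l$, and substituting $T_{l+1}\le 2h_l+s_l$, $h_l=h_1 2^{l-1}$, and the two summands of $W_l$ separates the total into a reward part and a privacy part. The reward part telescopes into a geometric series dominated by the final phase ($h_L\le T$), producing the $O(\sqrt{dT\log(kT)})$ baseline, as in the PE analysis of \cite{lattimore2020bandit}. The privacy part is where local and central diverge, and it is the main obstacle: because $\sigma_n\propto 1/\sqrt{h_l}$ (one independent draw per \emph{play}, not per action), the per-phase privacy regret scales like $T_{l+1}\sigma_n\propto\sqrt{h_l}$, so the geometric sum is again dominated by the last phase and yields a $\sqrt T$ dependence, giving the $O(Bd^{3/2}\sqrt{\ln(1/\delta)T\log(kT)}/\epsilon)$ term, in sharp contrast to the $O(\log T)$ obtained in the central model of Theorem~\ref{thm:regret_cdp_lb} (where a single noise per action makes $\sigma_n\propto 1/h_l$ and the per-phase privacy regret $O(1)$). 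The residual tails coming from the $+s_l$ rounding term in $T_{l+1}=\sum_x T_l(x)$, summed over the $O(\log T)$ phases against a convergent geometric constant, contribute the lower-order $O(Bd^2\sqrt{\ln(1/\beta)\log(kT)}/\epsilon)$ term. Combining with $R_b\le 2T$ weighted by $\Pr(\mathcal E_g^c)\le 2k\beta L$ and setting $\beta=1/(kT)$ finishes the expected-regret bound; the communication accounting is unchanged and immediate.
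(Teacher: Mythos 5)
Your proposal is correct and follows essentially the same route as the paper's own proof in Appendix~\ref{app:dplb}: the identical decomposition of $\langle x',\tilde\theta_l-\theta^*\rangle$ into a reward-noise and a privacy-noise term, the same variance bound $\sigma_{nl}^2\sum_x \langle x',V_l^{-1}x\rangle^2 T_l(x)\le 8 s_l d^2\sigma_{nl}^2/h_l=\sigma_n^2$ via $\langle x',V_l^{-1}x\rangle\le 2d/h_l$ and $T_l(x)\le 2h_l$, and the same per-phase bound $r_{l+1}\le 4T_{l+1}W_l$ summed as a geometric series dominated by $h_L\le T$. Your diagnosis of why the local model loses a $\sqrt{T}$ factor (one independent noise draw per play makes $\sigma_n\propto 1/\sqrt{h_l}$, versus $1/h_l$ in the central/shuffle models) and where the lower-order $O(Bd^2\sqrt{\log(kT)}/\epsilon)$ residual comes from both match the paper's accounting.
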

\subsubsection{The Shuffle Model}
In the shuffle model, without a trusted agent, we instantiate DP-PE by building on the scalar sum protocol $\cP_{\mathrm{1D}}$ recently developed in \cite{cheu2021shuffle}. 
Consider a particular phase $l$ and any action $x$ in supp$(\pi_l)$. 
Specifically, each local randomizer $\cR$ encodes its inputs (the reward observation $y_u(x)$) by adding random bits; the analyzer $\cA$ outputs the random number whose expectation is the sum of inputs ($\sum_{u\in U_l(x)}y_u(x)$); in addition, a third-party shuffler~$\cS$ is utilized to uniformly at random permute clients' messages (in bits) to hide their sources $u$. We present the concrete pseudocode of $\cR$, $\cS$, and $\cA$ for the shuffle model \textsc{Privatizer} in Algorithm~\ref{alg:scalar_sum}. Finally, the private sum $\tilde{y}_l(x)$ is
\begin{equation*}
\Tilde{y}_l(x) =  \mathcal{P}\left(\{y_u(x)\}_{u\in U_l(x)}\right) 
=\cA\left(\cS \left(\{\cR(y_u(x))\}_{u\in U_l(x)}\right)\right)
\end{equation*}

Similar to the shuffle model in DP-DPE, we let $(\cS\circ \cR)({U_l(x)})\triangleq \cS (\{\cR(y_u(x)) \}_{u\in U_l(x)})$ denote the composite mechanism. 
Formally, the \textsc{Privatizer} $\cP$ is $(\epsilon, \delta)$-shuffle-differentially-private (or $(\epsilon,\delta)$-SDP) if the following is satisfied for any pair of $U_l(x)$, $U_l^{\prime}(x) \subseteq \cU$ that differ by one client and for any possible output $z$ of $\cS\circ \cR$: 
\begin{equation*}
\mathbb{P}[(\cS\circ \cR)({U_l}(x)) = z] \leq e^{\epsilon} \cdot \mathbb{P}[(\cS\circ \cR)({U_l^{\prime}(x))}= z] + \delta.
\end{equation*}


\begin{algorithm}[ht] 
\caption{ $\cP:$ A shuffle protocol for summing $n$ scalars \cite{cheu2021shuffle}} 
\label{alg:scalar_sum}
\begin{algorithmic}[1]
\STATE \textbf{Input:} Scalar database $Y = (y_1, \dots, y_n) \in [-B,B]^n$; $g, b\in \mathbb{N}; p\in (0,1/2)$; $\epsilon, \delta$
\STATE Let 
\begin{equation}
\begin{cases}
g\geq 2B\sqrt{n}\\
b=\lceil \frac{180g^2\ln{(2/\delta)}}{(\epsilon/2)^2n}\rceil\\
p = \frac{90g^2\ln{(2/\delta)}}{b(\epsilon/2)^2n} \label{eq:gbp_set_lb}
\end{cases}    
\end{equation}
\item[] \deemph{// Local Randomizer}
\item[] \textbf{function} 
$\cR(y)$ \\
\begin{ALC@g}
\STATE Shift data to enforce non-negativity: $y \gets y+B$
\STATE Set $\Bar{y} \gets \lfloor yg/(2B)\rfloor$  
\STATE Sample rounding value $\gamma_1 \sim \textbf{Ber}(yg/(2B) - \Bar{y})$
\STATE Set $\hat{y}\gets \bar{y}+\gamma_1$
\STATE Sample privacy noise value $\gamma_2 \sim \textbf{Bin}(b,p)$
\STATE Let $\phi$ be a multi-set of $(g+b)$ bits, containing $\hat{y}+\gamma_2$ copies of $1$ and $g+b-(\hat{y}+\gamma_2)$ copies of $0$
\STATE Report $\phi$ to the shuffler
\end{ALC@g}
\item[] \textbf{end function}
\item[] \deemph{// Shuffler}
\item[] \textbf{function} 
$\cS(\phi_1, \dots, \phi_n)$  \deemph{// each $\phi_i$ consists of $(g+b)$ bits}\\
\begin{ALC@g}
\STATE Shuffle and output all  $(g+b)n$ bits 
\end{ALC@g}
\item[] \textbf{end function} \\
\item[] \deemph{// Analyzer}
\item[] \textbf{function} 
$\mathcal{A}(\cS(\phi_1, \dots, \phi_n))$
\begin{ALC@g}
\STATE Compute $z \gets \frac{2B}{g} (\mathrm{sum}(\cS(\phi_1, \dots, \phi_n))-nbp)$ 
\STATE Re-center: $o \gets z - nB$ 
\STATE  Output the estimator $o$
\end{ALC@g}
\textbf{end function}
\end{algorithmic}
\end{algorithm}
Before showing the privacy guarantee of the shuffle model in Algorithm~\ref{alg:scalar_sum}, we provide a lemma derived directly from the original results in \cite{cheu2021shuffle}.
\begin{lemma}[Lemma 3.1 in \cite{cheu2021shuffle}]\label{lem:p1d_sdp}
Fix any number of users $n$, $\hat{\epsilon}<15$, and $0<\delta<1/2$. Let $g\geq B\sqrt{n}$, $b>\frac{180g^2\ln(1/\delta)}{\hat{\epsilon}^2 n}$, and $p=\frac{90g^2\ln(2/\delta)}{b\hat{\epsilon}^2 n}$. Then, 
\begin{itemize}
\item[1.] 
the \textsc{Privatizer} $\cP$ in Algorithm~\ref{alg:scalar_sum} is $\left(\hat{\epsilon}\left(\frac{2}{g}+1\right), \delta\right)$-SDP;
\item[2.] for any $Y\in [-B, B]^n$, $\cP$ is an unbiased estimator of $\sum_{i=1}^n y_i$, and the error is sub-Gaussian with variance $\sigma_{ns}^2=O\left(\frac{B^2\log(1/\delta)}{\hat{\epsilon}^2}\right)$.
\end{itemize} 
\end{lemma}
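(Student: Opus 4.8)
The plan is to establish the two claims of Lemma~\ref{lem:p1d_sdp} separately, treating the privacy guarantee (item~1) as a direct consequence of the shuffle-model analysis in \cite{cheu2021shuffle} and the accuracy guarantee (item~2) as an elementary moment computation over the encoding and noise variables defined in Algorithm~\ref{alg:scalar_sum}.

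For item~1, I would first observe that $\cP$ is exactly the scalar-summation protocol $\cP_{\mathrm{1D}}$ of \cite{cheu2021shuffle}: each client fixed-point-encodes its shifted scalar $w_u = y_u + B \in [0,2B]$ into $\bar w_u = \lfloor w_u g/(2B)\rfloor \in \{0,\dots,g\}$, adds a rounding bit $\gamma_1 \sim \mathrm{Ber}(w_u g/(2B) - \bar w_u)$ and binomial privacy noise $\gamma_2 \sim \mathrm{Bin}(b,p)$, and emits $g+b$ bits of which $\bar w_u + \gamma_1 + \gamma_2$ are ones. After shuffling, the analyzer's view is only the total count of ones, so the privacy question reduces to that of a single binomial mechanism in which the aggregate noise $\mathrm{Bin}(nb,p)$ forms a ``privacy blanket'' masking any single client's contribution, which is bounded by the number of encoding levels. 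The parameter choices $b > 180 g^2\ln(1/\delta)/(\hat\epsilon^2 n)$ and $p = 90 g^2 \ln(2/\delta)/(b\hat\epsilon^2 n)$ are calibrated precisely to make this blanket dense enough to yield $(\hat\epsilon,\delta)$-SDP for the normalized count, and the discretization then inflates the budget by the multiplicative factor $(2/g+1)$, giving the stated $(\hat\epsilon(2/g+1),\delta)$-SDP. I would invoke the corresponding privacy theorem of \cite{cheu2021shuffle} directly rather than reprove the divergence bound.

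For item~2, I would compute moments directly. Since $\E[\gamma_1] = w_u g/(2B) - \bar w_u$, we get $\E[\bar w_u + \gamma_1] = w_u g/(2B)$, and $\E[\gamma_2] = bp$, so the summed count has expectation $\sum_u w_u g/(2B) + nbp$; subtracting $nbp$, scaling by $2B/g$, and re-centering by $-nB$ yields $\E[o] = \sum_u w_u - nB = \sum_u y_u$, proving unbiasedness. For the variance, by independence across clients and noise sources, $\text{Var}[o] = (2B/g)^2 \sum_u (\text{Var}[\gamma_1] + \text{Var}[\gamma_2]) \le (4B^2/g^2)\, n(1/4 + bp)$; substituting $bp = 90 g^2\ln(2/\delta)/(\hat\epsilon^2 n)$ and using $g \ge B\sqrt n$ collapses the first term to $O(1)$ and leaves the dominant term $O(B^2 \log(1/\delta)/\hat\epsilon^2)$. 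Sub-Gaussianity follows because $\gamma_1$ is bounded (hence $\tfrac12$-sub-Gaussian) and $\gamma_2 \sim \mathrm{Bin}(b,p)$ is a sum of independent Bernoullis (hence $\tfrac{\sqrt b}{2}$-sub-Gaussian); the independent sum over clients is sub-Gaussian with variance proxy equal to the sum of proxies, and the affine map $x \mapsto (2B/g)(x - nbp) - nB$ rescales the proxy to the variance just computed.

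The main obstacle is item~1: the privacy bound is the genuine technical heart of the shuffle protocol and rests on the amplification-by-shuffling argument of \cite{cheu2021shuffle}, which controls the statistical divergence between the shuffled bit-multisets on neighboring inputs through the randomness of the binomial blanket. My plan is to cite their result verbatim, carefully tracking the discretization factor $2/g+1$, since item~2 then reduces to the routine variance calculation sketched above.
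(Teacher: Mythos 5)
Your proposal is correct and matches the paper's treatment: the paper itself offers no independent proof of Lemma~\ref{lem:p1d_sdp} (it is imported verbatim from \cite{cheu2021shuffle}), and for the parallel vector-summation statement (Theorem~\ref{thm:vector_shuffle}) it does exactly what you propose — defer the privacy claim to the cited shuffle-amplification analysis and verify unbiasedness and the sub-Gaussian variance bound by the same direct moment computation over $\gamma_1$, $\gamma_2$, and the affine post-processing. Your variance calculation ($\mathrm{Var}[o]\le (4B^2/g^2)\,n(1/4+bp)$, with $g\ge B\sqrt{n}$ killing the rounding term and $bp$ giving the $O(B^2\log(1/\delta)/\hat\epsilon^2)$ bound) is the same as the paper's and is correct.
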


Set the parameters $p,b, g$ according to Eq.~\eqref{eq:gbp_set_lb} with $\epsilon = 2\hat{\epsilon}$ in the \textsc{Privatizer} specified in Algorithm~\ref{alg:scalar_sum}. Then, we derive the following privacy guarantee. 
\begin{theorem} \label{thm:sdp_lb}
For any $\epsilon\in(0,30)$ and $\delta\in (1,1/2)$, the DP-PE instantiation using the \textsc{Privatizer} specified in Algorithm~\ref{alg:scalar_sum}  guarantees $(\epsilon, \delta)$-SDP.
\end{theorem}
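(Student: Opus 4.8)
The plan is to reduce the SDP guarantee of the whole DP-PE algorithm to the single-shot SDP guarantee of the scalar-sum shuffle protocol $\cP$ of Algorithm~\ref{alg:scalar_sum}, which itself is supplied by Lemma~\ref{lem:p1d_sdp}. The argument mirrors the proof of Theorem~\ref{thm:sdp} for DP-DPE, with the scalar protocol playing the role that the vector protocol of Algorithm~\ref{alg:shuffle_vec} played there.

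First I would check that the parameter setting of Eq.~\eqref{eq:gbp_set_lb}, read with $\hat{\epsilon}=\epsilon/2$, satisfies the hypotheses of Lemma~\ref{lem:p1d_sdp} for the $n=T_l(x)$ reward observations $\{y_u(x)\}_{u\in U_l(x)}$. Indeed $g\geq 2B\sqrt{n}\geq B\sqrt{n}$; the choice $b=\lceil 180 g^2\ln(2/\delta)/(\hat{\epsilon}^2 n)\rceil$ exceeds the required lower bound $180 g^2\ln(1/\delta)/(\hat{\epsilon}^2 n)$ because $\ln(2/\delta)>\ln(1/\delta)$; and $p=90 g^2\ln(2/\delta)/(b\hat{\epsilon}^2 n)$ is exactly the prescribed value. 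The range constraint is also met: $\epsilon<30$ gives $\hat{\epsilon}=\epsilon/2<15$, and $0<\delta<1/2$. Part~1 of Lemma~\ref{lem:p1d_sdp} then yields that $\cP$ is $\left(\hat{\epsilon}\left(\tfrac{2}{g}+1\right),\delta\right)$-SDP. Since $g\geq 2B\sqrt{n}\geq 2$ forces $2/g\leq 1$, I can bound the privacy parameter by $\hat{\epsilon}\left(\tfrac{2}{g}+1\right)\leq 2\hat{\epsilon}=\epsilon$, so the composite map $(\cS\circ\cR)(U_l(x))$ is $(\epsilon,\delta)$-SDP for \emph{every} phase $l$ and every action $x\in\text{supp}(\pi_l)$.

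The crux is to lift this per-$(l,x)$ guarantee to the composite mechanism $\cM_s$ over the entire horizon. The structural fact I would exploit is that the client subsets $\{U_l(x)\}$ are pairwise disjoint across distinct pairs $(l,x)$: DP-PE admits fresh clients in each phase, and within a phase each action is served by its own batch $U_l(x)$. Consequently, any two neighbouring client sequences $\cU_T,\cU_T'$ that differ in a single client differ in exactly one set $U_l(x)$, while all other shuffled outputs are identically distributed. By parallel composition the indistinguishability of $\cM_s$ reduces to that of the single affected shuffle output, which I have already shown to be $(\epsilon,\delta)$-SDP. Finally, because the estimator $\tilde{\theta}_l$ and all subsequent action choices are post-processings of the shuffled outputs, the post-processing property of DP (\cite[Proposition~2.1]{dwork2014algorithmic}) lets me conclude that the full DP-PE instantiation is $(\epsilon,\delta)$-SDP.

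The hard part will be making the parallel-composition step fully rigorous, that is, formally arguing from the disjointness of the samples $\{U_l(x)\}$ that a single-client change perturbs only one coordinate of $\cM_s$, so that the guarantee does \emph{not} degrade with the number of phases or the number of actions per phase. The remaining pieces — matching the parameters of Eq.~\eqref{eq:gbp_set_lb} to Lemma~\ref{lem:p1d_sdp} and discharging the elementary bound $\tfrac{2}{g}+1\leq 2$ — are routine.
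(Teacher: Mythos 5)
Your proposal is correct and follows essentially the same route as the paper: it invokes Lemma~\ref{lem:p1d_sdp} with $\hat{\epsilon}=\epsilon/2$ under the parameter setting of Eq.~\eqref{eq:gbp_set_lb}, bounds $\hat{\epsilon}(2/g+1)\leq 2\hat{\epsilon}=\epsilon$, and then lifts the per-batch guarantee to the whole algorithm via the disjointness of the client sets $U_l(x)$ (parallel composition) and post-processing. The paper's own argument is just a terser version of this; your only unstated assumption is $g\geq 2$ (i.e.\ $B\sqrt{n}\geq 1$), which the paper implicitly makes as well.
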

\begin{theorem}[SDP-PE]
With $\sigma_n = 2d\sigma_{ns}\sqrt{s_l}/h_l=O\left(\frac{Bd\sqrt{s_l\ln(1/\delta)}}{\epsilon h_l}\right)$ in each phase $l$ and $\beta=1/(kT)$, the  DP-PE algorithm with the shuffle model \textsc{Pivatizer} specified in Algorithm~\ref{alg:scalar_sum} achieves expected regret
\begin{equation*}
\begin{aligned}
\E[R(T)] &= 
O(\sqrt{dT\log (kT)} \\
&+O\left( \frac{Bd^{3/2}\log(T)\sqrt{\ln(1/\delta)\log(kT)}}{\epsilon} \right)\\
&+ O\left(\frac{Bd^{3/2}\sqrt{\ln(1/\delta)\log(kT)}}{\epsilon}\right).
\end{aligned}
\end{equation*}
\end{theorem}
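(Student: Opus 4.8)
The plan is to mirror the analysis of Theorem~\ref{thm:regret_sdp} (SDP-DPE) and Theorem~\ref{thm:regret_approx}, specialized to the standard linear bandit where there is no client-related uncertainty ($\sigma=0$, $\theta_u=\theta^*$). The backbone is a single per-phase concentration inequality for the private least-squares estimator $\tilde\theta_l$, after which the regret bound follows from the usual phased-elimination bookkeeping. Privacy itself is not re-derived: Theorem~\ref{thm:sdp_lb} already certifies $(\epsilon,\delta)$-SDP, and Lemma~\ref{lem:p1d_sdp} supplies the key quantitative fact that the per-action output $\tilde y_l(x)$ produced by Algorithm~\ref{alg:scalar_sum} is an unbiased estimate of $\sum_{u\in U_l(x)} y_u(x)$ whose error is sub-Gaussian with variance proxy $\sigma_{ns}^2=O(B^2\ln(1/\delta)/\epsilon^2)$, independent across actions.

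First I would establish, for every phase $l$ and every $x'\in\cD$, that $P\{\langle\tilde\theta_l-\theta^*,x'\rangle\ge W_l\}\le 2\beta$ (and its symmetric counterpart). Writing $\tilde y_l(x)=\sum_{u\in U_l(x)}(\langle\theta^*,x\rangle+\eta_u(x))+\gamma_p(x)$ with $\gamma_p(x)$ the shuffle-protocol noise, the identity $\tilde\theta_l=V_l^{-1}\sum_x x\,\tilde y_l(x)$ decomposes the error into $\theta^*$ plus a reward-noise term $\langle x',V_l^{-1}\sum_x x\sum_u \eta_u(x)\rangle$ and a privacy-noise term $\langle x',V_l^{-1}\sum_x x\,\gamma_p(x)\rangle$. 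The reward-noise term is handled exactly as in Theorem~\ref{thm:concentration}: since $\sum_x T_l(x)\langle x',V_l^{-1}x\rangle^2=\Vert x'\Vert_{V_l^{-1}}^2\le 2d/h_l$, it is sub-Gaussian with that variance proxy and contributes the action-related part of $W_l$.

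The crux — and the step I expect to be the main obstacle — is controlling the privacy-noise term $\sum_{x}\langle x',V_l^{-1}x\rangle\gamma_p(x)$, where the $\gamma_p(x)$ are i.i.d.\ $\sigma_{ns}$-sub-Gaussian over the support actions. Its variance proxy is $\sigma_{ns}^2\sum_x\langle x',V_l^{-1}x\rangle^2$, which I would bound by the $V_l^{-1}$-inner-product Cauchy--Schwarz inequality $\langle x',V_l^{-1}x\rangle^2\le\Vert x'\Vert_{V_l^{-1}}^2\Vert x\Vert_{V_l^{-1}}^2$ together with the near-$G$-optimal-design estimate $\Vert x\Vert_{V_l^{-1}}^2\le 2d/h_l$, giving $\sum_x\langle x',V_l^{-1}x\rangle^2\le(2d/h_l)(s_l\cdot 2d/h_l)=4d^2s_l/h_l^2$. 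This is precisely why $\sigma_n=2d\sigma_{ns}\sqrt{s_l}/h_l$ is the correct scaling: the privacy term is then $\sigma_n$-sub-Gaussian, so its tail is at most $\beta$ at threshold $\sigma_n\sqrt{2\log(1/\beta)}$, yielding the second half of $W_l$. I would stress that this bookkeeping differs from the distributed proof in Theorem~\ref{thm:concentration_dp} because in DP-PE the aggregate $G_l=\sum_x x\,\tilde y_l(x)$ carries no $T_l(x)$ weight and the noise is injected once per action rather than once per phase, so the $\sqrt{s_l}$ factor (hence the extra $\sqrt d$) enters through the number of support actions.

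Finally I would assemble the regret. Defining the good event $\cE_l$ as in Appendix~\ref{app:proof_of_regret}, a union bound over the $k$ actions gives $P(\cE_l)\ge 1-2k\beta$, and on $\cE_l$ the optimal arm is never eliminated while every surviving arm has suboptimality gap at most $4W_l$; hence $r_{l+1}\le 4T_{l+1}W_l$ with $h_l\le T_l\le h_l+S$, $h_l=h_1 2^{l-1}$, and $h_1=S=4d\log\log d+16$. Splitting $W_l$ into its two parts, the action-related contribution sums geometrically to $O(\sqrt{dT\log(kT)})$ exactly as in Theorem~\ref{thm:regret_approx}. For the privacy contribution, $T_{l+1}\sigma_n\approx 2h_l\cdot 2d\sigma_{ns}\sqrt{S}/h_l$ is essentially constant in $l$, so summing over the $L=O(\log T)$ phases produces the leading term $O(Bd^{3/2}\log(T)\sqrt{\ln(1/\delta)\log(kT)}/\epsilon)$ after substituting $\sigma_{ns}=O(B\sqrt{\ln(1/\delta)}/\epsilon)$ and $S=O(d)$, while the $S$-correction in $T_{l+1}\le h_{l+1}+S$ contributes a geometrically convergent series giving the lower-order $O(Bd^{3/2}\sqrt{\ln(1/\delta)\log(kT)}/\epsilon)$ term. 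Combining the good-event regret with the failure regret $R_b\le 2T$ weighted by $P(\cE_g^c)\le 2k\beta L$ and setting $\beta=1/(kT)$ closes the bound.
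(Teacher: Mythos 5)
Your proposal is correct and follows essentially the same route as the paper: the same decomposition of $\tilde\theta_l$ into $\theta^*$ plus a reward-noise term and a per-action privacy-noise term, the same Cauchy--Schwarz/near-$G$-optimal-design bound $\sum_x\langle x',V_l^{-1}x\rangle^2\le 4d^2s_l/h_l^2$ justifying $\sigma_n=2d\sigma_{ns}\sqrt{s_l}/h_l$, and the same good-event phased-elimination bookkeeping in which the privacy contribution $T_{l+1}\sigma_n$ is constant per phase and sums to the $\log(T)$ term. No gaps.
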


\subsubsection{Discussions}
Regarding the results derived in this subsection, we make the following remarks. 

\begin{remark}[Privacy ``for free"] 
In \cite{lattimore2020bandit}, we know that the non-private phased elimination algorithm achieves $O(\sqrt{dT\log(kT)})$ regret. From the above Theorems \ref{thm:lb_cdp}, \ref{thm:ldp_lb} and Theorem~\ref{thm:ldp_lb}, we derive that the DP-PE algorithm enables us to achieve privacy guarantee ``for free" in the central and shuffle model as in the DP-DPE algorithm. 
\end{remark}
\begin{remark}[Extended to other privacy noise]
To be consistent with our main content, we employ the Gaussian mechanism here to achieve the corresponding  approximate DP in the central and local DP models. However, a Laplacian mechanism can also be employed in the central and the local models to achieve pure $(\epsilon, 0)$-DP  privacy guarantee.
\end{remark}
\begin{remark}[Phase length initialization $h_1$] 
In the standard linear bandits without client-related uncertainty, we initialize the phase length to be  the upper bound of the support set size in every phase, i.e., $h_1=4d\log\log d+16$, in order to derive a better regret cost (i.e., a lower order of $d$) due to privacy guarantees, especially for the local DP model.   
\end{remark}

\subsection{Proofs for the Results in Section~\ref{app:dplb}}
To prove the regret upper bound of the DP-PE algorithm under three DP models, we follow a similar line to the proof of Theorem~\ref{thm:regret_dp_dpe}. 
First, we present the concentration result for the DP-PE algorithm under the three DP models with the setting in Table~\ref{tab:parameter_setting_lb}. 
\begin{table*}[t]
\centering
\caption{Setting}
\label{tab:parameter_setting_lb}
\begin{tabular}{lcc}
\toprule
Algorithm & $\sigma_n$& Notes\\
\midrule
CDP-PE & $\sigma_n = 2d\sigma_{nc}{\sqrt{s_l}}/{h_l}$  & $\sigma_{nc} = \frac{2B\sqrt{2\ln(1.25/\delta)}}{\epsilon }$\\
LDP-PE& $\sigma_n = 2d\sigma_{nl}\sqrt{{2s_l}/{h_l}}$ & $\sigma_{nl} = \frac{2B\sqrt{2\ln(1.25/\delta)}}{\epsilon }$\\
SDP-PE & $\sigma_n = 2d\sigma_{ns}\sqrt{s_l}/h_l$ &$\sigma_{ns} = O\left(\frac{B\sqrt{\ln(1/\delta)}}{\epsilon}\right)$\\
\bottomrule
\end{tabular}
\end{table*}
\begin{theorem}
Set DP-PE in the central, local, and shuffle models (i.e., CDP-PE, LDP-PE, and SDP-PE respectively) based on Table~\ref{tab:parameter_setting_lb}. In any phase $l$, all of CDP-PE, LDP-PE, and SDP-PE satisfies the following concentration, for any particular $x\in \cD_l$
\begin{equation}
\begin{aligned}
&P\left\{ \langle \Tilde{\theta}_l-\theta^*, x\rangle \geq W_l\right\}\leq 2\beta,  \quad \text{and,}\\
&P\left\{ \langle \theta^* - \Tilde{\theta}_l, x\rangle \geq W_l\right\}\leq 2\beta.\label{eq:concentration_ineq_lb}
\end{aligned}
\end{equation} 
\end{theorem}
\begin{proof}
We prove the first concentration inequality in Eq.~\eqref{eq:concentration_ineq_lb} for CDP-PE, LDP-PE, and SDP-PE in the following, and the second inequality can be proved symmetrically. 
According to Eq.~\eqref{eq:theta_est} in Algorithm~\ref{alg:dp-pe}, we have
\begin{equation*}
\begin{aligned}
&\Tilde{\theta}_l = V_l^{-1}G_l\\
= &  V_l^{-1}\sum_{x\in \text{supp}(\pi_l)}  x \Tilde{y}_l(x) \\
= & V_l^{-1}\sum_{x\in \text{supp}(\pi_l)}  x \sum_{u\in U_l(x)}y_u(x) \\
&+ V_l^{-1}\sum_{x\in \text{supp}(\pi_l)}  x \underbrace{\left(\tilde{y}_l(x) -\sum_{u\in U_l(x)}y_u(x)\right)}_{\gamma_p(x)}\\
\overset{(a)}{=} &  V_l^{-1}\sum_{x\in \text{supp}(\pi_l)}  x \sum_{u\in U_l(x)}y_u(x) +  V_l^{-1}\sum_{x\in \text{supp}(\pi_l)}  x \gamma_p(x)\\
= & V_l^{-1}\sum_{{x\in \text{supp}(\pi_l)}}  \!x \sum_{t\in \cT_l(x)}(x^{\top}\theta^*+\eta_t) +  \!V_l^{-1}\!\sum_{x\in \text{supp}(\pi_l)}  x \gamma_p(x)\\
=  & V_l^{-1}\sum_{x\in \text{supp}(\pi_l)}  T_l(x)x x^{\top}\theta^* + V_l^{-1}\sum_{x\in \text{supp}(\pi_l)}  x \sum_{t\in \cT_l(x)}\eta_t \\
&+ V_l^{-1}\sum_{x\in \text{supp}(\pi_l)}  x  \gamma_p(x)\\
=  & \theta^* + V_l^{-1}\sum_{t\in \cT_l}  x_t \eta_t + V_l^{-1}\sum_{x\in \text{supp}(\pi_l)}  x  \gamma_p(x),
\end{aligned}
\end{equation*}
where we represent the noise introduced for protecting privacy associated with action $x$ by $\gamma_p(x)\triangleq \tilde{y}_l(x)-\sum_{u\in U_l(x)}y_u(x)$, which varies according to the specified DP models, and 
the last step is due to our decision $x_t = x $ for any $x$ in $\cT_l(x)$. 
Then, the difference between estimation and the true reward of each action $x'\in\cD_l$ is
\begin{equation}
\begin{aligned}
&\langle \tilde{\theta}-\theta^*, x'\rangle\\
= & \left\langle x', V_l^{-1}  \sum_{t \in \cT_l}   x_t\eta_t\right\rangle  +  \left\langle x', V_l^{-1}\sum_{x\in \text{supp}(\pi_l)}  x  \gamma_p(x)\right\rangle. \label{eq:est_error}    
\end{aligned}
\end{equation}
Note that $ \eta_t$ is \emph{i.i.d} $1$-sub-Gaussian over $t$ and that the chosen action $x_{t}$ at $t$ is deterministic in the $l$-th phase under the DP-PE algorithm. Combining the following result,
$$\sum_{t\in \cT_l}  \langle x, V_l^{-1}  x_t\rangle^2 = x^{\top} V_l^{-1} \left(\sum_{t\in \cT_l}x_tx_t^{\top}\right) V_l^{-1} x=
\Vert x\Vert_{V_l^{-1}}^2,$$
where the second equality is due to $V_l = 
\sum_{t\in \cT_l} x_{t}x_{t}^{\top}$,
we derive that the first term of the RHS of Eq.~\eqref{eq:est_error} is $\Vert x\Vert_{V_l^{-1}}$-sub-Gaussian. Besides, we have $\Vert x\Vert_{V_l^{-1}}^2 
\leq \frac{\Vert x\Vert_{V_l(\pi_l)^{-1}}^2}{h_l}
\leq \frac{g(\pi_l)}{h_l}
\leq  \frac{2d}{h_l}$ by the near-G-optimal design. Based on the property of sub-Gaussian, we obtain 
\begin{equation}
\begin{aligned}
&P\left\{ \left\langle x', V_l^{-1}  \sum_{t \in \cT_l}   x_t\eta_t\right\rangle \geq \sqrt{\frac{4d}{h_l}\log\left(\frac{1}{\beta}\right)}\right\}\\
&\leq \exp\left\{-\frac{\frac{4d}{h_l}\log(1/\beta)}{2\Vert x'\Vert_{V_l^{-1}}^2}\right\} 
= \beta.      \label{eq:W_l1_prob_bnd_lb}
\end{aligned}
\end{equation}
Based on the union bound, we have
\begin{equation}
\begin{aligned}
& P\left\{ \langle \Tilde{\theta}_l-\theta^*, x\rangle \geq W_l\right\} \\
\leq & P\left\{ \sum_{t \in \cT_l}\left\langle x',   V_l^{-1}    x_t\right\rangle \eta_t \geq \sqrt{\frac{4d}{h_l}\log \left(\frac{1}{\beta}\right)}\right\} \\
& +P\left\{  \left\langle x', V_l^{-1}\sum_{x\in \text{supp}(\pi_l)}  x  \gamma_p(x)\right\rangle \geq \sqrt{2\sigma_n^2\log \left(\frac{1}{\beta}\right)} \right\}\\
\leq & \beta +P\left\{  \left\langle x', V_l^{-1}\sum_{x\in \text{supp}(\pi_l)}  x  \gamma_p(x)\right\rangle \geq \sqrt{2\sigma_n^2\log \left(\frac{1}{\beta}\right)} \right\}.
\end{aligned}
\end{equation}
To derive the concentration in Eq.~\eqref{eq:concentration_ineq_lb}, it remains to show that the second term is less than $\beta$ under each of the three DP models. Due to different $\gamma_p(x)$ in different DP models, we analyze the second term respectively. 

\smallskip
\textbf{i) CDP-PE.} In the central model, the private output of the \textsc{Privatizer} $\cP$  is $\tilde{y}_l(x) = \sum_{u\in  U_l(x)} y_u(x)+\gamma_x$, where $\gamma_x \overset{i.i.d.}{\sim} \cN(0, \sigma_{nc}^2)$. Then, $\gamma_p(x) \sim \cN(0, \sigma_{nc}^2)$ in the central model and is \emph{i.i.d.} across actions $x\in \text{supp}(\pi_l) $. Note that 
\begin{equation*}
\left\langle x', V_l^{-1}\sum_{x\in \text{supp}(\pi_l)} x\gamma_p(x)\right\rangle 
= \sum_{x\in \text{supp}(\pi_l)}  \left\langle x', V_l^{-1} x\right\rangle \gamma_{p}(x), 
\end{equation*}
and that $\gamma_p(x) \overset{\emph{i.i.d.}}{\sim} \cN(0, \sigma_{nc}^2)$. The variance (denoted by $\sigma_{\text{sum}}^2$) of the above sum of \emph{i.i.d.} Gaussian variables is 
$$
\begin{aligned}
\sigma_{\text{sum}}^2 &= \sum_{x\in \text{supp}(\pi_l)}  \left\langle x', V_l^{-1} x\right\rangle^2  \sigma_{nc}^2\\
&\overset{(a)}{\leq} \sum_{x\in \text{supp}(\pi_l)} \left(\max_{x\in \cD_l} \Vert x\Vert_{V_l^{-1}}^2\right)^2 \sigma_{nc}^2 \leq  \frac{s_l \cdot 4d^2 \cdot \sigma_{nc}^2}{h_l^2} = \sigma_n^2,   
\end{aligned}
$$
where $(a)$ is from $\langle x', V_l^{-1} x\rangle \leq \max_{x\in \cD_l} \Vert x\Vert_{V_l^{-1}}^2$ for the positive definite matrix $V_l$. Combining the tail bound for Gaussian variables, we have 
$$
\begin{aligned}
&P\left\{ \left\langle x', V_l^{-1}\sum_{x\in \text{supp}(\pi_l)} x\gamma_p(x)\right\rangle\geq \sqrt{2\sigma_n^2\log\left(\frac{1}{\beta}\right)}\right\} \\
&\leq \exp \left\{-\frac{2\sigma_n^2\log(1/\beta)}{2\sigma_{\text{sum}}^2 }\right\} \leq  \beta.   
\end{aligned}
$$
Hence, the first concentration inequality in Eq.~\eqref{eq:concentration_ineq_lb} holds for CDP-PE algorithm. 

\smallskip

\textbf{ii) LDP-PE.} In the local model, the output of the \textsc{Privatizer} $\cP$ is $\tilde{y}_l(x)= \sum_{u\in U_l}y_u(x)+\gamma_{u,x}$, where $\gamma_{u,x} \sim \cN(0, \sigma_{nl}^2)$ is \emph{i.i.d.} across clients. Then $\gamma_p(x) = \sum_{u\in U_l} \gamma_{u,x}$ in the local model, and 
\begin{equation*}
\begin{aligned}
&\left\langle x', V_l^{-1}\sum_{x\in \text{supp}(\pi_l)} x\gamma_{p}(x)\right\rangle \\
&= \sum_{x\in \text{supp}(\pi_l)}  \sum_{u\in U_l(x)} \left\langle x', V_l^{-1} x\right\rangle  \gamma_{u,x}.    
\end{aligned}
\end{equation*}
The variance (denoted by $\sigma_{\text{sum}}^2$) of the sum of the above $T_l(x)\times |\text{supp}(\pi_l)|$ \emph{i.i.d.} Gaussian variables satisfies
$$
\begin{aligned}
\sigma_{\text{sum}}^2 &= \sum_{u\in U_l(x)}\sum_{x\in \text{supp}(\pi_l)}  \left\langle x', V_l^{-1} x\right\rangle^2  \sigma_{nl}^2\\
&\leq  \frac{T_l(x) s_l \cdot 4d^2 \cdot \sigma_{nl}^2}{h_l^2} \leq \frac{8s_ld^2\sigma_{nl}^2}{h_l} =\sigma_n^2,    
\end{aligned}
$$
where the last step is from $T_l(x)\leq T_l=\sum_{x\in \text{supp}(\pi_l) T_l(x)}\leq h_l + s_l \leq h_l+h_1\leq 2h_l$. Then, we have
$$
\begin{aligned}
&P\left\{ \sum_{u\in U_l(x)}\sum_{x\in \text{supp}(\pi_l)}\left\langle x', V_l^{-1} x\gamma_{x}\right\rangle\geq \sqrt{2\sigma_n^2\log \left(\frac{1}{\beta}\right)}\right\} \\
&\leq \exp \left\{-\frac{2\sigma_n^2\log \left(\frac{1}{\beta}\right)}{2\sigma_{\text{sum}}^2 }\right\} \leq \beta.    
\end{aligned}
$$
Hence, the first concentration inequality in Eq.~\eqref{eq:concentration_ineq_lb} holds for LDP-PE algorithm.

\textbf{iii) SDP-PE. } In the shuffle model, the output of the \textsc{Privatizer} $\cP$ is, $\Tilde{y}_l =  \cA\left(\cS \left(\{\cR(y_u(x))\}_{u\in U_l(x)}\right)\right)$, where $\cA, \cS$ and $\cR$ follow Algorithm~\ref{alg:scalar_sum}. From Lemma~\ref{lem:p1d_sdp}, we know that the output of the $\cP\left(\{y_u(x)\}_{u\in U_l(x)}\right) $ is an unbiased estimator of $\sum_{u\in U_l(x)}{y}_u(x)$ and that the error distribution is sub-Gaussian with variance $\sigma_{ns}^2 = O\left(\frac{B^2\ln (1/\delta)}{\epsilon^2 }\right)$. 
Then, $\gamma_p(x) = \Tilde{y}_l(x) - \sum_{u\in U_l(x)} y_u(x)$ is $\sigma_{ns}$-sub-Gaussian with $\E[\gamma_p(x)] = 0$ in the shuffle model. Besides, $\gamma_p(x)$ is \emph{i.i.d.} over each coordinate corresponding to each action $x$ in the support set supp$(\pi_l)$. Recall that 
\begin{equation*}
\left\langle x', V_l^{-1}\sum_{x\in \text{supp}(\pi_l)} x\gamma_p(x)\right\rangle 
= \sum_{x\in \text{supp}(\pi_l)}  \left\langle x', V_l^{-1} x\right\rangle \gamma_{p}(x). 
\end{equation*}
The variance (denoted by $\sigma_{\text{sum}}^2$) of the above sum of \emph{i.i.d.} sub-Gaussian variables is 
$$
\begin{aligned}
\sigma_{\text{sum}}^2 &= \sum_{x\in \text{supp}(\pi_l)}  \left\langle x', V_l^{-1} x\right\rangle^2  \sigma_{ns}^2\\
&\overset{(a)}{\leq} \sum_{x\in \text{supp}(\pi_l)} \left(\max_{x\in \cD_l} \Vert x\Vert_{V_l^{-1}}^2\right)^2 \sigma_{ns}^2 \leq  \frac{s_l \cdot 4d^2 \cdot \sigma_{ns}^2}{h_l^2} = \sigma_n^2,    
\end{aligned}
$$
where $(a)$ is from $\langle x', V_l^{-1} x\rangle \leq \max_{x\in \cD_l} \Vert x\Vert_{V_l^{-1}}^2$ for the positive definite matrix $V_l$. Combining the tail bound for sub-Gaussian variables, we have 
$$
\begin{aligned}
&P\left\{ \left\langle x', V_l^{-1}\sum_{x\in \text{supp}(\pi_l)} x\gamma_p(x)\right\rangle\geq \sqrt{2\sigma_n^2\log\left(\frac{1}{\beta}\right)}\right\} \\
&\leq \exp \left\{-\frac{2\sigma_n^2\log(1/\beta)}{2\sigma_{\text{sum}}^2 }\right\} \leq  \beta.   
\end{aligned}
$$
Hence, the first concentration inequality in Eq.~\eqref{eq:concentration_ineq_lb} holds for SDP-PE algorithm.

By now, we complete the proof for Eq.~\eqref{eq:concentration_ineq_lb} with the symmetrical argument.
\end{proof}

\subsubsection{Proof of Theorem~\ref{thm:regret_cdp_lb}}
In the following, we start with analyzing regret in a specific phase under CDP-PE and then combine all phases together to get the total regret incurred by the CDP-PE algorithm.

\begin{proof}
\textbf{1) Regret in a specific phase.}
Based on the concentration in Theorem~\ref{thm:concentration_dp}, we define a ``good" event at $l$-th phase as $\mathcal{E}_l$: 
$$
\begin{aligned}
&\left\{\langle \theta^*- \Tilde{\theta}_l, x^*\rangle \leq W_l\right\} \quad \text{and}, \\
&\left\{\langle \Tilde{\theta}_l-\theta^*, x\rangle \leq W_l  \right\}, \quad \forall x\in \cD\backslash \{x^*\} .
\end{aligned}
$$
It is not difficult to derive $P(\mathcal{E}_l)\geq 1-2k\beta$ via union bound.

Under event $\mathcal{E}_l$, we have the following two observations:
\begin{itemize}
\item[\textbf{1.}] If the optimal action $x^*\in \cD_l$, then $x^* \in \cD_{l+1}$. 
\item[\textbf{2.}] For any $x\in \cD_{l+1}$, we have $\langle \theta^*, x^* - x\rangle \leq 4W_l$. 
\end{itemize}
For any particular $l$, under event $\cE_l$, we have the regret incurred in the $(l+1)$-th phase satisfies (according to the second observation)
\begin{equation}
\begin{aligned}
r_{l+1} &=  \sum_{t \in \cT_{l+1}}  \langle \theta^*, x^* - x_{t} \rangle \\
&\leq \sum_{t\in\cT_{l+1}} 4W_{l}\\
& = 4T_{l+1}W_l\\
& \leq \underbrace{4T_{l+1}\sqrt{\frac{4d}{h_l}\log \left(\frac{1}{\beta}\right)}}_{\text{\textcircled{1}}} + \underbrace{4T_{l+1}\sqrt{2\sigma_n^2\log\left(\frac{1}{\beta}\right)}}_{\text{\textcircled{2}}}.
\end{aligned}
\end{equation}
Note that $T_l=\sum_{x\in \text{supp}(\pi_l)}T_l(x) \leq h_l+s_l$. For \textcircled{1}, we have 
\begin{equation}
\begin{aligned}
\text{\textcircled{1}}& \leq \ 4(h_1\cdot 2^l+s_{l+1})\sqrt{\frac{4d}{h_1\cdot 2^{l-1}}\log \left(\frac{1}{\beta}\right)}\\
& = 8\sqrt{2d\log \left(\frac{1}{\beta}\right)} \left(\sqrt{h_1\cdot 2^l}+ \frac{s_{l+1}}{\sqrt{h_1\cdot 2^l}}\right)\\
& \leq 8\sqrt{2d\log \left(\frac{1}{\beta}\right)} \left(\sqrt{h_1\cdot 2^l}+ \frac{S}{\sqrt{h_1\cdot 2^l}}\right),
\end{aligned}
\end{equation}
where $S\triangleq 4d\log\log d+16\geq s_l$.
As to the second term \textcircled{2}, we have $\sigma_n^2 = \frac{4s_ld^2\sigma_{nc}^2}{h_l^2}$, and then
\begin{equation*}
\begin{aligned}
\text{\textcircled{2}}
& \leq   4(h_1\cdot 2^l+s_{l+1})\sqrt{\frac{8s_ld^2\sigma_{nc}^2\log (1/\beta)}{h_1^2\cdot 2^{2(l-1)}}}\\
& = 16d\sigma_{nc}\sqrt{2s_l\log \left(\frac{1}{\beta}\right)} \left(1+ \frac{s_{l+1}}{h_1\cdot 2^{l}}\right)\\
& = 16d\sigma_{nc}\sqrt{2S\log \left(\frac{1}{\beta}\right)} \left(1+ \frac{S}{h_1\cdot 2^{l}}\right).
\end{aligned}
\end{equation*}
Then, for any $l\geq 2$, the regret in the $l$-th phase $r_l$ is upper bounded by 
\begin{equation}
\begin{aligned}
r_l 
\leq &8\sqrt{2d\log \left(\frac{1}{\beta}\right)} \left(\sqrt{h_1\cdot 2^{l-1}}+ \frac{S}{\sqrt{h_1\cdot 2^{l-1}}}\right)\\
&+16d\sigma_{nc}\sqrt{2S\log \left(\frac{1}{\beta}\right)} \left(1+ \frac{S}{h_1\cdot 2^{l-1}}\right) \label{eq:regret_phase_l_lb}.
\end{aligned}    
\end{equation}

\textbf{2) Total regret.}
Define $\mathcal{E}_g$ as the event where the ``good" event occurs in every phase, i.e., $\mathcal{E}_g \triangleq \bigcap_{l=1}^L \mathcal{E}_l$. It is not difficult to obtain  $P\{\mathcal{E}_g\} \geq 1- 2k\beta L$ by applying union bound. At the same time, let $R_g$ be the regret under event $\mathcal{E}_g$, and $R_b$ be the regret if event $\mathcal{E}_g$ does not hold. Then, the expected total regret in $T$ is $\E[R(T)] = P(\mathcal{E}_g)R_g + (1-P(\mathcal{E}_g)) R_b$.

Under event $\mathcal{E}_g$, the regret in the $l$-th phase $r_l$ satisfies Eq.~\eqref{eq:regret_phase_l_lb} for any $l\geq 2$. Combining $r_1 \leq 2T_1 \leq 4h_1$ ( since $\langle \theta^*, x^*-x \rangle \leq 2$ for all $x \in \cD$), we have
\begin{equation*}
\begin{aligned}
R_g &= \sum_{l=1}^L r_l  \\
& \leq 2(h_1+S)\\
&+ \sum_{l=2}^L 8\sqrt{2d\log \left(\frac{1}{\beta}\right)} \left(\sqrt{h_1 2^{l-1}}+ \frac{S}{\sqrt{h_12^{l-1}}}\right)\\
& + \sum_{l=2}^L 16d\sigma_{nc}\sqrt{2S\log \left(\frac{1}{\beta}\right)} \left(1+ \frac{S}{h_1\cdot 2^{l-1}}\right) \\
&\leq 2(h_1+S)+ 8\sqrt{2d\log(1/\beta)}\left(4\sqrt{h_12^{L-1}}+\frac{3S}{\sqrt{h_1}}\right) \\
&+ 16d\sigma_{nc}\sqrt{2S\log(1/\beta)} \left(L-1+S/h_1\right).
\end{aligned}
\end{equation*}
Note that $ h_L \leq T_L \leq T$, which indicates $2^{L-1} \leq T/h_1$, and $L\leq \log (2T/h_1)$.
Then, the above inequality becomes
\begin{equation*}
\begin{aligned}
R_g =& \sum_{l=1}^L r_l\\
\leq & 2(h_1+S) \\
&+ 8\sqrt{2d\log(1/\beta)  } \cdot 4\sqrt{T} + 24S\sqrt{2d\log(1/\beta)}/\sqrt{h_1} \\
&+ 16d\sigma_{nc}\sqrt{2S\log(1/\beta)}\cdot \log(T/h_1) \\
&+ 16dS/h_1\sigma_{nc}\sqrt{2S\log(1/\beta)}.
\end{aligned}
\end{equation*}
On the other hand, $R_b \leq 2T$ 
since $\langle \theta^*, x^*-x \rangle \leq 2$ for all $x \in \cD$. 
Choose $\beta = \frac{1}{kT}$ in Algorithm~\ref{alg:dp-pe}. Finally, we have the following results:
\begin{equation*}
\begin{aligned}
&\E[R(T)] = P(\mathcal{E}_g)R_g + (1-P(\mathcal{E}_g)) R_b \\
\leq & R_g + 2k\beta L \cdot 2T\\
\leq & 2(h_1+S) + 32\sqrt{2dT\log(kT)  }+24S\sqrt{2d/h_1\log(kT)} \\
&+16d\sigma_{nc}\sqrt{2S\log(kT)}\log(T) \\
&+16dS/h_1\sigma_{nc}\sqrt{2S\log(kT)} + 4\log (2T/h_1)\\
= &O(\sqrt{dT\log (kT)} \\&+O(\sigma_{nc}d^{3/2}\log(T)\sqrt{\log(kT)} \\&+O(\sigma_{nc}d^{3/2}\sqrt{\log(kT)}). 
\end{aligned}
\end{equation*}
Finally, substituting $\sigma_{nc}=\frac{2B\sqrt{2\ln(1.25/\delta)}}{\epsilon}$, we have the total expected regret under DP-PE with the central model \textsc{Privatizer} is
\begin{equation*}
\begin{aligned}
\E[R(T)] =& O(\sqrt{dT\log (kT)} \\
&+O\left( \frac{Bd^{3/2}\log(T)\sqrt{\ln(1/\delta)\log(kT)}}{\epsilon}  \right) \\
&+ O\left(\frac{Bd^{3/2}\sqrt{\ln(1/\delta)\log(kT)}}{\epsilon}\right) .
\end{aligned}
\end{equation*}
\end{proof}

\subsubsection{Proof of Theorem~\ref{thm:ldp_lb}}
\begin{proof}

\textbf{1) Regret in a specific phase.} 
Recall the ``good" event at $l$-th phase $\mathcal{E}_l = \{\langle \theta^*- \Tilde{\theta}_l, x^*\rangle \leq W_l\} \bigcap \{\langle \Tilde{\theta}_l-\theta^*, x\rangle \leq W_l, \forall x\in \cD_l\backslash \{x^*\}  \}$. 
Under event $\mathcal{E}_l$, we have the following two observations:
\begin{itemize}
\item[\textbf{1.}] If the optimal action $x^*\in \cD_l$, then $x^* \in \cD_{l+1}$. 
\item[\textbf{2.}] For any $x\in \cD_{l+1}$, we have $\langle \theta^*, x^* - x\rangle \leq 4W_l$. 
\end{itemize}
For any particular $l$, under event $\cE_l$, we have the regret incurred in the $(l+1)$-th phase satisfies (according to the second observation)
\begin{equation*}
\begin{aligned}
r_{l+1} &=  \sum_{t \in \cT_{l+1}}  \langle \theta^*, x^* - x_{t} \rangle \\
&\leq \sum_{t\in\cT_{l+1}} 4W_{l}\\
& = 4T_{l+1}W_l \\
& \leq \underbrace{4T_{l+1}\sqrt{\frac{4d}{h_l}\log \left(\frac{1}{\beta}\right)}}_{\text{\textcircled{1}}} + \underbrace{4T_{l+1}\sqrt{2\sigma_n^2\log\left(\frac{1}{\beta}\right)}}_{\text{\textcircled{2}}}.
\end{aligned}
\end{equation*}

For \textcircled{1}, we already have 
$\text{\textcircled{1}} = 8\sqrt{2d\log \left(\frac{1}{\beta}\right)} \left(\sqrt{h_1\cdot 2^l}+ \frac{S}{\sqrt{h_1\cdot 2^l}}\right)$.
As to the second term \textcircled{2}, we have $\sigma_n^2 = \frac{8d^2s_l\sigma_{nl}^2}{h_l}$ in the local model and then
\begin{equation*}
\begin{aligned}
\text{\textcircled{2}}&\leq 4(h_1\cdot 2^l+s_{l+1})\sqrt{\frac{16s_ld^2\sigma_{nl}^2\log (1/\beta)}{h_1\cdot 2^{(l-1)}}}\\ 
& = 16d\sigma_{nl}\sqrt{2s_l\log\left(\frac{1}{\beta}\right)}\left(\sqrt{h_1\cdot 2^l} + \frac{s_{l+1}}{\sqrt{h_1\cdot 2^l}}\right)\\
& \leq 16d\sigma_{nl}\sqrt{2S\log\left(\frac{1}{\beta}\right)}\left(\sqrt{h_1\cdot 2^l} + \frac{S}{\sqrt{h_1\cdot 2^l}}\right).
\end{aligned}
\end{equation*}
Then, the regret in the $l$-th phase is upper bounded by
$$
\begin{aligned}
r_l \leq &8\sqrt{2d\log \left(\frac{1}{\beta}\right)} \left(\sqrt{h_12^{l-1}}+ \frac{S}{\sqrt{h_1 2^{l-1}}}\right)\\
&+16d\sigma_{nl}\sqrt{2S\log \left(\frac{1}{\beta}\right)} \left(\sqrt{h_1 2^{l-1}}+ \frac{S}{\sqrt{h_12^{l-1}}}\right)\\
\end{aligned}
$$

\medskip
\textbf{3) Total regret.}

Under the ``good" event, the total regret is
\begin{equation*}
\begin{aligned}
&R_g = \sum_{l=1}^L r_l  \\
\leq & 2(h_1+S)\\
& +\sum_{l=2}^L 8\sqrt{2d\log \left(\frac{1}{\beta}\right)} \left(\sqrt{h_12^{l-1}}+ \frac{S}{\sqrt{h_1 2^{l-1}}}\right)\\
&+\sum_{l=2}^L 16d\sigma_{nl}\sqrt{2S\log \left(\frac{1}{\beta}\right)} \left(\sqrt{h_1 2^{l-1}}+ \frac{S}{\sqrt{h_12^{l-1}}}\right)\\
\leq & 2(h_1+S)+ 8\sqrt{2d\log\left(\frac{1}{\beta}\right)}\left(4\sqrt{h_1 2^{L-1}}+ \frac{3S}{\sqrt{h_1}}\right)\\
&+ 16d\sigma_{nl}\sqrt{2S\log\left(\frac{1}{\beta}\right)}\left(4\sqrt{h_1 2^{L-1}}+ \frac{3S}{\sqrt{h_1}}\right)\\
\leq &2(h_1+S)+ 8\sqrt{2d\log\left(\frac{1}{\beta}\right)} \left( 4\sqrt{T} +\frac{S}{\sqrt{h_1}}\right) \\
&+16d\sigma_{nl}\sqrt{2S\log\left(\frac{1}{\beta}\right)} \left( 4\sqrt{T} +\frac{S}{\sqrt{h_1}}\right) \\
& \deemph{(h_L=2^{L-1}h_1\leq T)}\\
\leq & 2(h_1+S)+ 32\sqrt{2dT\log(kT)}+64d\sigma_{nl}\sqrt{2ST\log(kT)}\\
&+ (8S\sqrt{d/h_1}+16d\sigma_{nl}S^{3/2}/\sqrt{h_1})\sqrt{2\log\left(kT\right)}   \\
& \deemph{(\beta=\frac{1}{kT})}\\
= & O(\sqrt{dT\log(kT)}) \\
&+ O(d^{3/2}\sigma_{nl}\sqrt{T\log(kT)})\\
&+O(\sigma_{nl}d^2\sqrt{\log(kT)}). 
\end{aligned}
\end{equation*}
Finally, substituting $\sigma_{nl}=\frac{2B\sqrt{2\ln(1.25/\delta)}}{\epsilon}$ and combining $R_b\leq 2k\beta L \cdot 2T=O(\log(T))$, we have the total expected regret under DP-PE with the local model \textsc{Privatizer} is
\begin{equation*}
\begin{aligned}
\E[R(T)] = & O(\sqrt{dT\log (kT)} \\
&+O\left( \frac{Bd^{3/2}\sqrt{\ln(1/\delta)T\log(kT)}}{\epsilon} \right) \\
&+O\left(\frac{Bd^{2}\sqrt{\ln(1/\beta)\log(kT)}}{\epsilon}\right).
\end{aligned}
\end{equation*}
\end{proof}

\subsubsection{Proof of Theorem~\ref{thm:sdp_lb}}
\begin{proof}

\textbf{1) Regret in a specific phase. } In the shuffle model, $\sigma_n^2 = \frac{4s_ld^2\sigma_{ns}^2}{h_l^2}$, and then the regret in the $l$-th phase has the same form as in the central model, i.e., 
\begin{equation*}
\begin{aligned}
r_l\leq &8\sqrt{2d\log \left(\frac{1}{\beta}\right)} \left(\sqrt{h_1 2^{l-1}}+ \frac{S}{\sqrt{h_1 2^{l-1}}}\right) \\
&
+16d\sigma_{ns}\sqrt{2S\log \left(\frac{1}{\beta}\right)} \left(1+ \frac{S}{h_1 2^{l-1}}\right).
\end{aligned}
\end{equation*}

\medskip
\textbf{2) Total regret.}
Under the ``good" event, the total regret is
\begin{equation*}
\begin{aligned}
&R_g = \sum_{l=1}^L r_l  \\
\leq & 2(h_1+S)+ \sum_{l=2}^L 8\sqrt{2d\log \left(\frac{1}{\beta}\right)} \left(\sqrt{h_1 2^{l-1}}+ \frac{S}{\sqrt{h_1 2^{l-1}}}\right)\\
& + \sum_{l=2}^L 16d\sigma_{ns}\sqrt{2S\log \left(\frac{1}{\beta}\right)} \left(1+ \frac{S}{h_1 2^{l-1}}\right) \\
\leq &2(h_1+S)+ 8\sqrt{2d\log(1/\beta)}\left(4\sqrt{h_12^{L-1}}+\frac{3S}{\sqrt{h_1}}\right) \\
&+ 16d\sigma_{ns}\sqrt{2S\log(1/\beta)} \left(L-1+S/h_1\right)\\
\leq & 2(h_1+S)+ 8\sqrt{2d\log(1/\beta)  } \cdot 4\sqrt{T} \\
&+ 24S\sqrt{2d\log(1/\beta)}/\sqrt{h_1} \\
&+ 16d\sigma_{ns}\sqrt{2S\log(1/\beta)}\cdot \log(T/h_1) \\
&+ 16dS/h_1\sigma_{ns}\sqrt{2S\log(1/\beta)} \\ &\deemph{(h_L=h_1\cdot 2^{L-1}\leq T)}\\
\leq & 2(h_1+S)+ 32\sqrt{2dT\log(kT) } + 24S\sqrt{2d/h_1\log(kT)}\\
&+ 16d\sigma_{ns}\sqrt{2S\log(kT)}\cdot \log(T/h_1) \\
&+ 16dS/h_1\sigma_{ns}\sqrt{2S\log(kT)}  \\
& \deemph{(\beta=\frac{1}{kT})}\\
= &O(\sqrt{dT\log (kT)} \\
&+O(\sigma_{ns}d^{3/2}\log(T)\sqrt{\log(kT)}\\
&+O(\sigma_{ns}d^{3/2}\sqrt{\log(kT)}). 
\end{aligned}
\end{equation*}
Finally, substituting $\sigma_{ns}=O\left(\frac{B\sqrt{d}\ln(d/\delta)}{\epsilon}\right)$ and combining $R_b\leq 2k\beta L \cdot 2T=O(\log(T))$, we have the total expected regret under DP-PE with the shuffle model \textsc{Privatizer} is
\begin{equation*}
\begin{aligned}
\E[R(T)] &= O(\sqrt{dT\log (kT)} \\
&+O\left( \frac{Bd^{3/2}\log(T)\sqrt{\ln(1/\delta)\log(kT)}}{\epsilon} \right)\\
&+ O\left(\frac{Bd^{3/2}\sqrt{\ln(1/\delta)\log(kT)}}{\epsilon}\right).
\end{aligned}
\end{equation*}
\end{proof}

\section{Additional Numerical Results}\label{app:experiments}

In this appendix, we present additional numerical results that readers might be interested in. 

\textbf{How other parameters affect regret performance.} In this paper, we assume a large amount of population, that is, the number of users is infinite. We claim that the number of users has little impact on the regret as long as the population is large enough.  Considering readers might still be interested in how population affects the final regret, we conducted more simulations regarding this question and show the result in Figure~\ref{fig:users}. It is run on the non-private DPE algorithm to remove the possible impact of DP. The results show that the final regret of DPE is almost the same after the number of users $N$ is large enough, which validates our claim. 

\begin{figure}[t!] 
\centering
\includegraphics[width=0.5\linewidth]{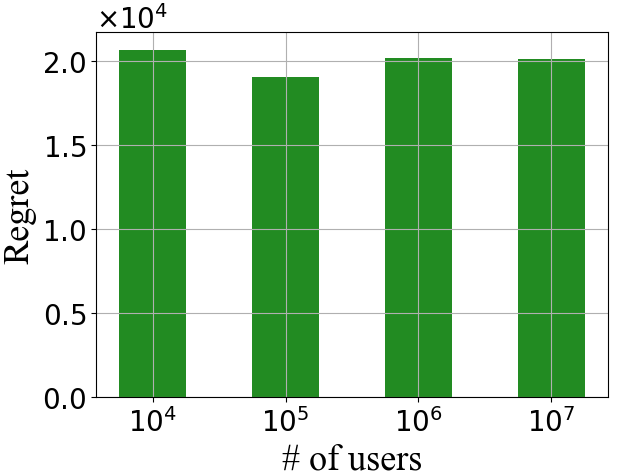}
\caption{Regret vs. $N$}
\label{fig:users}  
\vspace{-3mm}
\end{figure}

\begin{figure}[t!] 
\centering
\includegraphics[width=0.5\linewidth]{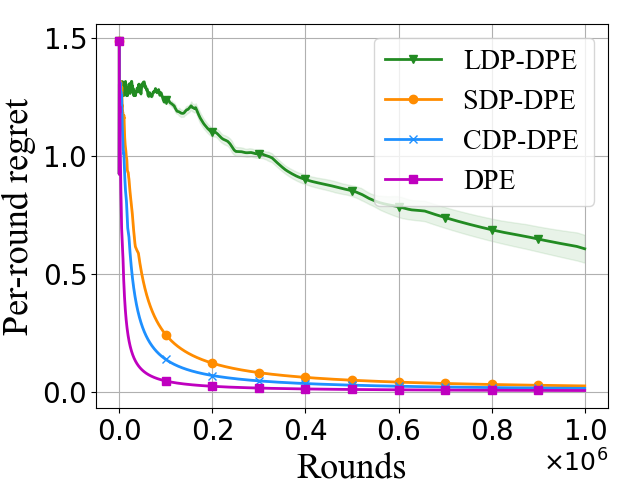}
\caption{Regret performance comparisons of DP-DPE with different DP models: Per-round regret vs. time with privacy parameters $\epsilon=10$ and $\delta=0.01$. }
\label{fig:dp-dpe}  
\end{figure}
\textbf{DP-DPE on real-world data set.}
In Section~\ref{sec:simulation}, we evaluate DP-DPE on synthetic data where the unknown global parameter is generated by uniform random sampling in the range $(-1,1)^d$. In this subsection, we present additional numerical results by evaluating DP-DPE on an unknown global reward function extracted from real-world data set of the Yahoo! Learning to Rank challenge  \cite{chapelle2011yahoo}, which has been used previously in some works regarding linear bandits (e.g., \cite{foster2018practical}, \cite{zanette2021design}). We present the regret performance of DP-DPE in different DP models in Figure~\ref{fig:dp-dpe}. From Figure~\ref{fig:dp-dpe}, we have a similar observation to the result in Figure~2(b): LDP-DPE while providing the strongest DP guarantee incurs the largest regret; CDP-DPE and SDP-DPE achieve a similar regret to that of the DPE algorithm (non-private version of DP-DPE) which coincides with our theoretical result of achieving privacy for ``free''. 

\end{document}